\newtheorem{lem}{Lemma} 
\newtheorem{Definition}{Definition}
\newtheorem{Theorem}{Theorem} 
\newtheorem{proposition}{Proposition}
\newtheorem{claim}{Claim}
\renewcommand{\paragraph}{%
  \@startsection{paragraph}{4}%
  {\z@}{0.05ex \@plus 1ex \@minus .2ex}{-1em}%
  {\normalfont\normalsize\bfseries}%
}
\newcommand{\cca}{\textsc{cca}}
\newcommand{\svcca}{\textsc{svcca}}
\newcommand{\pwcca}{\textsc{pwcca}}
\newcommand{\procrustes}{\textsc{procrustes}}
\newcommand{\cka}{\textsc{cka}}
\newcommand{\gulp}{\textsc{gulp}}
\title{\gulp: a prediction-based metric between representations}
\author{
Enric Boix-Adser\`{a} \\
MIT\\ %
\texttt{eboix@mit.edu} \\ 
\And
Hannah Lawrence \\
MIT\\ %
\texttt{hanlaw@mit.edu} \\ 
\And
George Stepaniants \\
MIT\\ %
\texttt{gstepan@mit.edu} \\
\And
Philippe Rigollet \\
MIT\\ %
\texttt{rigollet@math.mit.edu} \\

}
\begin{document}

\maketitle

\begin{abstract}
Comparing the representations learned by different neural networks has recently emerged as a key tool to understand various architectures and ultimately optimize them. In this work, we introduce \gulp{}, a family of   distance measures between representations that is explicitly motivated by  downstream predictive tasks. By construction, \gulp{} provides uniform control over the difference in prediction performance between two representations, with respect to regularized linear prediction tasks. Moreover, it satisfies several desirable structural properties, such as the triangle inequality and invariance under orthogonal transformations, and thus lends itself to data embedding and visualization. 
We extensively evaluate \gulp{} relative to other methods, and demonstrate that it correctly differentiates between architecture families, converges over the course of training, and captures generalization performance on downstream linear tasks.

\end{abstract}

\section{Introduction}

The spectacular success of deep neural networks (DNN) witnessed over the past decade has been largely attributed to their ability to generate good representations of the data~\cite{Bengiorep} . But \emph{what makes a representation good?} Answering this question is a necessary step towards a principled theory of DNN design. This fundamental question  calls for a \emph{metric over representations} as a basic primitive. Indeed, embedding representations into a metric space enables comparison, modifications and ultimately optimization of DNN architectures~\cite{luoarchiopt}; see Figure~\ref{fig:teaser}.

    In light of the practical impact of a meaningful metric over representations, this question has recently garnered significant attention, leading to a myriad of propositions %
such as \cca{}, \cka{}, and \procrustes{}. Their relative pros and cons are currently the subject of a lively debate~\cite{ding2021grounding,davari2022inadequacy} whose resolution calls for a theoretically grounded notion of metric. 

\begin{figure}[t] %
  \centering
    \includegraphics[width=\textwidth]{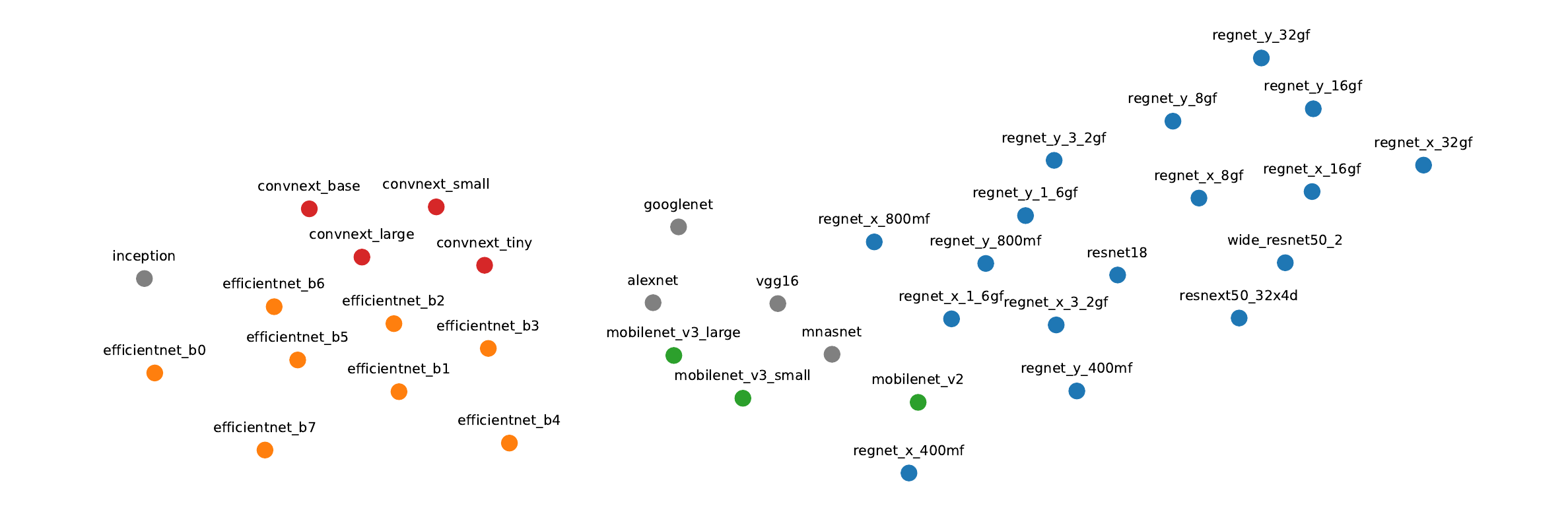}
        \caption{t-SNE embedding of various pretrained DNN representations of the ImageNet~\cite{krizhevsky2012imagenet} dataset with \gulp{} distance ($\lambda=10^{-2}$), colored by architecture type (gray denotes architectures that do not belong to a family). The embedding shows a good clustering of various architectures (ResNets, EfficientNets etc.), indicating that \gulp{} captures intrinsic aspects of representations shared within an architecture family.}
    \label{fig:teaser}
\end{figure}

\paragraph{Our contributions.} In this work, we %
define a new family of metrics\footnote{More specifically, we define pseudo-metrics rather than metrics. However, these can be readily turned into a metric using metric identification. This amounts to allowing equivalence classes of representations.}, called \gulp{}\footnote{GULP is Uniform Linear Probing.}, over the space of representations. 
Our construction rests on a functional notion of what makes two representations similar: namely, that two representations are similar if and only if they are equally useful as inputs to downstream, linear transfer learning tasks. This idea is partially inspired by feature-based transfer learning, in which simple models adapt pretrained representations, such as Inceptionv3 \cite{szegedy2016rethinking}, CLIP \cite{radford2021learning}, and ELMo \cite{peters2018deep}, for specific tasks %
\cite{RazavianSharif2014}; indeed, this is a key use for pretrained representations. Moreover, our application of \emph{linear} transfer learning is reminiscent of \emph{linear probes}, which were introduced by \cite{AlaBen18} as a tool to compare internal layers of a DNN in terms of prediction accuracy. Linear probes play a central role in the literature on hidden representations. They have been used not only to study the information captured by hidden representations \cite{ravichander2020probing}, but also to themselves define desiderata of distances between representations \cite{ding2021grounding}. However, previous applications of linear probing required hand-selecting the task on which prediction accuracy is measured, %
whereas our \gulp{} distance provides a uniform bound over \emph{all} norm-bounded tasks.

We establish various theoretical properties of the \gulp{} pseudo-metric, including the triangle inequality (Thm~\ref{thm:gulp_dist}), sample complexity (Thm~\ref{thm:plug-in-concentration}), and vanishing cases. In particular, we show that akin to the \procrustes{} pseudo-metric,  \gulp{} is invariant under orthogonal transformations (Thm~\ref{thm:invariance}) and vanishes precisely when the two representations are related by an orthogonal transformation (Thm~\ref{thm:gulp_dist}).

In turn, we use \gulp{} to produce low-dimensional embeddings of various DNNs that provide new insights on the relationship between various architectures (Figures~\ref{fig:teaser}, \ref{fig:mnist_width_depth_small}, and \ref{fig:imagenet_pretrained_dendogram}). Moreover, in Figure~\ref{fig:dists_during_training_one_plot}, we showcase a numerical experiment to demonstrate that the $\gulp{}$ distance between two independent networks decreases during training on the same dataset.

\medskip

\paragraph{Related work.}  This contribution is part of a growing body of work that aims at providing tools to understand and quantify the metric space of representations~\cite{RagGilYos17, MorRagBen18, KorNorLee19, AlaBen18, AroLiaMa17, ConLamRan18, LaaCot00, LenVed15, LiYosClu15, LiaLiLi19,miglani2019comparing, SmiTurHam17, WanHuGu18, ding2021grounding, davari2022inadequacy,cui2022deconfounded}. Several of these measures, such as \svcca{}~\cite{RagGilYos17} and \pwcca{}~\cite{MorRagBen18}, are based on a classical canonical correlation analysis (\cca{}) from multivariate analysis~\cite{And84}. More recently, centered kernel alignment \cka{}~\cite{cristianini2001kernel, cortes2012algorithms, KorNorLee19,davari2022inadequacy} has emerged as a popular measure; see  Section~\ref{sec:pb_statement} for more details on these methods.
The orthogonal procrustes metric (\procrustes{}) is a classical tool of shape analysis~\cite{DryMar16} to compute the distance between labelled point clouds. Though not as conspicuous as \cka{}-based methods in the context of DNN representations, it was recently presented under a favorable light in~\cite{ding2021grounding}. 

Various desirable properties of a similarity measure between representations have been put forward. These include structural  properties such as invariance or equivariance~\cite{LaaCot00, KorNorLee19}, as well as sanity checks such as specificity against random initialization~\cite{ding2021grounding}, for example. Such desiderata can serve as diagnostics for existing similarity measures, but fall short of providing concrete design guidelines.

\paragraph{Outline} The rest of the paper proceeds as follows. Section \ref{sec:pb_statement} lays out the derivation of \gulp{}, as well as important theoretical properties: conditions under which it is zero, and limiting cases in terms of the regularization parameter $\lambda$, demonstrating that it interpolates between $\cca{}$ and a version of $\cka{}$. Section \ref{sec:plug-in} establishes concentration results for the finite-sample version, justifying its use in practice. In Section \ref{sec:experiments} we validate \gulp{} through extensive experiments\footnote{Our code is available at \href{https://github.com/sgstepaniants/GULP}{https://github.com/sgstepaniants/GULP}.}. Finally, we conclude in Section \ref{sec:conclusion}.

\section{The $\gulp{}$ distance} \label{sec:pb_statement} 
As stated in the introduction, the goal of this paper is to develop a pseudo-metric over the space of representations of a given dataset. Unlike previous approaches, which work with finite datasets, we take a statistical perspective and formulate the population version of our problem. We defer statistical questions arising from finite sample size to Section~\ref{sec:plug-in}. 

Let $X \in \R^d$ be a random input with distribution $P_X$ and let $f: \R^d \to \R^k$ denote a \emph{representation map}, such as a trained DNN.
The random vector $f(X)\in \R^k$ is the \emph{representation of $X$ by $f$}. %
We assume throughout that a representation map is centered and normalized, so that
$\E[f(X)]=0$ and $\E\|f(X)\|^2=1$. 
In particular, this normalization allows us to identify (unnormalized) %
representation maps $\phi, \psi$ that are related by  $\psi(x)=a\phi(x) + b$, $P_X$-a.s. for $a \in \R$ and $b \in \R^d$, down to a single representation of $X$ (after normalizing), which is a well-known requirement for distances between representations~\cite[Sec. 2.3]{KorNorLee19}.

We are now in a position to define the \gulp{} distance between representations; the terminology ``distance" is justified in Theorem~\ref{thm:gulp_dist}. To that end, let  $\phi:\R^d \to \R^k$ and $\psi: \R^d \to \R^\ell$ be two representation maps, where $\ell$ may differ from $k$. Let $(X,Y) \in \R^d\times\R$ be a random pair and let $\eta(x)=\E[Y|X=x]$ denote the regression function of $Y$ onto $X$. Moreover, for any $\lambda>0$, let $\beta_\lambda$ denote the population ridge regression solution given by
\begin{align*}
    \beta_{\lambda} = \argmin_\beta\E[(\beta^\top \phi(X) - Y)^2] + \lambda \|\beta\|^2
\end{align*}
and similarly for $\gamma_{\lambda}$ with respect to $\psi(\cdot)$. Since we use squared error, these only depend the distribution of $Y$ through the regression function $\eta$.

\begin{Definition}
Fix $\lambda>0$. The \gulp{} distance between representations $\phi(X)$ and $\psi(X)$ is given by
\begin{align*}
    d_{\lambda}(\phi, \psi) := \sup_{\eta}\Big(\E(\beta_\lambda^\top \phi(X)-\gamma_\lambda^\top \psi(X))^2\Big)^\frac{1}{2}\,,
\end{align*}
where the supremum is taken over all regression functions $\eta$ such that $\|\eta\|_{L^2(P_X)} \leq 1$. 
\end{Definition}
The $\gulp{}$ distance measures the discrepancy between the prediction of an optimal ridge regression estimator based on $\phi$, and its counterpart based on $\psi$, uniformly over all regression tasks. While this notion of distance is intuitive and motivated by a clear regression task, it is unclear how to compute it \emph{a priori}. The next proposition provides an equivalent formulation of \gulp{}, which is amenable to accurate and efficient estimation; see Section~\ref{sec:plug-in}. %
It is based on the following covariance matrices:
\begin{equation}
    \label{eq:covs}
    \Sigma_\phi=\cov(\phi(X))=\E[\phi(X)\phi(X)^\top]\, \qquad  \Sigma_\psi=\cov(\psi(X))=\E[\psi(X)\psi(X)^\top]
\end{equation}
We implicitly used the centering assumption in the above definition, and the normalization condition implies that the covariance matrices have unit trace. Throughout, we assume these matrices are invertible, which is without loss of generality by projecting onto the image of the representation map. We also define the regularized inverses: 
$$
\Sigma_\phi^{-\lambda}:=(\Sigma_\phi + \lambda I_k)^{-1}\,, \qquad \Sigma_\psi^{-\lambda}:=(\Sigma_\psi + \lambda I_\ell)^{-1}
$$
as well as the cross-covariance matrices $\Sigma_{\phi\psi}$ and $\Sigma_{\psi\phi}$ as follows:
\begin{equation}
    \label{eq:cross_cov}
\Sigma_{\phi\psi}=\E[\phi(X)\psi(X)^\top]= \Sigma_{\psi\phi}^\top\,.
\end{equation}

\begin{proposition}\label{prop:closed-form}
Fix $\lambda\ge 0$. The \gulp{} distance between representations $\phi(X)$ and $\psi(X)$ satisfies
\begin{align}
    \boxed{\begin{aligned}
    d^2_\lambda(\phi, \psi)= \tr(\Sigma_{\phi}^{-\lambda} \Sigma_{\phi} \Sigma_{\phi}^{-\lambda} \Sigma_{\phi})
+ \tr(\Sigma_{\psi}^{-\lambda} \Sigma_{\psi} \Sigma_{\psi}^{-\lambda} \Sigma_{\psi}) - 2\tr(\Sigma_{\phi}^{-\lambda} \Sigma_{\phi \psi}\Sigma_{\psi}^{-\lambda} \Sigma_{\phi \psi}^\top)        \label{gulp2}
 \end{aligned}}
 \end{align}
\end{proposition}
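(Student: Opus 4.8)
The plan is to (i) solve both ridge problems in closed form, (ii) substitute into the per-task squared prediction gap to expose a quadratic functional of the task $\eta$, and (iii) evaluate the scalar that the \gulp{} distance extracts from that functional; step (iii) is where the real work lies. For (i), the population normal equations give $(\Sigma_\phi + \lambda I_k)\beta_\lambda = \E[\phi(X)Y]$, and since the squared loss depends on $Y$ only through $\eta$ we have $\E[\phi(X)Y] = \E[\phi(X)\eta(X)] =: v_\phi$, so $\beta_\lambda = \Sigma_\phi^{-\lambda} v_\phi$, and similarly $\gamma_\lambda = \Sigma_\psi^{-\lambda} v_\psi$ with $v_\psi := \E[\psi(X)\eta(X)]$. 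In particular each ridge estimator — and hence the prediction gap — is linear in $\eta$ and depends on it only through the pair $(v_\phi, v_\psi)$.

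For (ii), expanding the square and using the definitions of $\Sigma_\phi, \Sigma_\psi$ and $\Sigma_{\phi\psi}$,
\begin{align*}
\E\bigl(\beta_\lambda^\top\phi(X) - \gamma_\lambda^\top\psi(X)\bigr)^2
&= \beta_\lambda^\top\Sigma_\phi\beta_\lambda - 2\,\beta_\lambda^\top\Sigma_{\phi\psi}\gamma_\lambda + \gamma_\lambda^\top\Sigma_\psi\gamma_\lambda \\
&= v_\phi^\top\Sigma_\phi^{-\lambda}\Sigma_\phi\Sigma_\phi^{-\lambda}v_\phi - 2\,v_\phi^\top\Sigma_\phi^{-\lambda}\Sigma_{\phi\psi}\Sigma_\psi^{-\lambda}v_\psi + v_\psi^\top\Sigma_\psi^{-\lambda}\Sigma_\psi\Sigma_\psi^{-\lambda}v_\psi .
\end{align*}
This is a nonnegative quadratic functional of $\eta$, and it is supported on the finite-dimensional subspace $V = \mathrm{span}\{\phi_1,\dots,\phi_k,\psi_1,\dots,\psi_\ell\} \subseteq L^2(P_X)$, so only the orthogonal projection $\Pi_V\eta$ matters. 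All the needed data is carried by the joint covariance of $(\phi(X),\psi(X))$ — the block matrix $\Sigma$ with diagonal blocks $\Sigma_\phi, \Sigma_\psi$ and off-diagonal block $\Sigma_{\phi\psi}$ — which is invertible after the standing reduction to the image of the representations.

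For (iii), I would reduce to $V$ and pass to finite dimensions: writing $\Pi_V\eta$ in the feature coordinates one gets $(v_\phi,v_\psi) = \Sigma c$ and $\|\eta\|_{L^2}^2 = c^\top\Sigma c + \|\eta - \Pi_V\eta\|^2$ for the coefficient vector $c$, turning the problem into a finite-dimensional linear-algebra computation on $\R^{k+\ell}$. The mechanism that produces the trace formula is that, over the unit task set, the relevant second-order statistics of $(v_\phi,v_\psi)$ are exactly $\Sigma_\phi$, $\Sigma_\psi$ and $\Sigma_{\phi\psi}$ (for instance $\E_\eta[v_\phi v_\phi^\top] = \Sigma_\phi$, which merely restates $\E[\phi_a(X)\phi_b(X)] = (\Sigma_\phi)_{ab}$), so each quadratic term $v^\top M v$ above turns into a trace; collecting these and using cyclicity of the trace together with $\Sigma_{\psi\phi} = \Sigma_{\phi\psi}^\top$ — which merges the two cross contributions into $-2\tr(\Sigma_\phi^{-\lambda}\Sigma_{\phi\psi}\Sigma_\psi^{-\lambda}\Sigma_{\phi\psi}^\top)$ and puts the self terms into the forms $\tr(\Sigma_\phi^{-\lambda}\Sigma_\phi\Sigma_\phi^{-\lambda}\Sigma_\phi)$ and $\tr(\Sigma_\psi^{-\lambda}\Sigma_\psi\Sigma_\psi^{-\lambda}\Sigma_\psi)$ — yields the boxed identity. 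The main obstacles I expect: making the reduction to the finite-dimensional $V$ fully rigorous since $L^2(P_X)$ is infinite-dimensional; the block bookkeeping when $k \ne \ell$; and treating $\lambda = 0$ separately, where $\Sigma_\phi^{-\lambda}$ is read as the genuine inverse $\Sigma_\phi^{-1}$ (valid by invertibility of $\Sigma_\phi$ and $\Sigma_\psi$) and the $\lambda>0$ statement of the Definition is extended by continuity.
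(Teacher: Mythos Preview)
Your steps (i) and (ii) are correct and align with the paper. The gap is in step (iii): the \gulp{} distance is a \emph{supremum} over $\eta$ in the unit ball of $L^2(P_X)$, but the ``mechanism'' you invoke --- replacing each $v^\top M v$ by a trace via identities like $\E_\eta[v_\phi v_\phi^\top] = \Sigma_\phi$ --- is an averaging (or sum-over-orthonormal-basis) operation on tasks, not a supremum. A supremum of a quadratic form over a unit ball produces an operator norm, i.e.\ a largest eigenvalue, not a trace; your reduction to the finite-dimensional subspace $V$ with constraint $c^\top\Sigma c\le 1$ is sound, but it leads to a generalized eigenvalue problem, and the term-by-term substitution you sketch does not compute that supremum. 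So as written, step (iii) does not establish the boxed identity from the definition.

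The paper's route for (iii) is different and avoids this issue entirely. Rather than expanding in $(v_\phi,v_\psi)$ and then aggregating term by term, it keeps the prediction gap at a test point $X$ as a \emph{single} inner product in $L^2(P_X)$,
\[
\beta_\lambda^\top\phi(X) - \gamma_\lambda^\top\psi(X)
\;=\;
\bigl\langle \eta,\; \phi(\cdot)^\top\Sigma_\phi^{-\lambda}\phi(X) - \psi(\cdot)^\top\Sigma_\psi^{-\lambda}\psi(X)\bigr\rangle_{L^2(P_X)},
\]
and passes from the supremum over $\|\eta\|\le 1$ to the squared $L^2(P_X)$-norm of the right-hand side, obtaining (this is Lemma~\ref{lem:gulp3})
\[
d_\lambda^2(\phi,\psi)
\;=\;
\E_{X,X'}\bigl(\phi(X)^\top\Sigma_\phi^{-\lambda}\phi(X') - \psi(X)^\top\Sigma_\psi^{-\lambda}\psi(X')\bigr)^2,
\]
with $X'$ an independent copy of $X$. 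The trace formula then falls out immediately by expanding the square and using linearity and cyclicity of the trace. This also bypasses all three obstacles you flagged: no finite-dimensional coordinate bookkeeping on $V$, no special handling of $k\ne\ell$, and no separate continuity argument at $\lambda=0$.
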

\begin{proof}
See Appendix~\ref{app:closed-form}.
\end{proof}

\subsection{Structural properties}
In this section, we show that \gulp{} is invariant under orthogonal transformations and that it is a valid metric on the space of representations. We begin by establishing a third characterization of \gulp{} that is useful for the purposes of this section; the proof can be found in Appendix~\ref{app:closed-form}.

\begin{lem}
\label{lem:gulp3}
Fix $\lambda\ge 0$. The \gulp{} distance $d_\lambda(\phi, \psi)$ between the representations $\phi(X)$ and $\psi(X)$ satisfies
$$
d_\lambda^2(\phi, \psi)=\E (\phi(X)^\top \Sigma_\phi^{-\lambda}\phi(X') - \psi(X)^\top  \Sigma_\psi^{-\lambda}\psi(X'))^2\,,
$$
where $X'$ is an independent copy of $X$.
\end{lem}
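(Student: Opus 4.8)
The plan is to start from the right-hand side, expand the square, and reduce it to the boxed closed form of Proposition~\ref{prop:closed-form}. Expanding, one gets three scalar expectations: $\E[(\phi(X)^\top\Sigma_\phi^{-\lambda}\phi(X'))^2]$, $\E[(\psi(X)^\top\Sigma_\psi^{-\lambda}\psi(X'))^2]$, and the cross term $-2\,\E[(\phi(X)^\top\Sigma_\phi^{-\lambda}\phi(X'))(\psi(X)^\top\Sigma_\psi^{-\lambda}\psi(X'))]$. Since each is a scalar, I insert a trace and use its cyclic invariance to rewrite, e.g., $(\phi(X)^\top\Sigma_\phi^{-\lambda}\phi(X'))^2 = \tr\big(\Sigma_\phi^{-\lambda}\,\phi(X')\phi(X')^\top\,\Sigma_\phi^{-\lambda}\,\phi(X)\phi(X)^\top\big)$, turning the products of inner products into traces of products of the rank-one matrices $\phi(X)\phi(X)^\top$, $\phi(X')\phi(X')^\top$, etc.

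The key step is then to exploit that $X'$ is an independent copy of $X$: the expectation of the product factors, and pushing the expectation inside the trace replaces $\phi(X')\phi(X')^\top$ by $\Sigma_\phi$, $\psi(X')\psi(X')^\top$ by $\Sigma_\psi$, and $\phi(X')\psi(X')^\top$ by $\Sigma_{\phi\psi}$ (by \eqref{eq:covs}--\eqref{eq:cross_cov}), and likewise upon taking the expectation over $X$. This sends the first term to $\tr(\Sigma_\phi^{-\lambda}\Sigma_\phi\Sigma_\phi^{-\lambda}\Sigma_\phi)$, the second to $\tr(\Sigma_\psi^{-\lambda}\Sigma_\psi\Sigma_\psi^{-\lambda}\Sigma_\psi)$, and the cross term to $-2\,\tr(\Sigma_\phi^{-\lambda}\Sigma_{\phi\psi}\Sigma_\psi^{-\lambda}\Sigma_{\phi\psi}^\top)$, which is exactly \eqref{gulp2}.

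The only point requiring a little care is the bookkeeping in the cross term: after inserting the trace, one factor is $\psi(X')^\top\Sigma_\psi^{-\lambda}\psi(X)$, which must be transposed (using symmetry of $\Sigma_\psi^{-\lambda}$) to $\psi(X)^\top\Sigma_\psi^{-\lambda}\psi(X')$ so that the $X'$-dependent and $X$-dependent factors separate cleanly before taking expectations, and one uses $\Sigma_{\psi\phi}=\Sigma_{\phi\psi}^\top$ from \eqref{eq:cross_cov} to identify the result. There is no genuine obstacle here; the lemma is essentially a reorganization of the same quadratic form already computed in Proposition~\ref{prop:closed-form}, so the argument is a short computation rather than a conceptual hurdle.
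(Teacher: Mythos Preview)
Your computation is correct, but the argument is circular in the context of the paper. You reduce the right-hand side of the lemma to the closed form \eqref{gulp2} of Proposition~\ref{prop:closed-form} and then invoke that proposition. However, in the paper the logical order is the reverse: Proposition~\ref{prop:closed-form} is proved \emph{from} Lemma~\ref{lem:gulp3} (see Appendix~\ref{app:closed-form}), by exactly the expand--trace--independence computation you describe. So what you have written is essentially the paper's proof of Proposition~\ref{prop:closed-form} run backwards, which establishes that the two alternative characterizations agree with each other, but does not connect either of them to the actual definition of $d_\lambda$.

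The missing ingredient is the supremum over regression functions $\eta$. The paper's proof of Lemma~\ref{lem:gulp3} starts from the definition $d_\lambda^2(\phi,\psi)=\sup_{\|\eta\|_{L^2(P_X)}\le 1}\E(\beta_\lambda^\top\phi(X)-\gamma_\lambda^\top\psi(X))^2$, writes $\beta_\lambda=\Sigma_\phi^{-\lambda}\int\eta(x)\phi(x)\,dP_X(x)$ (and similarly for $\gamma_\lambda$), and recognizes that for fixed $X$,
\[
\beta_\lambda^\top\phi(X)-\gamma_\lambda^\top\psi(X)=\big\langle \eta,\ \phi(X)^\top\Sigma_\phi^{-\lambda}\phi(\cdot)-\psi(X)^\top\Sigma_\psi^{-\lambda}\psi(\cdot)\big\rangle_{L^2(P_X)}.
\]
Taking the supremum over unit-norm $\eta$ then replaces the inner product by the $L^2(P_X)$ norm, and rewriting that norm as an expectation over an independent copy $X'$ yields the statement. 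This step---passing from the variational definition to the explicit kernel form---is the content of the lemma, and your proposal bypasses it entirely.
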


We are now in a position to state our main structural results. We begin with a key invariance result.

\begin{Theorem}
\label{thm:invariance}
Fix $\lambda\ge 0$. The \gulp{} distance $d_\lambda(\phi, \psi)$ between the representations $\phi(X) \in \R^k$ and $\psi(X)\in \R^\ell$ is invariant under orthogonal transformations: for any orthogonal transformations $U: \R^{k}\to  \R^{k}$ and $V :\R^\ell \to \R^\ell$, it holds
$$
d_\lambda(U\circ\phi, V\circ\psi)=d_\lambda(\phi, \psi)
$$
\end{Theorem}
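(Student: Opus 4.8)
The plan is to use the formula from Proposition~\ref{prop:closed-form} (or equivalently Lemma~\ref{lem:gulp3}), since both are expressed entirely in terms of the covariance matrices $\Sigma_\phi$, $\Sigma_\psi$, and $\Sigma_{\phi\psi}$, and track how these transform under $\phi \mapsto U\circ\phi$ and $\psi\mapsto V\circ\psi$. The key observation is that replacing $\phi$ by $U\phi$ replaces $\Sigma_\phi$ by $U\Sigma_\phi U^\top$, $\Sigma_{\phi\psi}$ by $U\Sigma_{\phi\psi}V^\top$, and similarly $\Sigma_\psi$ by $V\Sigma_\psi V^\top$; moreover, because $U$ is orthogonal, $(U\Sigma_\phi U^\top + \lambda I_k)^{-1} = U(\Sigma_\phi + \lambda I_k)^{-1}U^\top$, i.e. the regularized inverse transforms covariantly as $\Sigma_\phi^{-\lambda}\mapsto U\Sigma_\phi^{-\lambda}U^\top$. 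This last point is where orthogonality (as opposed to an arbitrary invertible map) is essential, and it is also the only slightly nontrivial algebraic step.

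First I would record these transformation rules as a short computation (noting that $U\circ\phi$ is still centered and, since $\|U\phi(X)\| = \|\phi(X)\|$, still satisfies the normalization $\E\|U\phi(X)\|^2 = 1$, so it is a legitimate representation map). Then I would substitute into the three trace terms of \eqref{gulp2}. For the first term, $\tr(\Sigma_\phi^{-\lambda}\Sigma_\phi\Sigma_\phi^{-\lambda}\Sigma_\phi)$ becomes $\tr(U\Sigma_\phi^{-\lambda}U^\top U\Sigma_\phi U^\top U\Sigma_\phi^{-\lambda}U^\top U\Sigma_\phi U^\top)$; all the interior $U^\top U = I_k$ factors cancel, and the outer $U$ and $U^\top$ cancel by cyclicity of the trace, leaving the original term. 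The second term is identical with $V$ in place of $U$. For the cross term $\tr(\Sigma_\phi^{-\lambda}\Sigma_{\phi\psi}\Sigma_\psi^{-\lambda}\Sigma_{\phi\psi}^\top)$, the substitution gives $\tr(U\Sigma_\phi^{-\lambda}U^\top\, U\Sigma_{\phi\psi}V^\top\, V\Sigma_\psi^{-\lambda}V^\top\, V\Sigma_{\phi\psi}^\top U^\top)$; again the interior $U^\top U$, $V^\top V$, $V^\top V$ collapse and the outer $U,U^\top$ cancel cyclically. Hence every term in $d_\lambda^2$ is unchanged, proving $d_\lambda(U\circ\phi, V\circ\psi) = d_\lambda(\phi,\psi)$.

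Alternatively, and perhaps more cleanly, one can argue directly from Lemma~\ref{lem:gulp3}: the quadratic form $\phi(X)^\top\Sigma_\phi^{-\lambda}\phi(X')$ becomes $(U\phi(X))^\top(U\Sigma_\phi^{-\lambda}U^\top)(U\phi(X')) = \phi(X)^\top U^\top U\Sigma_\phi^{-\lambda}U^\top U\phi(X') = \phi(X)^\top\Sigma_\phi^{-\lambda}\phi(X')$, and likewise the $\psi$ term is pointwise unchanged, so the random variable inside the expectation in Lemma~\ref{lem:gulp3} is literally the same, hence so is its second moment. I would present this as the main proof and relegate the trace computation to a remark if space permits.

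I do not anticipate a genuine obstacle here; the only thing to be careful about is the identity $(U\Sigma_\phi U^\top + \lambda I)^{-1} = U(\Sigma_\phi+\lambda I)^{-1}U^\top$, which follows from $\lambda I = U(\lambda I)U^\top$ and the fact that conjugation by an invertible matrix commutes with taking inverses — and equally to remember to check that the normalization/centering conditions are preserved so that the transformed maps are valid inputs to the definition. Both points are routine, so the proof is essentially a two-line verification once the setup is in place.
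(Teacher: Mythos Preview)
Your proposal is correct and essentially identical to the paper's proof: the paper also establishes $\Sigma_{W\circ f}^{-\lambda}=W\Sigma_{f}^{-\lambda}W^\top$ from $\lambda I = W(\lambda I)W^\top$ and then plugs directly into the characterization of Lemma~\ref{lem:gulp3}, cancelling the inner $U^\top U$ and $V^\top V$ factors exactly as you describe. Your additional remark that centering and the normalization $\E\|U\phi(X)\|^2=1$ are preserved is a nice touch the paper omits.
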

\vspace{-2em}
\begin{proof}
We slightly abuse notation by identifying any orthogonal transformation $W$ to a matrix $W$ such that $W(x)=W\cdot x$. Note that for any representation map, we have $\Sigma_{W\circ f}=W\Sigma_f W^{\top}$ and
$$
\Sigma_{W\circ f}^{-\lambda}=(W\Sigma_{f}W^\top + \lambda WW^\top)^{-1}=W(\Sigma_f^\top + \lambda I)W^\top=W\Sigma_{f}^{-\lambda}W^\top\,.
$$
Hence, using Lemma~\ref{lem:gulp3}, we get that
\begin{align*}
    d_\lambda^2(U\circ\phi, V\circ\psi)&=\E (\phi(X)^\top U^\top U\Sigma_\phi^{-\lambda}U^\top U\phi(X') - \psi(X)^\top  V^\top V\Sigma_\psi^{-\lambda}V^\top V\psi(X'))^2\\
    &=\E (\phi(X)^\top \Sigma_\phi^{-\lambda}\phi(X') - \psi(X)^\top \Sigma_\psi^{-\lambda}\psi(X'))^2=d_\lambda^2 (\phi, \psi)\,,
\end{align*}
where we used the fact that $U^\top U=I_k$ and $V^\top V=I_\ell$.
\end{proof}

Next, we show that \gulp{} satisfies the axioms of a metric.

\begin{Theorem}
\label{thm:gulp_dist}
Fix $\lambda> 0$. The \gulp{} distance $d_\lambda(\phi, \psi)$ satisfies the axioms of a pseudometric, namely for all representation maps $\phi, \psi, \varphi$, it holds
\begin{align*}
    d_\lambda(\phi, \phi)=0, \quad\ \  d_\lambda(\phi, \psi)=d_\lambda(\psi,\phi), \quad \mbox{and} \quad d_\lambda(\phi, \psi)\le d_\lambda(\phi, \varphi) + d_\lambda (\varphi, \psi)
\end{align*}
Moreover, $d_{\lambda}(\phi, \psi)=0$ if and only if $k=\ell$ and there exists an orthogonal transformation $U$ such that $\phi(X)=U\psi(X)$ a.s.
\end{Theorem}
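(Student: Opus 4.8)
The first two pseudometric axioms are immediate: $d_\lambda(\phi,\psi)=d_\lambda(\psi,\phi)$ since the squared integrand in the definition is symmetric in $\phi$ and $\psi$, and $d_\lambda(\phi,\phi)=0$ since then $\beta_\lambda=\gamma_\lambda$ for every $\eta$. For the triangle inequality the plan is to use Lemma~\ref{lem:gulp3}: writing $g_f(x,x'):=f(x)^\top\Sigma_f^{-\lambda}f(x')$ for a representation map $f$, Lemma~\ref{lem:gulp3} states exactly that $d_\lambda(\phi,\psi)=\|g_\phi-g_\psi\|_{L^2(P_X\otimes P_X)}$, and each $g_f$ indeed lies in $L^2(P_X\otimes P_X)$ because $\E[g_f(X,X')^2]=\tr(\Sigma_f^{-\lambda}\Sigma_f\Sigma_f^{-\lambda}\Sigma_f)<\infty$. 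The triangle inequality $d_\lambda(\phi,\psi)\le d_\lambda(\phi,\varphi)+d_\lambda(\varphi,\psi)$ is then inherited from the triangle inequality for the $L^2$ norm applied to $g_\phi-g_\psi=(g_\phi-g_\varphi)+(g_\varphi-g_\psi)$.

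For the vanishing characterization, the easy direction is quick: if $k=\ell$ and $\phi(X)=U\psi(X)$ a.s.\ with $U$ orthogonal, then $\phi$ and $U\circ\psi$ induce the same representation, so by Theorem~\ref{thm:invariance} (taking the second orthogonal transformation to be the identity) and the first axiom, $d_\lambda(\phi,\psi)=d_\lambda(U\circ\psi,\psi)=d_\lambda(\psi,\psi)=0$. Alternatively, $g_\phi=g_\psi$ pointwise by the identity $\Sigma_{U\circ\psi}^{-\lambda}=U\Sigma_\psi^{-\lambda}U^\top$ already used in the proof of Theorem~\ref{thm:invariance}.

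The substantive direction is the converse. Assume $d_\lambda(\phi,\psi)=0$; by Lemma~\ref{lem:gulp3} this is the a.e.\ identity $\phi(X)^\top\Sigma_\phi^{-\lambda}\phi(X')=\psi(X)^\top\Sigma_\psi^{-\lambda}\psi(X')$ over $(X,X')$, with $X'$ an independent copy of $X$. \textbf{Step 1 (dimensions agree, and $\psi$ is a linear image of $\phi$).} For a.e.\ $x$, the function $x'\mapsto\phi(x)^\top\Sigma_\phi^{-\lambda}\phi(x')$, which lies in $W_\phi:=\mathrm{span}\{\phi_1,\dots,\phi_k\}\subseteq L^2(P_X)$ (the span of the coordinate functions of $\phi$), equals the function $x'\mapsto\psi(x)^\top\Sigma_\psi^{-\lambda}\psi(x')\in W_\psi:=\mathrm{span}\{\psi_1,\dots,\psi_\ell\}$. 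Since $\Sigma_\phi\succ 0$, the coefficient vectors $\{\Sigma_\phi^{-\lambda}\phi(x)\}_x$ span $\R^k$ (a vector $w$ orthogonal to all of them satisfies $(\Sigma_\phi^{-\lambda}w)^\top\phi(X)=0$ a.s., hence $\Sigma_\phi^{-\lambda}w=0$ by positivity of $\Sigma_\phi$, hence $w=0$), so the functions above span all of $W_\phi$; therefore $W_\phi\subseteq W_\psi$, and symmetrically $W_\psi\subseteq W_\phi$. Thus $W_\phi=W_\psi=:V$. As $\{\phi_i\}$ and $\{\psi_j\}$ are each linearly independent in $L^2(P_X)$ (by invertibility of $\Sigma_\phi$ and $\Sigma_\psi$) and both span $V$, we get $k=\dim V=\ell$ and an invertible $M\in\R^{k\times k}$ with $\psi(X)=M\phi(X)$ a.s. \textbf{Step 2 ($M$ is orthogonal).} Taking covariances gives $\Sigma_\psi=M\Sigma_\phi M^\top$. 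Substituting $\psi=M\phi$ into the kernel identity gives $\phi(X)^\top B\phi(X')=0$ a.s.\ with $B:=M^\top\Sigma_\psi^{-\lambda}M-\Sigma_\phi^{-\lambda}$ symmetric; squaring and taking expectations yields $\tr\big((\Sigma_\phi^{1/2}B\Sigma_\phi^{1/2})^2\big)=0$, and since $\Sigma_\phi^{1/2}B\Sigma_\phi^{1/2}$ is symmetric this forces $B=0$, i.e.\ $M^\top(\Sigma_\psi+\lambda I)^{-1}M=(\Sigma_\phi+\lambda I)^{-1}$, equivalently $M^{-1}(\Sigma_\psi+\lambda I)M^{-\top}=\Sigma_\phi+\lambda I$. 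Combining with $M^{-1}\Sigma_\psi M^{-\top}=\Sigma_\phi$ leaves $\lambda M^{-1}M^{-\top}=\lambda I$, so $M^{-1}M^{-\top}=I$ because $\lambda>0$; hence $M$ is orthogonal. Setting $U:=M^\top$ gives $\phi(X)=U\psi(X)$ a.s.\ with $U$ orthogonal.

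I expect the only real obstacle to be Step 1: extracting the dimension equality $k=\ell$ and the invertible linear relation $\psi=M\phi$ from the a.e.\ coincidence of the two symmetric kernels, which requires careful bookkeeping of the ``a.s.'' qualifier and the use of $\Sigma_\phi,\Sigma_\psi\succ 0$; the rest is routine linear algebra. Note that Step 2 is exactly where $\lambda>0$ is essential: when $\lambda=0$, the kernels $\phi^\top\Sigma_\phi^{-1}\phi'$ and $\psi^\top\Sigma_\psi^{-1}\psi'$ coinciding only forces $\psi=M\phi$ for an invertible $M$ that need not be orthogonal, so $d_0$ does not characterize orthogonal equivalence.
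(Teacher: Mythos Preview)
Your proof is correct and takes a genuinely different route from the paper's for the hard direction of the vanishing characterization. The paper first passes to the ``whitened'' maps $\tilde\phi=(\Sigma_\phi+\lambda I)^{-1/2}\phi$ and $\tilde\psi=(\Sigma_\psi+\lambda I)^{-1/2}\psi$, proves via a Gram-matrix argument on an i.i.d.\ sequence (plus a measure-theoretic lemma that a certain random span is deterministic) that $\tilde\phi(X)=U\tilde\psi(X)$ a.s.\ for some orthogonal $U$, and then solves a homogeneous Sylvester equation in the eigenbases of $\Sigma_\phi,\Sigma_\psi$ to upgrade this to $\phi(X)=U\psi(X)$.

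Your argument bypasses both the Gram-matrix construction and the Sylvester analysis. Step~1, identifying $W_\phi=W_\psi$ inside $L^2(P_X)$ and reading off an invertible $M$ with $\psi=M\phi$, is a cleaner way to extract $k=\ell$ and the linear relation; the only care needed (which you flag) is that the coefficient vectors $\Sigma_\phi^{-\lambda}\phi(x)$ still span $\R^k$ when $x$ ranges only over the full-measure set where the kernel identity holds, and your orthogonality argument for this is fine. Step~2 is particularly nice: the subtraction $M^{-1}(\Sigma_\psi+\lambda I)M^{-\top}-M^{-1}\Sigma_\psi M^{-\top}=\lambda M^{-1}M^{-\top}=\lambda I$ isolates exactly where $\lambda>0$ enters, more transparently than the paper's Sylvester route. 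The paper's approach has the modest advantage that its first step (orthogonal equivalence of the whitened maps) already yields the $\lambda=0$ characterization of Lemma~\ref{lem:when-cca-is-zero} with no extra work; in your approach the $\lambda=0$ case would stop at Step~1, which is also fine but structured differently.
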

\vspace{-2em}
\begin{proof}
Lemma~\ref{lem:gulp3}  provides  an isometric embedding of representations $f \mapsto f(X)\Sigma_f^{-\lambda} f(X')$ into the Hilbert space $L^2(P_X^{\otimes 2})$. It readily yields that $d_\lambda$ is a pseudometric. It remains to identify for which $\phi, \psi$ it holds that $d_\lambda(\phi, \psi)=0$.

The ``easy'' direction follows from the invariance property of Theorem~\ref{thm:invariance}: if $\phi$ and $\psi$ satisfy $\phi(X)=U\psi(X)$ almost surely, then $d_\lambda(\phi, \psi)=d_\lambda(U\psi, \psi)=0$. We sketch the proof of the other direction, and defer the full proof to Appendix~\ref{app:distance-proofs}. Define $\tilde{\phi} = (\Sigma_{\phi} + \lambda I)^{-1/2} \phi$ and $\tilde{\psi} = (\Sigma_{\psi} + \lambda I)^{-1/2} \psi$. By Lemma~\ref{lem:gulp3}, the condition that $d_{\lambda}(\phi,\psi) = 0$ is equivalent to $\tilde{\phi}(X)^{\top} \tilde{\phi}(X') = \tilde{\psi}(X)^{\top} \tilde{\psi}(X)$ almost surely over $X,X'$. 
So if $d_{\lambda}(\phi,\psi) = 0$, then we can leverage a classical fact that the Gram matrix of a set of vectors determines the vectors up to an isometry \cite{horn2012matrix}, to prove that there is an orthogonal transformation $U \in \R^{k \times k}$ such that $\tilde{\phi}(X) = U \tilde{\psi}(X)$ almost surely over $X$. Finally, via analyzing a homogeneous Sylvester equation, this implies that $\phi(X) = U \psi(X)$ almost surely.
\end{proof}
Note that when $\lambda=0$, the conclusion of this theorem fails to hold: %
$d_0$ still satisfies the axioms of a pseudo-distance, but the cases for which $d_0(\phi,\psi)=0$ are different. This point is illustrated in the next section where we establish that $d_0$ is the \cca{} distance commonly employed in the literature.

\subsection{Comparison with \cca{}, ridge-\cca{}, \cka{}, and \procrustes{}}

Throughout this section, we assume that $k=\ell$ for simplicity.

\paragraph{Ridge-\textsc{cca}.} Our distance is most closely related to ridge-\cca{}, introduced by \cite{VINOD1976147} as a regularized version of Canonical Covariance Analysis (\cca{}) when the covariance matrices $\Sigma_\phi$ or $\Sigma_\psi$ are close to singular. More specifically, for any $\lambda \ge 0$, define the matrix $C_\lambda:=\Sigma_\phi^{-\lambda}\Sigma_{\phi\psi}\Sigma_\psi^{-\lambda}\Sigma_{\psi\phi}$; the ridge-\cca{} similarity measure is defined as $\rho_{\lambda-\cca{}}=\tr(C_\lambda)$. Hence, we readily see from Proposition~\ref{prop:closed-form} that \gulp{} and ridge-\cca{} are describing the same geometry over representations. To see this, recall that Lemma~\ref{lem:gulp3} provides an isometric embedding $f \mapsto f(X) \Sigma_{f}^{-\lambda} f(X')$ of representation maps into $L^2(P_X^{\otimes 2})$. While \gulp{} is the distance on this Hilbert space, ridge-\cca{} is the inner product. 

Ridge-\cca{} was briefly considered in the seminal work \cite{KorNorLee19} but discarded because of (i) its lack of interpretability and (ii) the absence of a rule to select $\lambda$. We argue that in fact, our prediction-driven derivation of \gulp{} gives a clear and compelling interpretation of this geometry (as well as suggests several extensions; see Section \ref{sec:conclusion}). Moreover, we show that tunability of $\lambda$ is, in fact, a desirable feature that allows to represent the space of representations at various resolutions, giving various levels of information; for example, in Figure~\ref{fig:imagenet_pretrained_dendogram}, higher $\lambda$ leads to a coarser clustering structure.

\paragraph{\textsc{cca}.} Due to the connection with ridge-$\cca$, our $\gulp$ distance is related to (unregularized) $\cca$ when $\lambda = 0$. Specifically, defining $C:= \Sigma_\phi^{-1}\Sigma_{\phi\psi}\Sigma_\psi^{-1}\Sigma_{\psi\phi}$, the mean-squared-\cca{} similarity measure is given by (see \cite[Def. 10.2]{Eat07}):
$$
\rho_{\cca}(\phi, \psi):=\frac{\tr(C)}{k}=1-\frac1{2k}\E\big[( \phi(X)^\top \Sigma_\phi^{-1} \phi(X') -  \psi(X)^\top \Sigma_\psi^{-1}\psi(X') )^2\big]\,,
$$
where $X$ is an independent copy of $X'$; the last identity can be checked directly. From Lemma~\ref{lem:gulp3} it can be seen that our \gulp{} distance $d_0(\phi, \psi)$ with $\lambda=0$ is a linear transformation of $\rho_{\cca}$.

It can be checked that  $\rho_{\cca}$ takes values in $[0,1]$, which has led researchers to simply propose $1-\rho_{\cca}$ as a dissimilarity measure. Interestingly, this choice turns out to produce a valid (squared) metric, i.e., a dissimilarity measure that satisfies the triangle inequality. Indeed, we get that
\begin{align*}
    d^2_{\cca}(\phi, \psi)=1-\rho_{\cca}(\phi, \psi)
    &=\frac{1}{2k}\E\big[(K(\tilde \phi(X),\tilde \phi(X'))-K(\tilde \psi(X),\tilde \psi(X')) )^2\big]
\end{align*}
where $K(u,v)= u^\top v$ is the linear kernel over $\R^d$ and $\tilde \phi:= \Sigma_\phi^{-1/2} \phi$ (where $\tilde \psi$ and $\tilde \phi$ are the whitened versions of $\psi$ and $\phi$ respectively). 
These identities have two consequences: (i) we see from Lemma~\ref{lem:gulp3} that $d_{\cca{}}$ corresponds to  the \gulp{} distance with $\lambda=0$ up to a scaling factor and (ii) $d_{\cca{}}$ is a valid pseudometric on the space of representations, since we just exhibited an isometry $T: \tilde f  \mapsto K(\tilde f(X), \tilde f(X'))$ with $L^2(P_X^{\otimes 2})$. We show in Appendix~\ref{app:distance-proofs} that $d_{\cca{}}(\phi, \psi)=0$ iff $\psi(X)=A\phi(X)$ a.s. for some matrix $A$. Note that the invariance of $\rho_{\cca{}}$ to linear transformations was previously known and criticized in~\cite{KorNorLee19} as arguably too strong.

\paragraph{\textsc{cka}.} 
In fact, thanks to the additional structure of the Hilbert space $L^2(P_X^{\otimes 2})$, the $d_{\cca}$ distance comes with an inner product
$$
\langle T(\tilde \phi), T(\tilde \psi) \rangle_{\cca}:=\frac{1}{2k}\E[K(\tilde \phi(X),\tilde \phi(X'))K(\tilde \psi(X),\tilde \psi(X'))]\,
$$
This observation allows us to connect \cca{} with \cka{}, another measure of similarity between distributions that is  borrowed from classical literature on kernel methods~\cite{cristianini2001kernel,cortes2012algorithms} and that was recently made popular by~\cite{KorNorLee19}. Under our normalization assumptions, \cka{} is a measure of similarity given by
\begin{align*}
    \rho_{\cka}(\phi, \psi)&= \frac{\E[K(\phi(X),\phi(X'))K(\psi(X), \psi(X'))]}{\sqrt{\E[K(\phi(X),\phi(X'))^2]\E[K(\psi(X),\psi(X'))^2]}}\\
    &=\frac{\langle T(\phi) , T(\psi) \rangle_{\cca{}}}{\|T(\phi)\|_{\cca{}}\|T(\psi)\|_{\cca{}}}=\cos\left(\measuredangle( T(\phi), T(\psi))\right)\,,
\end{align*}
where $\|T\|_{\cca{}}^2=\langle T, T\rangle_{\cca{}}$ and $\measuredangle$ denotes the angle in the geometry induced by $\langle\cdot, \cdot \rangle_{\cca{}}$\,. In turn, $d^2_{\cka{}}$ is chosen as $d^2_{\cka{}}=1-\rho_{\cka{}}$, which does not yield a pseudometric. This observation highlights  two major differences between \cca{} and \cka{}: the first measures inner products and works with whitened representations, while the second measures angles and works with raw representations. As illustrated in the experimental section~\ref{sec:experiments} as well as in~\cite{ding2021grounding}, this additional whitening step appears to be detrimental to the overall qualities of this distance measure.

The fact that \gulp{} with $\lambda=0$ recovers $d_{\cca{}}$ (i.e. $d^2_0=2kd^2_{\cca{}}$) is illustrated in Figure~\ref{fig:imagenet-relationships}. As shown, although \gulp{} has a roughly monotone relationship with \cka{}, they remain quite different.

\begin{figure}
    \centering
    \includegraphics[trim={1cm 1cm 1cm 1.05cm},clip,scale=0.28]{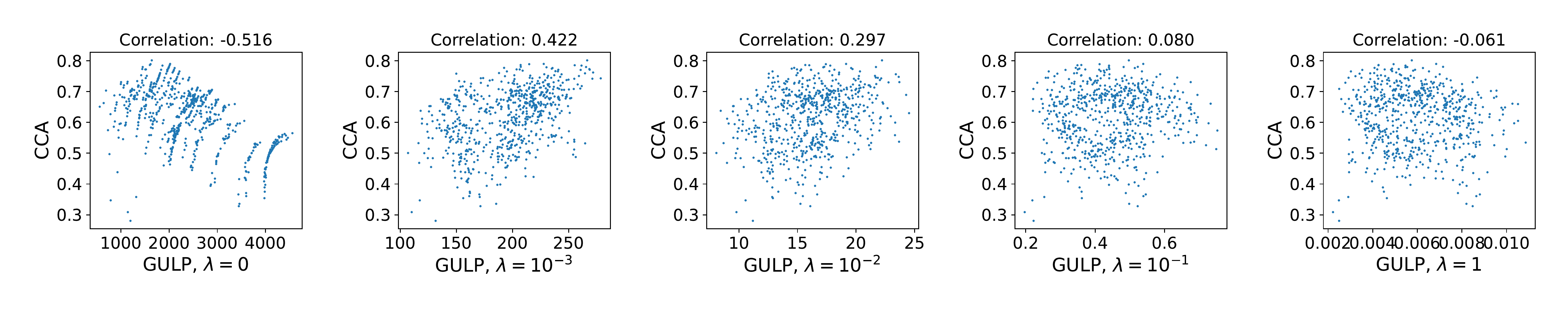}
    \includegraphics[trim={1cm 1cm 1cm 1.05cm},clip,scale=0.28]{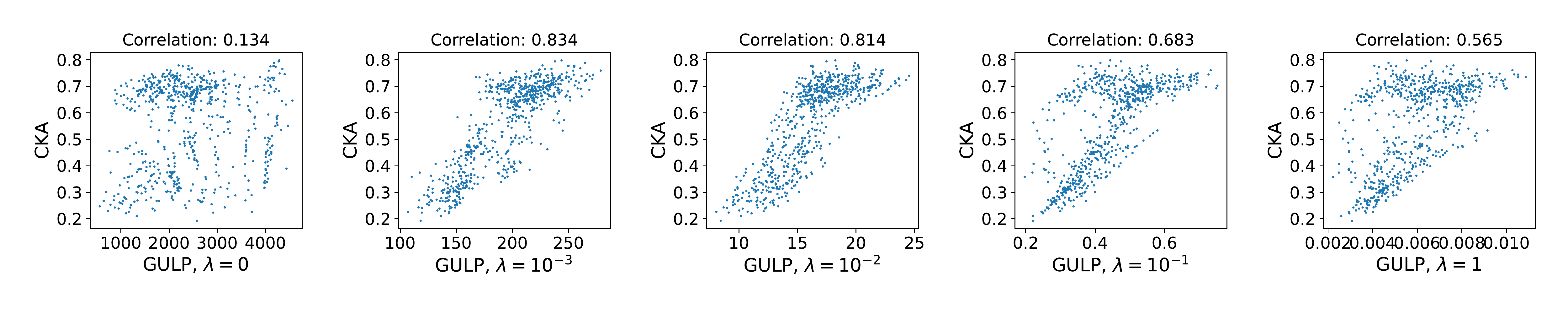}
    \caption{Empirical relationship between distances. Each point in the scatter-plot corresponds to a pair of ImageNet representations; the $x$-coordinate is the $\gulp$ distance, and the $y$-coordinate is the $\cca$ or $\cka$ distance. Although $\cca$ and $\gulp$  for $\lambda = 0$ are related, their relationship is not linear since the representations' dimensionalities differ. Although $\cka$ and $\gulp$ are related for large $\lambda$, their relationship is not linear due to the difference in normalization. Appendix~\ref{app:relationships} contains more details and comparisons, including a surprisingly strong correlation between $\gulp$ and $\procrustes$ for some values of $\lambda$.} 
    \label{fig:imagenet-relationships} 
\end{figure}
\paragraph{\procrustes{}.} The relationship between \gulp{} and \procrustes{} is not as clean as in the previous comparisons, but we include it for completeness. In the limit of infinite samples, the Procrustes distance as derived by \cite{schonemann1966generalized} is
\begin{align*}
d_{\text{Procrustes}} &= \tr(\Sigma_\phi) + \tr(\Sigma_\psi) - 2 \tr\big( (\Sigma_{\phi\psi}\Sigma_{\phi\psi}^{\top})^{1/2} \big).
\end{align*}
Our normalization implies $\tr(\Sigma_\phi)=\tr(\Sigma_\psi)=k$. However, the term $ \tr\big( (\Sigma_{\phi\psi}\Sigma_{\phi\psi}^{\top})^{1/2} \big)$ (which is equal to the nuclear norm $ || \Sigma_{\phi\psi}||_*$) is not directly comparable to the preceding distances.

\section{Plug-in estimation of $\gulp$}\label{sec:plug-in}

In practice, the distribution $P_X$ of $X$ is unknown, so we cannot compute the population version of \gulp{} exactly. Instead, we have access to a sample $X_1, \ldots, X_n \simiid P_X$. In all of the experiments of this paper, we approximate $\gulp$ with the following plug-in estimator:
\begin{align*}
    \boxed{\begin{aligned}\hat{d}^2_{\lambda, n}(\phi,\psi) &=  \tr(\hat{\Sigma}_{\phi}^{-\lambda} \hat{\Sigma}_{\phi} \Sigma_{\phi}^{-\lambda} \hat{\Sigma}_{\phi})  + \tr(\hat{\Sigma}_{\psi}^{-\lambda} \hat{\Sigma}_{\psi} \Sigma_{\psi}^{-\lambda} \hat{\Sigma}_{\psi})  - 2\tr(\hat{\Sigma}_{\phi}^{-\lambda} \hat{\Sigma}_{\phi \psi} \hat{\Sigma}_{\psi}^{-\lambda} \hat{\Sigma}_{\phi \psi}^\top),
\end{aligned}}
\end{align*}
where \begin{align*}
\hat{\Sigma}_{\phi} = \frac{1}{n} \sum_{i=1}^n \phi(X_i) \phi(X_i)^{\top}, \quad \hat{\Sigma}_{\psi} = \frac{1}{n} \sum_{i=1}^n \psi(X_i) \psi(X_i)^{\top},\quad \mbox{and} \quad \hat{\Sigma}_{\phi\psi} = \frac{1}{n} \sum_{i=1}^n \phi(X_i) \psi(X_i)^{\top}
\end{align*}
are the empirical covariance and cross-covariance matrices, and
\begin{align*}
\hat{\Sigma}_{\phi}^{-\lambda} = (\hat{\Sigma}_{\phi} + \lambda I)^{-1}, \quad \mbox{and} \quad \hat{\Sigma}_{\psi}^{-\lambda} = (\hat{\Sigma}_{\psi} + \lambda I)^{-1}
\end{align*} 
are the empirical inverse regularized covariance matrices. To justify our use of the plug-in estimator, we prove concentration around the population $\gulp$ distance as $n$ goes to infinity. 
\begin{Theorem}\label{thm:plug-in-concentration}
Assume that $\|\phi(X)\|^2, \|\psi(X)\|^2 \leq 1$ almost surely. Then, for any  $\lambda \in (0,1)$, $\delta > 0$, with probability at least $1 - \delta$ the plug-in estimator $\hat{d}^2_{\lambda,n}$  satisfies
\begin{align*}
\left|\hat{d}_{\lambda,n}^2(\phi,\psi) - d_{\lambda}^2(\phi,\psi)\right| \lesssim \frac{1}{\lambda^3} \sqrt{\frac{\log ((k+l)/\delta)}{n}}\,.
\end{align*}
\end{Theorem}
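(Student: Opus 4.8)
The plan is to prove the concentration bound by a perturbation argument, controlling each of the three trace terms in $\hat d^2_{\lambda,n}$ separately and comparing to the corresponding term in $d^2_\lambda$ from Proposition~\ref{prop:closed-form}. The natural first step is a matrix concentration inequality: since $\|\phi(X)\|^2,\|\psi(X)\|^2\le 1$ a.s., the summands $\phi(X_i)\phi(X_i)^\top$, $\psi(X_i)\psi(X_i)^\top$, and $\phi(X_i)\psi(X_i)^\top$ are bounded in operator norm by $1$, so a matrix Bernstein (or matrix Hoeffding) inequality gives that with probability at least $1-\delta$,
\begin{align*}
\|\hat\Sigma_\phi-\Sigma_\phi\|,\ \|\hat\Sigma_\psi-\Sigma_\psi\|,\ \|\hat\Sigma_{\phi\psi}-\Sigma_{\phi\psi}\| \;\lesssim\; \sqrt{\frac{\log((k+\ell)/\delta)}{n}} \;=:\; \varepsilon.
\end{align*}
Call the union of these three events $\mathcal E$; we work on $\mathcal E$ henceforth.

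Next I would propagate this perturbation through the regularized inverses. Using the resolvent identity $\hat\Sigma_\phi^{-\lambda}-\Sigma_\phi^{-\lambda} = -\hat\Sigma_\phi^{-\lambda}(\hat\Sigma_\phi-\Sigma_\phi)\Sigma_\phi^{-\lambda}$ together with the bounds $\|\Sigma_\phi^{-\lambda}\|\le 1/\lambda$ and $\|\hat\Sigma_\phi^{-\lambda}\|\le 1/\lambda$ (valid since both $\Sigma_\phi,\hat\Sigma_\phi\succeq 0$), one gets $\|\hat\Sigma_\phi^{-\lambda}-\Sigma_\phi^{-\lambda}\|\lesssim \varepsilon/\lambda^2$, and likewise for $\psi$. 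Now each trace term, e.g. $\tr(\hat\Sigma_\phi^{-\lambda}\hat\Sigma_\phi\Sigma_\phi^{-\lambda}\hat\Sigma_\phi)$, is a product of four matrices each of which is within $\varepsilon$ (in the appropriate norm) of its population counterpart — here I would also use $\|\hat\Sigma_\phi\|\le 1+\varepsilon\lesssim 1$ and $\|\Sigma_\phi\|\le 1$ — so by a telescoping/triangle-inequality expansion (replace one factor at a time) the difference of the products is bounded in operator norm by a sum of terms, each a product of three ``bounded'' factors and one ``small'' factor. Taking the trace costs at most a factor of $\mathrm{rank}\le k+\ell$; however, I would avoid that dimensional loss by instead using $|\tr(AB)|\le \|A\|_*\,\|B\|\le \tr(A)\,\|B\|$-type inequalities, noting that in each term the ``good'' PSD factor (such as $\Sigma_\phi$ or $\hat\Sigma_\phi$) has bounded trace ($=1$ or $\le 1+\varepsilon$), so each term contributes $\lesssim \varepsilon/\lambda^2$ or $\lesssim \varepsilon/\lambda^3$ depending on how many regularized inverses it carries. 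The dominant contribution comes from terms carrying two perturbed inverses, giving the $1/\lambda^3$ rate. The cross term $2\tr(\hat\Sigma_\phi^{-\lambda}\hat\Sigma_{\phi\psi}\hat\Sigma_\psi^{-\lambda}\hat\Sigma_{\phi\psi}^\top)$ is handled identically, using $\|\Sigma_{\phi\psi}\|\le 1$ (from Cauchy--Schwarz and the trace-one normalization) and $\|\hat\Sigma_{\phi\psi}\|\le 1+\varepsilon$.

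The main obstacle — and the step deserving the most care — is the bookkeeping in that telescoping expansion so as to land exactly on the $1/\lambda^3$ dependence rather than a worse power, and in particular keeping track of which factors have bounded \emph{operator} norm versus bounded \emph{trace} so that no spurious factor of $k+\ell$ creeps in (it should appear only inside the logarithm, from the matrix concentration step). A secondary subtlety is that one should verify the bound $\|\hat\Sigma_\phi^{-\lambda}\|\le 1/\lambda$ unconditionally (it holds because $\hat\Sigma_\phi$ is PSD regardless of the event $\mathcal E$), so that the perturbation bounds on the inverses are genuinely controlled by $\varepsilon$. Assembling the three pieces via the triangle inequality, and absorbing constants and the (bounded, since $\lambda<1$) lower-order terms, yields $|\hat d^2_{\lambda,n}-d^2_\lambda|\lesssim \lambda^{-3}\sqrt{\log((k+\ell)/\delta)/n}$ on the event $\mathcal E$, which has probability at least $1-\delta$, as claimed.
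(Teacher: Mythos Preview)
Your approach is correct but takes a genuinely different route from the paper. The paper does not perturb the trace formula of Proposition~\ref{prop:closed-form} directly. Instead it rewrites both $\hat d^2_{\lambda,n}$ and $d^2_\lambda$ in the double-sum form of Lemma~\ref{lem:gulp3}, introduces the intermediate quantity
\[
\tilde d^2_{\lambda,n}\;:=\;\frac{1}{n^2}\sum_{i,j}\bigl(\phi(X_i)^\top\Sigma_\phi^{-\lambda}\phi(X_j)-\psi(X_i)^\top\Sigma_\psi^{-\lambda}\psi(X_j)\bigr)^2
\]
(empirical samples but \emph{population} regularized inverses), and splits $|\hat d^2-d^2|\le|\hat d^2-\tilde d^2|+|\tilde d^2-d^2|$. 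The first piece is controlled termwise using only matrix Hoeffding on $\|\hat\Sigma_\phi^{-\lambda}-\Sigma_\phi^{-\lambda}\|$ and $\|\hat\Sigma_\psi^{-\lambda}-\Sigma_\psi^{-\lambda}\|$; the second is a bounded-difference function of i.i.d.\ samples, handled by McDiarmid. In particular the paper never needs concentration of $\hat\Sigma_{\phi\psi}$, nor any Schatten-norm bookkeeping.

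Your direct perturbation of the three trace terms is more elementary (no McDiarmid) and works, but one point deserves sharpening. In the cross term $\tr(\Sigma_\phi^{-\lambda}\Sigma_{\phi\psi}\Sigma_\psi^{-\lambda}\Sigma_{\psi\phi})$ there is no ``good PSD factor with bounded trace'' as you wrote: $\Sigma_{\phi\psi}$ is not PSD. The correct substitute is that $\|\Sigma_{\phi\psi}\|_*\le 1$ and $\|\hat\Sigma_{\phi\psi}\|_*\le 1$, since $\|\phi(X)\psi(X)^\top\|_*=\|\phi(X)\|\,\|\psi(X)\|\le 1$ and the nuclear norm is subadditive. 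This is exactly what you need in $|\tr(MN)|\le\|M\|\,\|N\|_*$ to keep the cross-term telescoping dimension-free, and with it each term indeed lands at $\varepsilon/\lambda^3$ or better. What each route buys: the paper's decomposition cleanly decouples ``wrong regularized kernels'' from ``sampling fluctuation of a fixed kernel'' and needs fewer matrix concentration events; your route stays entirely at the level of the closed-form expression and avoids the U-statistic/McDiarmid machinery, at the price of the trace-H\"older bookkeeping you flagged.
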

We defer the proof of this theorem to Appendix~\ref{app:plug-in-proof}.  At a high-level, we first show that the inverse regularized covariance matrices, $(\Sigma_{\phi} + \lambda I)^{-1}$ and $(\Sigma_{\psi} + \lambda I)^{-1}$, are well-approximated in operator norm, so the expectation of the plug-in estimator is close to the population distance. We then apply McDiarmid's inequality to show that the plug-in estimator concentrates around its expectation. Note that the boundedness conditions on the representations are here to simplify technical arguments by appealing simply to McDiarmid's inequality; these can be presumably be relaxed to weaker tail conditions at the cost of more involved arguments.

Figure~\ref{fig:convergence} supports our theoretical result by showing  convergence on pairs of networks on the ImageNet dataset. See Appendix~\ref{app:convergence} for more details.

\section{Experiments}\label{sec:experiments}

We evaluate our distance in a variety of empirical settings, comparing to $\cca{}$, $\cka{}$, the classical $\procrustes{}$ method from shape analysis, and a variant of \cca{} known as projection-weighted \cca{} (\pwcca{}); see \cite[Sec. 2]{ding2021grounding} for definitions. 

\subsection{$\gulp{}$ captures generalization performance by linear predictors}\label{sec:predict}

The $\gulp{}$ distance is motivated by how differently linear predictors using the representations $\phi$ and $\psi$ generalize. In this section, we demonstrate that $\gulp$ indeed captures downstream generalization performance by linear predictors. We consider the representation maps $\phi_1,\ldots,\phi_m$ given by $m = 37$ pretrained image classification architectures on the ImageNet dataset $P_X$ (see Appendix~\ref{app:network_experiments}). For each pair of representations, %
we estimate the $\cka$, $\cca$, $\pwcca$, and $\gulp$ distances, using the plug-in estimators on 10,000 images, sufficient to guarantee good convergence (see Figure \ref{fig:convergence}).

We then draw $n =$ 5,000 images from the dataset $X_1,\ldots,X_n \sim P_X$, and assign a random label $Y_k \sim \cN(0,1)$ to each one. For each representation $i \in [m]$, we fit a $\lambda$-regularized least-squares linear regression to the training data $\{(X_k,Y_k)\}_{k \in [n]}$, which gives a coefficient vector $\beta_{\lambda,i}$. Finally, for each $1 \leq i < j \leq m$, we estimate the distance $\tau_{ij} = \E_{X \sim P_X}[(\beta_{\lambda,i}^{\top} \phi_i(X) - \beta_{\lambda,j}^{\top} \phi_j(X))^2]$ between the predictions with representations $\phi_i$ and $\phi_j$, by taking the empirical average over $3000$ samples in a test set. In Figure~\ref{fig:generalization}, we plot Spearman's $\rho$ rank correlation between $\tau$ and each of the distances $\gulp$, $\cka$, $\cca$, $\pwcca$, viewed as vectors with $\binom{m}{2}$ entries, one for each pair of networks. Notice that for each $\lambda$, the distance that attains the best correlation is the $\gulp$ distance with that $\lambda$. This indicates that while \gulp{} is a measure of distance that holds uniformly over prediction tasks, it retains its meaning in the context of a single prediction task.

\begin{figure}
\centering
\begin{minipage}{0.38\textwidth}
    \centering
    \includegraphics[width=\textwidth]{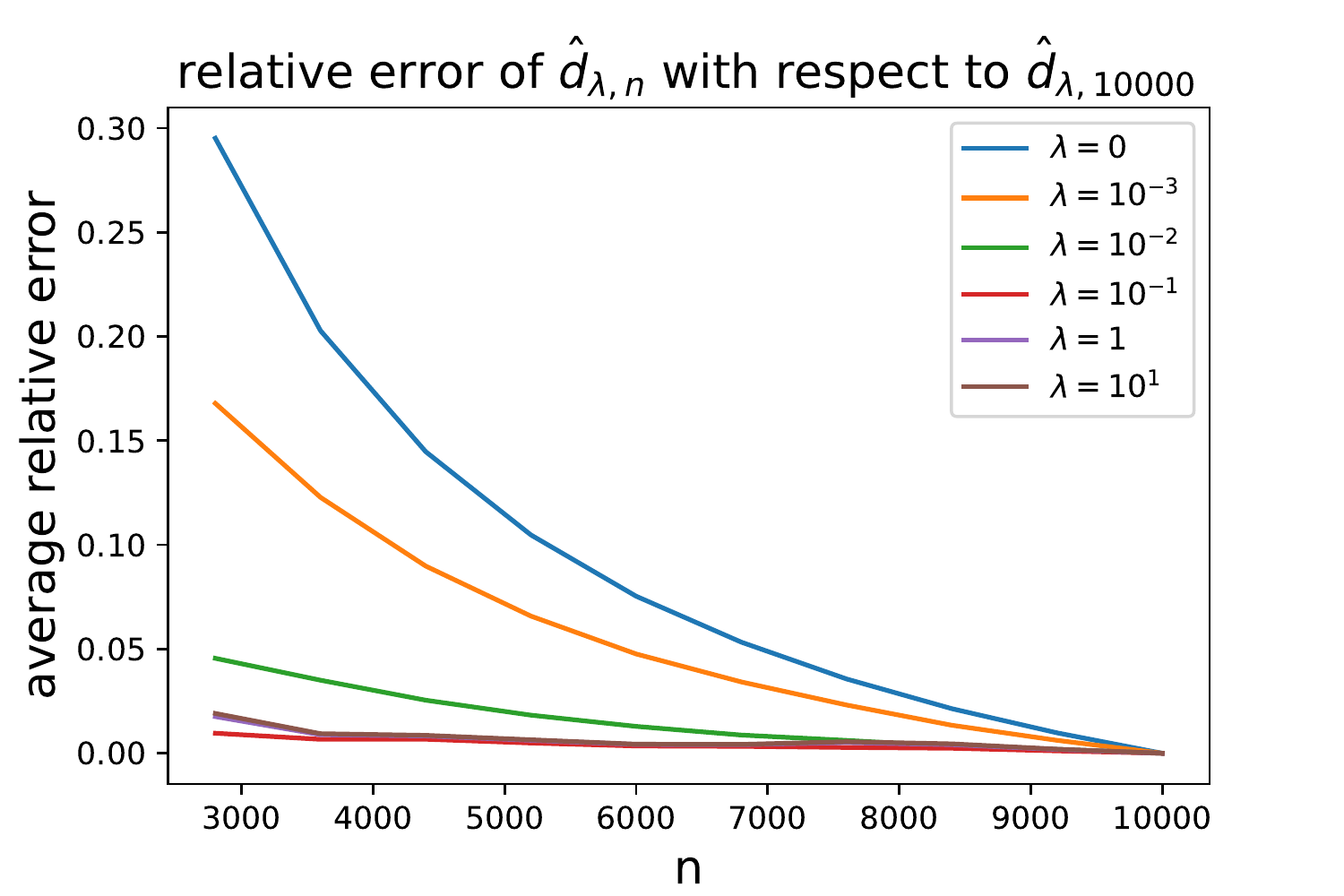}
    \caption{Convergence of plug-in esimator as $n \to \infty$. We plot relative error $|\hat{d}^2_{\lambda,n} - d^2_{\lambda,10000}| / d^2_{\lambda,10000}$ averaged over pairs of ImageNet DNNs.
    }
    \label{fig:convergence}
\end{minipage}%
\quad
\begin{minipage}{0.58\textwidth}
    \centering
    \includegraphics[width=.48\textwidth]{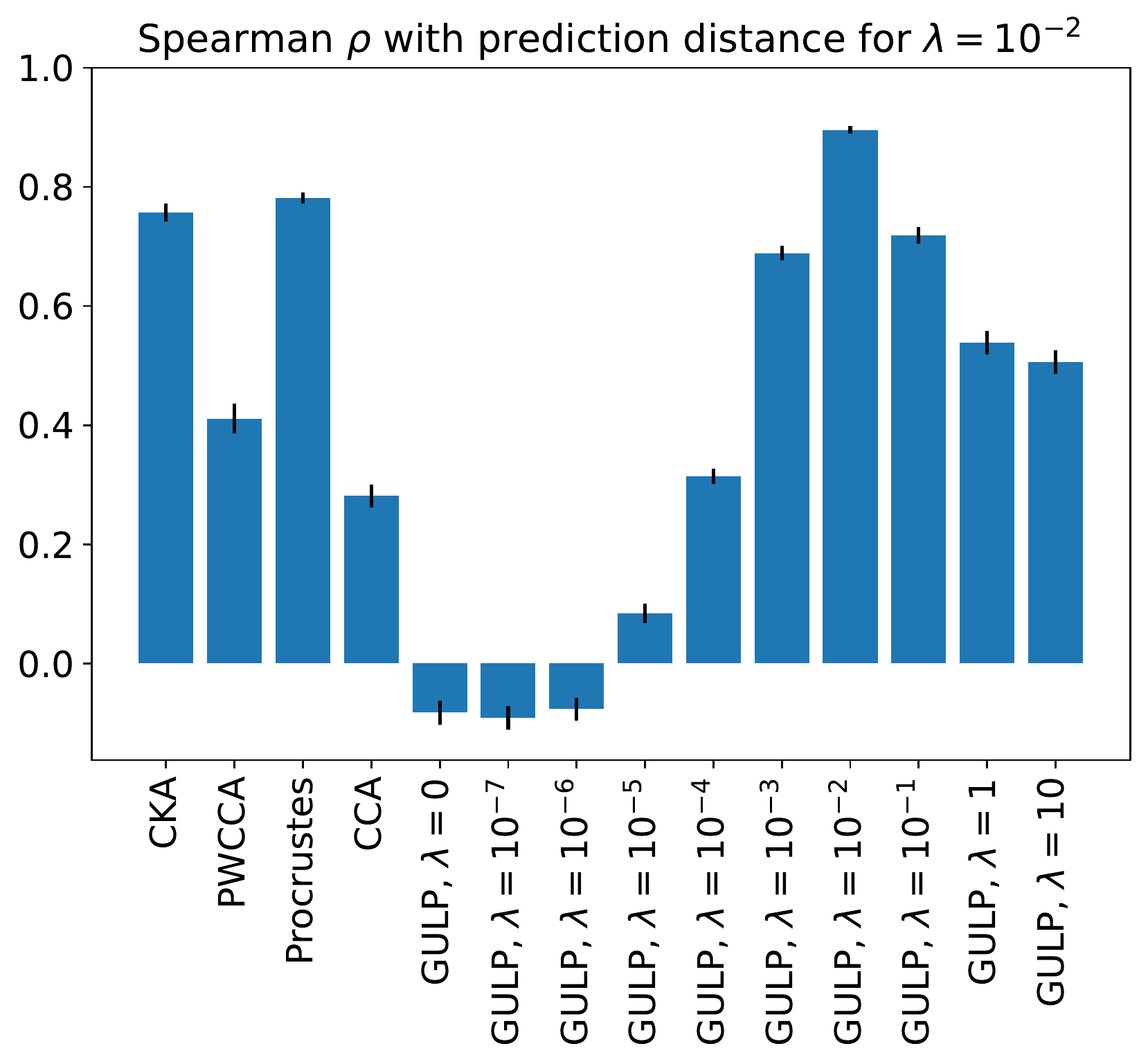}
    \includegraphics[width=.48\textwidth]{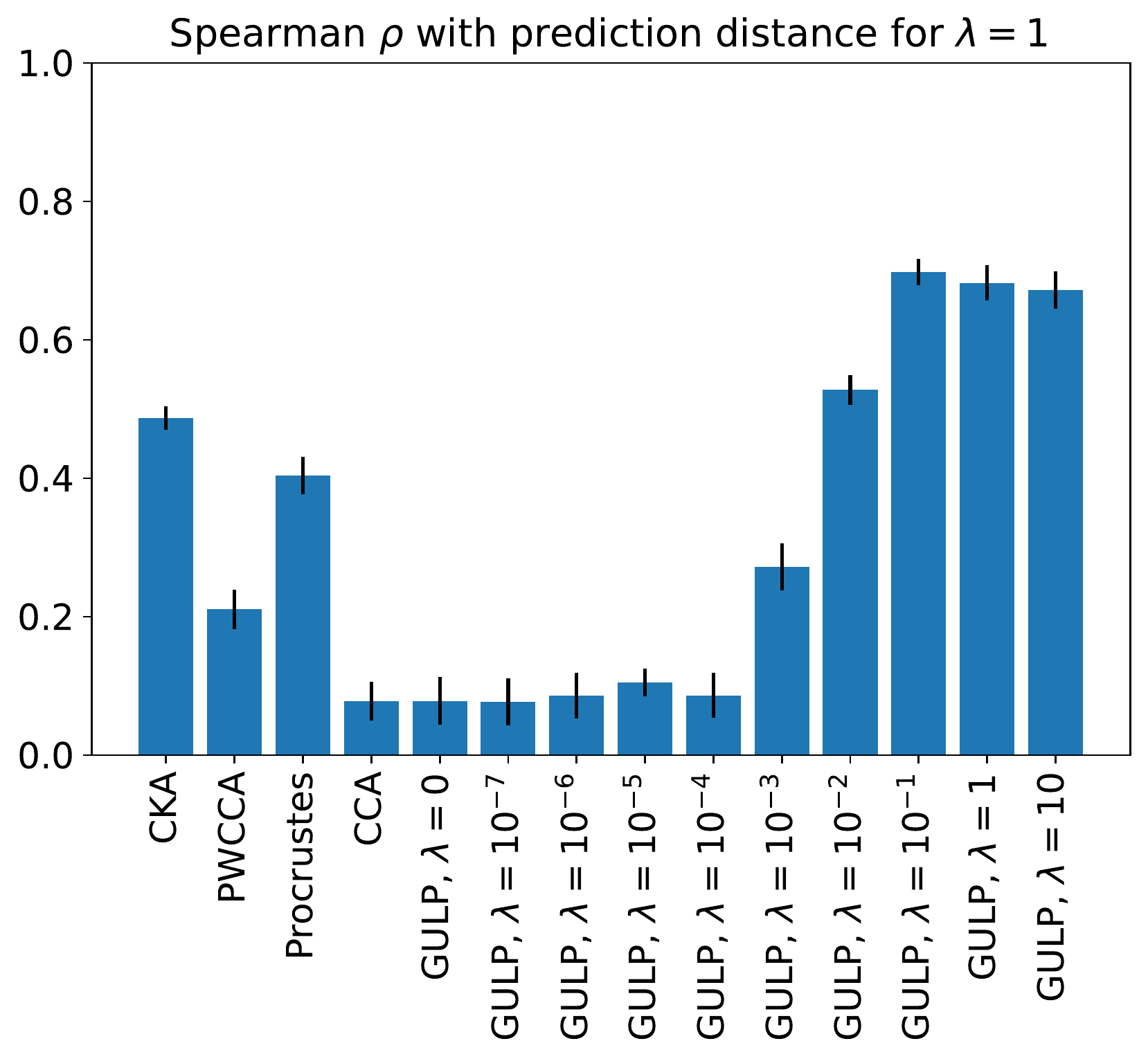}
    \caption{$\gulp$ captures generalization of linear predictors. We plot Spearman's $\rho$ between the differences in predictions by $\lambda$-regularized linear regression, and the different distances. Results are averaged over 10 trials.}
    \label{fig:generalization}
    \end{minipage}
\end{figure}

\subsection{$\gulp{}$ distances cluster together networks with similar architectures}\label{subsec:network_experiments}

We are interested in how \gulp{} can be used to compare networks of different architectures trained on the same task. We begin by comparing fully-connected ReLU networks of widths ranging from 100 to 1,000 and depths ranging from 1 to 10, trained on the MNIST handwritten digit database. Every architecture is retrained four times from different initializations (see Appendix~\ref{app:experimental_setup}). We input all MNIST training set images into each network, save their representations at the final hidden layer, and compute \cka{}, \procrustes, and \gulp{} distances between all pairs of representations.

Figure~\ref{fig:mnist_width_depth_small} shows Multi-Dimensional Scaling (MDS) embeddings of the distances between all MNIST networks, color coded by width and depth. For $\gulp$ with $\lambda = 10^{-6}$, the networks are largely organized according to rank of the feature matrix: networks of large width and small depth, ones whose representations have the largest rank, are the most different, as evidenced by the halo of points in the MDS plots. This outcome confirms that \cca{} simply measures rank~\cite{KorNorLee19}. However, for \procrustes{} and \gulp{} with $\lambda = 10^{-2}$, networks become clustered by their depth, as evidenced by the striations in the MDS embeddings (plot colored by depth). Furthermore, networks of the same depth look most similar at large widths, as shown by the red centerline in the MDS embedding (plot colored by width), implying that as width increases networks converge to a shared limiting representation. Finally, \gulp{} with $\lambda = 1$ closely resembles \cka{} and roughly organizes networks by depth. A takeaway is that \gulp{} with $\lambda = 1$  resembles \procrustes{} and \cka{}, and captures intrinsic characteristics such as width and depth.

\begin{figure}
\centering
\includegraphics[width=0.9\linewidth]{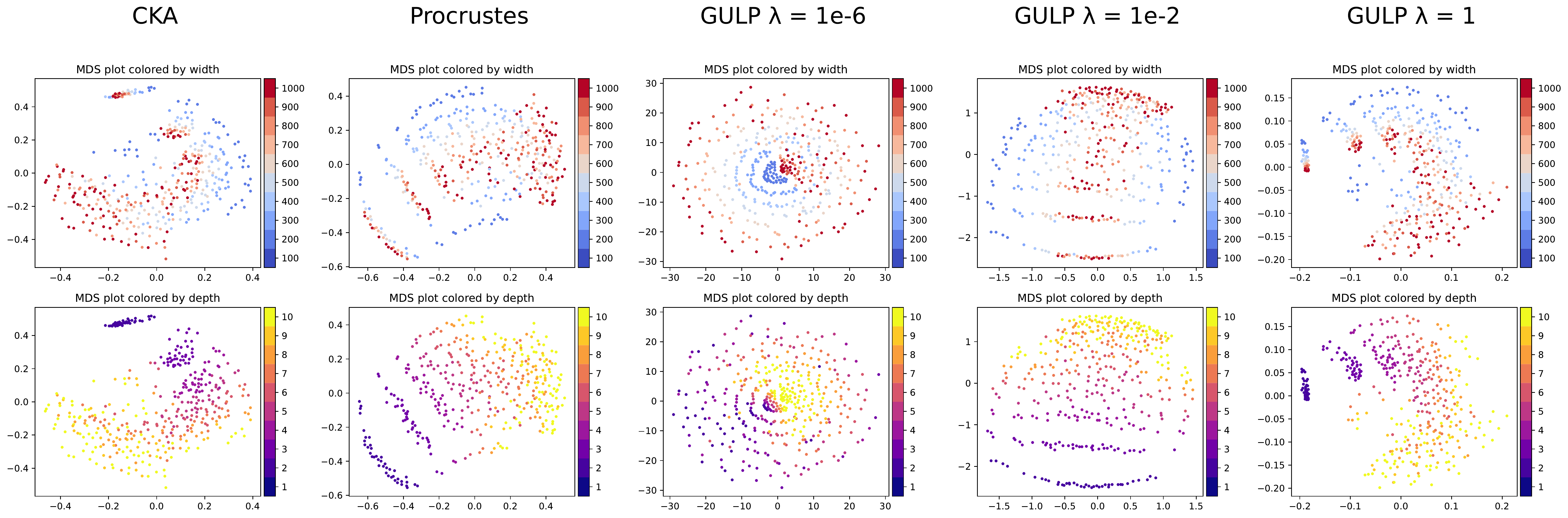}
\caption{Two dimensional MDS embedding plots of fully-connected ReLU networks colored by architecture width (top) and depth (bottom). Networks are fully-trained on MNIST and penultimate layer representations are constructed from 60,000 input train images.}
\label{fig:mnist_width_depth_small}
\end{figure}
Next, we show how distances between penultimate layer representations allow us to cluster pretrained networks with more complex architectures and, in turn, draw comparisons between them. To that end, we study 37 state-of-the-art models on the ImageNet Object Localization Challenge, of which the four major groups are ResNets, EfficientNets, ConvNeXts, and MobileNets (see Appendix~\ref{app:experimental_setup}).

We compute the baseline %
distances between every pair of representations using 10,000 training images, and visualize them using a two-dimensional t-SNE embedding in Figure~\ref{fig:imagenet_pretrained_dendogram}. Below each embedding plot we show the dendogram resulting from a hierarchical clustering of the networks based on their distances. As seen from the embeddings, when $\lambda$ increases, the \gulp{} distance separates the ResNet architectures (blue) from the EfficientNet and ConvNeXt convolutional networks (orange and red). Compared to other distances, \gulp{} with large $\lambda$ is able to more compactly cluster ResNets and convolutional networks separately. In Appendix~\ref{app:network_experiments} we further quantify the compactness of clusterings under each distance metric by computing the standard deviation of distances within each cluster.

\begin{figure}
\centering
\includegraphics[width=\linewidth]{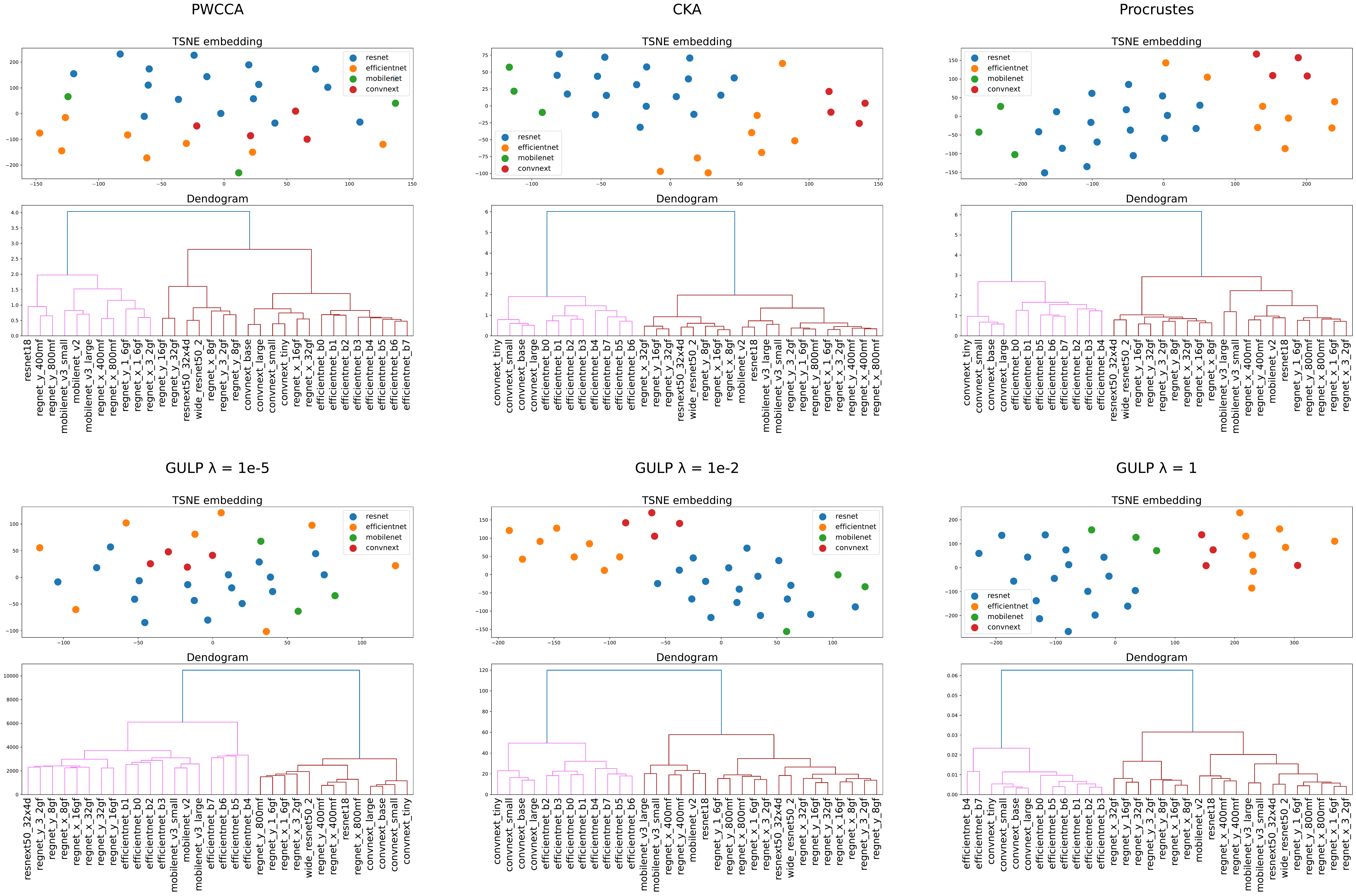}%
\caption{Embeddings of \pwcca{}, \cka{}, \procrustes{}, and \gulp{} distances between the last hidden layer representations of 36 pretrained ImageNet models (top) along with their hierarchical clusterings (bottom). All distance metrics separate ResNet architectures (brown dendogram leaves) from the rest of the ConvNeXt and EfficientNet architectures (pink dendogram leaves). \gulp{} at $\lambda=1$ is the most effective distance at separating ResNets from the remaining architectures.}
\label{fig:imagenet_pretrained_dendogram}
\end{figure}

\subsection{Network representations converge in $\gulp{}$ distance during training}

So far, we have used \gulp{} to compare static networks taken as a blackbox representation maps. Now we use \gulp{} to examine how representation maps evolve over the course of training. To that end, we independently train 16 Resnet18 architectures on the CIFAR10 dataset \cite{krizhevsky2009learning} for 50 epochs. Figure~\ref{fig:dists_during_training_one_plot} tracks the  distance (averaged over all  network pairs) at each epoch.

As shown, other distances change very little or even briefly increase over the course of training. For \gulp{} with small $\lambda$, the previous sections have demonstrated that our distance captures fine-grained differences between representations; here too, it accentuates differences in representations mid-training (visible around epoch 25). %
However, as $\lambda$ increases, the \gulp{} distance differentiates less between representations, and smoothly decreases over the course of training, thus indicating that it captures intrinsic properties of the representations rather than artifacts due to random seeds.

\begin{figure}
    \centering
     \includegraphics[width=0.8\linewidth]{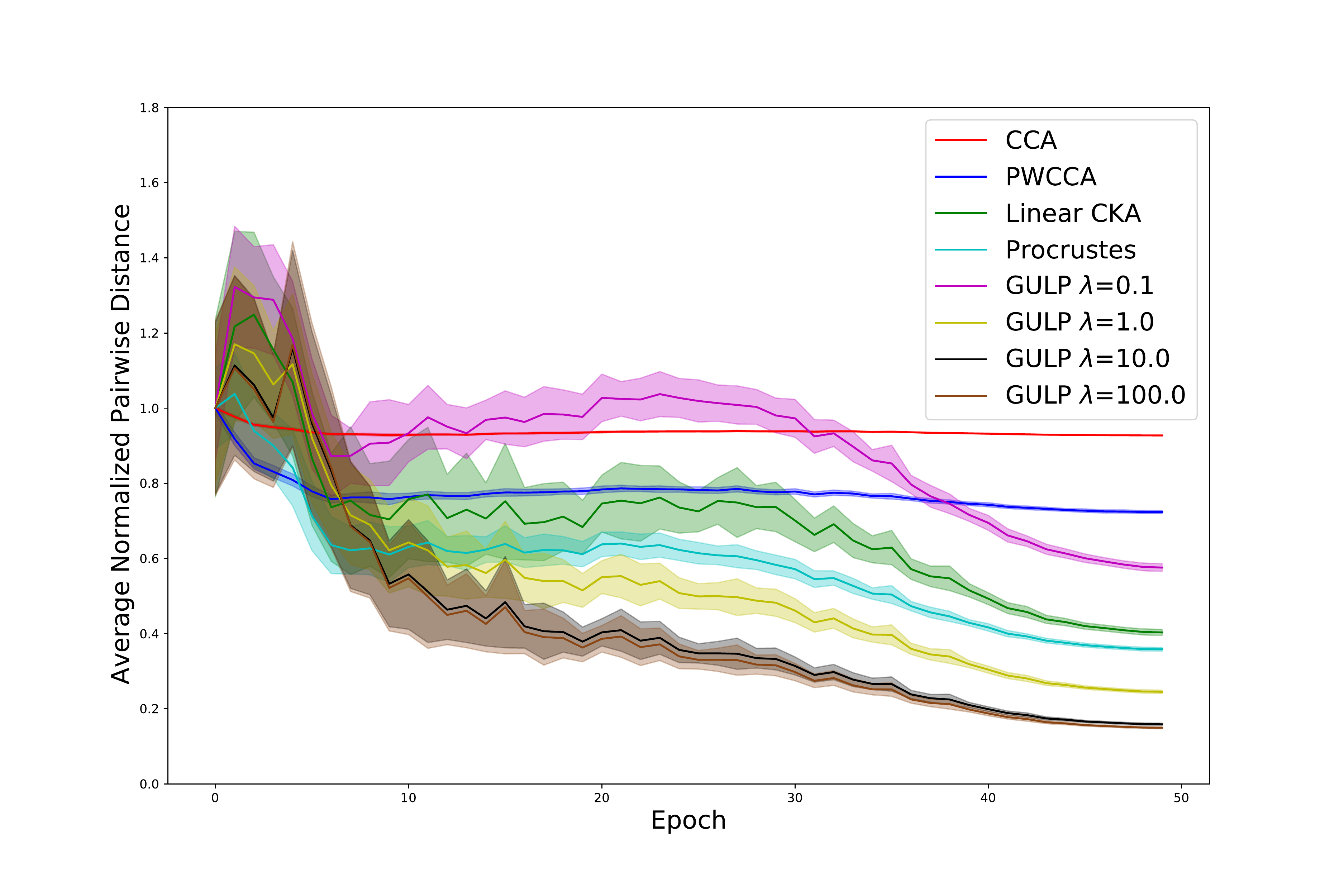}
    \caption{Empirical distances between representations of 16 independently trained ResNet18 architectures during training, computed using 3,000 samples and averaged over all ${16 \choose 2}$ pairs. Distances are scaled by their average value at iteration 0.}
    \label{fig:dists_during_training_one_plot}
\end{figure}

\subsection{Sensitivity versus specificity of $\gulp{}$}
In Appendix~\ref{app:ding-experiments}, we reproduce the experiments of \cite{ding2021grounding}. Our distance compares favorably to baselines %
and correlates with measures of a DNN's functional behavior. %
It achieves the specificity of \cca{} and \pwcca{} to random initializations, and improves the sensitivity of \cka{} to out-of-distribution performance.

\section{Conclusion}\label{sec:conclusion}
In this paper, we have defined a family of distances for comparing learned representations in terms of their worst-case performance gap over all $\lambda$-regularized regression tasks. We proved convergence of the finite-sample estimator of this distance, quantified its relationship to existing notions such as \cca{}, ridge-\cca{}, and \cka{}, and demonstrated promising performance in a variety of empirical settings, including the ability to distinguish between network architectures and to capture performance differences on regression tasks.

Further studying extensions beyond linear transfer learning could provide a rich direction for future work. In fact, preliminary  experiments reported in Appendix~\ref{app:logistic} indicate that, compared to section~\ref{sec:predict}, \gulp{} fails to predict generalization performance when the downstream task shifts from linear to logistic prediction. This suggests extending \gulp{} to a uniform bound over other downstream predictive tasks, such as logistic regression, multi-class classification, or kernel ridge regression. Although \gulp{} under kernel ridge regression has a closed form using the kernel trick\footnote{This can be easily derived from vanilla ridge regression as studied in this paper (see Appendix~\ref{app:kernel_ridge_reg}), and is related to ``kernel ridge CCA'' \cite{kuss2003the}.}, \gulp{} for logistic regression does not have a closed form. This brings additional computational questions of interest that are beyond the scope of this work. Finally, it could be interesting to consider the application of \gulp{} to knowledge distillation, or alternatively to consider adding a ridge regularization term to probing methods (inspired by \gulp{}).

\begin{ack}
EB is supported by an Apple AI/ML Fellowship, and the National Science Foundation Graduate Research Fellowship under Grant No. 1745302.
HL is supported by the Fannie and John Hertz Foundation and the National Science Foundation Graduate Research Fellowship under Grant No. 1745302. GS is supported by the National Science Foundation Graduate Research Fellowship under Grant No. 1745302. PR supported by NSF awards IIS-1838071, DMS- 2022448, and CCF-2106377.
\end{ack}

\bibliographystyle{alphaabbr}
\bibliography{ref.bib}

\appendix

\section{Deferred proofs}

\subsection{Alternate characterizations of $\gulp$, proofs of Proposition~\ref{prop:closed-form} and Lemma~\ref{lem:gulp3}}\label{app:closed-form}

We provide proofs of the two alternative characterizations to the $\gulp$ distance that were claimed in the main text.
\begin{proof}[Proof of Lemma~\ref{lem:gulp3}]
Fix a distribution of $(X,Y)$, and let $\eta(x)=\E[Y|X=x]$ be the regression function. Since we are using squared error, with $\phi$ features the best linear predictor is $\beta_{\lambda}$ that solves
$$
\beta_{\lambda}=\Sigma_{\phi}^{-\lambda} \E[Y\phi(X)]= \Sigma_{\phi}^{-\lambda} \E[\eta(X)\phi(X)]=\Sigma_{\phi}^{-\lambda}\int \eta(x)\phi(x) \ud P_X(x)
$$
where $P_X$ is the marginal distribution of $X$.  Similarly
$$
\gamma_{\lambda}=\Sigma_{\psi}^{-\lambda}\int \eta(x)\psi(x) \ud P_X(x)
$$

In particular, for a given distribution of $(X,Y)$, the distance between the best linear predictors is 
$$
\E(\beta_{\lambda}^\top \phi(X)-\gamma_{\lambda}^\top \psi(X))^2.
$$
We rewrite this in terms of $\eta$:
\begin{align*}
  \E(\beta_{\lambda}^\top \phi(X)-\gamma_{\lambda}^\top \psi(X))^2&=\E\left( \int \eta(x) \left[\phi(X)^\top \Sigma_{\phi}^{-\lambda}\phi(x) - \psi(X)^\top \Sigma_{\psi}^{-\lambda}\psi(x)\right]\ud P_X(x) \right)^2  \\
  &=\E \langle \eta, \phi(X)^\top \Sigma_{\phi}^{-\lambda}\phi(\boldsymbol{\cdot}) - \psi(X)^\top \Sigma_{\psi}^{-\lambda}\psi(\boldsymbol{\cdot})\rangle_{L^2(P_X)}^2
\end{align*}
Therefore to sup out the distribution over $Y$, we take the sup of $\eta$ such that $\|\eta\|_{L^2(P_X)} \le 1$. It yields the claim of Lemma~\ref{lem:gulp3}.
\begin{align*}
  d_{\lambda}^2(\phi, \psi) &:= \sup_{\|\eta\|_{L^2(P_X)} \le 1}\E \langle \eta, \cdots \rangle_{L^2(P_X)}^2 \\
  &=\E\|\phi(X)^\top \Sigma_\phi^{-\lambda} \phi(\boldsymbol{\cdot}) - \psi(X)^\top \Sigma_\psi^{-\lambda}\psi(\boldsymbol{\cdot})\|_{L^2(P_X)}^2  \\
  &=\E (\phi(X)^\top \Sigma_\phi^{-\lambda}\phi(X') - \psi(X)^\top \Sigma_\psi^{-\lambda}\psi(X'))^2
\end{align*}
where $X,X'\sim P_X$ are independent.
\end{proof}

Using Lemma~\ref{lem:gulp3}, we can easily prove Proposition~\ref{prop:closed-form}.
\begin{proof}[Proof of Proposition~\ref{prop:closed-form}]
Start with the characterization in Lemma~\ref{lem:gulp3}, expand the square and use the cylicity and linearity of the trace:
\begin{align*}
    d^2_\lambda(\phi, \psi) &= \E (\phi(X)^\top \Sigma_\phi^{-\lambda}\phi(X')\phi(X')^\top \Sigma_\phi^{-\lambda}\phi(X)) \\
                            &\quad + \E (\psi(X)^\top \Sigma_\psi^{-\lambda}\psi(X')\psi(X')^\top \Sigma_\psi^{-\lambda}\psi(X)) \\
                            &\quad - 2 \E (\phi(X)^\top \Sigma_\phi^{-\lambda}\phi(X')\psi(X')^\top \Sigma_\psi^{-\lambda}\psi(X)) \\
                            &= \tr \E ( \Sigma_\phi^{-\lambda}\phi(X')\phi(X')^\top \Sigma_\phi^{-\lambda}\phi(X)\phi(X)^\top) \\
                            &\quad + \tr \E ( \Sigma_\psi^{-\lambda}\psi(X')\psi(X')^\top \Sigma_\psi^{-\lambda}\psi(X)\psi(X)^\top) \\
                            &\quad - 2 \tr \E (\Sigma_\phi^{-\lambda}\phi(X')\psi(X')^\top \Sigma_\psi^{-\lambda}\psi(X)\phi(X)^\top ) \\
    &= \tr(\Sigma_{\phi}^{-\lambda} \Sigma_{\phi} \Sigma_{\phi}^{-\lambda} \Sigma_{\phi})
+ \tr(\Sigma_{\psi}^{-\lambda} \Sigma_{\psi} \Sigma_{\psi}^{-\lambda} \Sigma_{\psi}) - 2\tr(\Sigma_{\phi}^{-\lambda} \Sigma_{\phi \psi}\Sigma_{\psi}^{-\lambda} \Sigma_{\phi \psi}^\top).
 \end{align*}
\end{proof}

\subsection{$\gulp{}$ is a distance, proof of Theorem~\ref{thm:gulp_dist}}\label{app:distance-proofs}

We complete the proof of Theorem~\ref{thm:gulp_dist} by characterizing when the $\gulp{}$ distance is zero in the following lemma.

\begin{lem}[Characterization for when $\gulp{}$ is zero, for $\lambda > 0$]\label{lem:when-gulp-is-zero}
For any $\lambda > 0$, the two representation maps $\phi : \R^d \to \R^k,\psi : \R^d \to \R^l$ have zero $\gulp$ distance, $d_{\lambda}(\phi,\psi) = 0$, if and only if $k = l$  andthere exists an orthogonal transformation $U \in \mathbb{R}$ such that $\phi(X) = U \psi(X)$ a.s.
\end{lem}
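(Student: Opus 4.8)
The plan is to establish the two directions of the equivalence separately, relying on the characterization in Lemma~\ref{lem:gulp3} that $d_\lambda^2(\phi,\psi) = \E\big(\phi(X)^\top \Sigma_\phi^{-\lambda}\phi(X') - \psi(X)^\top \Sigma_\psi^{-\lambda}\psi(X')\big)^2$, where $X'$ is an independent copy of $X$. The ``if'' direction is immediate from Theorem~\ref{thm:invariance}: if $k=\ell$ and $\phi(X) = U\psi(X)$ a.s.\ for an orthogonal $U$, then $d_\lambda(\phi,\psi) = d_\lambda(U\circ\psi,\psi) = d_\lambda(\psi,\psi) = 0$. So the substance is the ``only if'' direction.

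For the forward direction, introduce the whitened-regularized maps $\tilde\phi := (\Sigma_\phi + \lambda I_k)^{-1/2}\phi = (\Sigma_\phi^{-\lambda})^{1/2}\phi$ and $\tilde\psi := (\Sigma_\psi + \lambda I_\ell)^{-1/2}\psi$. By Lemma~\ref{lem:gulp3}, $d_\lambda(\phi,\psi)=0$ means the nonnegative random variable $\big(\tilde\phi(X)^\top\tilde\phi(X') - \tilde\psi(X)^\top\tilde\psi(X')\big)^2$ has zero expectation, hence vanishes $P_X\otimes P_X$-a.s.; that is, $\tilde\phi(X)^\top\tilde\phi(X') = \tilde\psi(X)^\top\tilde\psi(X')$ for a.e.\ pair $(X,X')$. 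In other words, the two (infinite) families of feature vectors $\{\tilde\phi(x)\}_x$ and $\{\tilde\psi(x)\}_x$ have matching Gram/kernel functions. The classical fact that a Gram matrix determines its generating vectors up to a common isometry (\cite{horn2012matrix}) now needs to be applied in this population setting: I would argue that the linear span of $\{\tilde\phi(x) : x \in \mathrm{supp}(P_X)\}$ is all of $\R^k$ — indeed $\E[\tilde\phi(X)\tilde\phi(X)^\top] = (\Sigma_\phi^{-\lambda})^{1/2}\Sigma_\phi(\Sigma_\phi^{-\lambda})^{1/2}$ is invertible since $\Sigma_\phi$ is invertible — and similarly the span of $\{\tilde\psi(x)\}$ is all of $\R^\ell$; combined with the matching kernel this forces $k=\ell$, and produces an orthogonal $U \in \R^{k\times k}$ with $\tilde\phi(X) = U\tilde\psi(X)$ a.s. Concretely one can pick points $x_1,\dots,x_k$ with $\tilde\phi(x_1),\dots,\tilde\phi(x_k)$ a basis, note the corresponding $\tilde\psi(x_i)$ have the same Gram matrix hence are also a basis, define $U$ on this basis by $U\tilde\psi(x_i) = \tilde\phi(x_i)$, check $U$ is orthogonal from the Gram identity, and then verify $U\tilde\psi(x) = \tilde\phi(x)$ for all remaining $x$ by expanding in the basis and using the matching inner products.

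It remains to descend from $\tilde\phi(X) = U\tilde\psi(X)$ a.s.\ to $\phi(X) = U'\psi(X)$ a.s.\ for some orthogonal $U'$. Unwinding the definitions, the relation reads $(\Sigma_\phi+\lambda I)^{-1/2}\phi(X) = U(\Sigma_\psi+\lambda I)^{-1/2}\psi(X)$ a.s., i.e.\ $\phi(X) = (\Sigma_\phi+\lambda I)^{1/2} U (\Sigma_\psi+\lambda I)^{-1/2}\psi(X) =: A\psi(X)$ a.s.\ with $A := (\Sigma_\phi+\lambda I)^{1/2}U(\Sigma_\psi+\lambda I)^{-1/2}$. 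Taking second moments of both sides gives $\Sigma_\phi = A\Sigma_\psi A^\top$. Substituting $A$ and writing $B := (\Sigma_\phi+\lambda I)^{1/2}$, $C := (\Sigma_\psi+\lambda I)^{1/2}$ so that $\Sigma_\phi = B^2 - \lambda I$ and $\Sigma_\psi = C^2 - \lambda I$, the identity $\Sigma_\phi = A\Sigma_\psi A^\top$ becomes $B^2 - \lambda I = B U C^{-1}(C^2 - \lambda I)C^{-1}U^\top B = B U (I - \lambda C^{-2}) U^\top B$, i.e.\ $I - \lambda B^{-2} = U(I - \lambda C^{-2})U^\top$, equivalently $B^{-2} = U C^{-2} U^\top$, equivalently $\Sigma_\phi + \lambda I = U(\Sigma_\psi + \lambda I)U^\top$, equivalently $\Sigma_\phi = U\Sigma_\psi U^\top$. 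Plugging this back, $A = (\Sigma_\phi+\lambda I)^{1/2}U(\Sigma_\psi+\lambda I)^{-1/2} = (U(\Sigma_\psi+\lambda I)U^\top)^{1/2}U(\Sigma_\psi+\lambda I)^{-1/2} = U(\Sigma_\psi+\lambda I)^{1/2}U^\top U (\Sigma_\psi+\lambda I)^{-1/2} = U$, so in fact $A = U$ is already orthogonal and $\phi(X) = U\psi(X)$ a.s., as desired. (This is the computation the excerpt alludes to as ``analyzing a homogeneous Sylvester equation''; I would present it cleanly via the $B,C$ substitution rather than invoking Sylvester machinery, unless a more general argument is needed.)

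The main obstacle I anticipate is the careful population-level justification that the matching-kernel condition yields a genuine \emph{finite-dimensional} orthogonal $U$ with $k=\ell$: one must handle the fact that $P_X$ may have arbitrary (possibly continuous) support, show the relevant spans are full-dimensional using invertibility of $\Sigma_\phi,\Sigma_\psi$, and make the ``Gram matrix determines vectors up to isometry'' argument rigorous for an uncountable family while controlling the a.s.\ exceptional sets (a countable basis of evaluation points suffices, after which a union bound over the defining identities extends the relation to a.e.\ $x$). The final algebraic descent, by contrast, is a short and routine matrix computation once set up as above.
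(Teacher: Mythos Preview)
Your proposal is correct and follows the same overall architecture as the paper's proof: reduce via Lemma~\ref{lem:gulp3} to the matching-kernel identity $\tilde\phi(X)^\top\tilde\phi(X') = \tilde\psi(X)^\top\tilde\psi(X')$ a.s., deduce an orthogonal $U$ with $\tilde\phi(X)=U\tilde\psi(X)$ a.s.\ (the paper packages this as Claim~\ref{claim:whitened-orthogonal} and handles the a.s.\ exceptional sets via an i.i.d.\ sequence argument rather than by selecting a deterministic basis, but both routes work and you correctly flag the measure-theoretic care required), and then descend to $\phi(X)=U\psi(X)$.

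The one genuine difference is the descent step. The paper derives $\Sigma_\phi(\Sigma_\phi+\lambda I)^{-1} = U\,\Sigma_\psi(\Sigma_\psi+\lambda I)^{-1}U^\top$, passes to SVDs, and analyzes the homogeneous Sylvester equation $\Lambda_\phi M = M\Lambda_\psi$ entrywise (using injectivity of $t\mapsto t/(t+\lambda)$) to conclude $U^\top(\Sigma_\phi+\lambda I)^{-1/2}U = (\Sigma_\psi+\lambda I)^{-1/2}$. Your route via the $B,C$ substitution is shorter: from $\Sigma_\phi = A\Sigma_\psi A^\top$ you obtain $I-\lambda B^{-2} = U(I-\lambda C^{-2})U^\top$ in one line, and since $\lambda>0$ this immediately gives $B^2 = UC^2U^\top$, hence $B=UCU^\top$ and $A=BUC^{-1}=U$. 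This bypasses the SVD and the Sylvester casework entirely and is a clean simplification; the paper's version makes the eigenvalue matching explicit, but that extra information is not needed for the lemma.
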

\begin{proof}[Proof of Lemma~\ref{lem:when-gulp-is-zero}]
In the main text it was shown that if $\phi$ and $\psi$ are related by an orthogonal transformation, then $d_{\lambda}(\phi,\psi) = 0$. It remains to prove the converse direction, which is more involved. Define $\tilde{\phi}(x) = (\Sigma_{\phi} + \lambda I)^{-1/2} \phi(x)$ and $\tilde{\psi}(x) = (\Sigma_{\psi} + \lambda I)^{-1/2} \psi(x)$. We make the following claim, whose proof we defer:
\begin{claim}\label{claim:whitened-orthogonal}
Let $\lambda > 0$ and suppose $d_{\lambda}(\phi,\psi) = 0$. Then $k = l$ and there is an orthogonal transformation $U \in \R^{k \times k}$ such that $\tilde{\phi}(X) = U \tilde{\psi}(X)$ almost surely.
\end{claim}
Let $U \in \R^{k \times k}$ be the orthogonal transformation guaranteed by the above claim. We can write 
\begin{align*}
\Sigma_{\phi} &= \E[\phi(X) \phi(X)^{\top}] \\
&= (\Sigma_{\phi} + \lambda I)^{1/2} U (\Sigma_{\psi} + \lambda I)^{-1/2} \E[\psi(X) \psi(X)^{\top}] (\Sigma_{\psi} + \lambda I)^{-1/2} U^{\top} (\Sigma_{\phi} + \lambda I)^{1/2} \\
&= (\Sigma_{\phi} + \lambda I)^{1/2} U (\Sigma_{\psi} + \lambda I)^{-1/2} \Sigma_{\psi} (\Sigma_{\psi} + \lambda I)^{-1/2} U^{\top} (\Sigma_{\phi} + \lambda I)^{1/2}.
\end{align*}
Since $\Sigma_{\phi}$ and $(\Sigma_{\phi} + \lambda I)^{1/2}$ commute, and similarly for $\Sigma_{\psi}$ and $(\Sigma_{\psi} + \lambda I)^{1/2}$, we have
\begin{align*}
\Sigma_{\phi} (\Sigma_{\phi} + \lambda I)^{-1} =  U \Sigma_{\psi} (\Sigma_{\psi} + \lambda I)^{-1} U^{\top}.
\end{align*}
Write the SVDs $\Sigma_{\phi} = V_{\phi} D_{\phi} V_{\phi}^{\top}$ and $\Sigma_{\psi} = V_{\psi} D_{\psi} V_{\psi}^{\top}$. Then
\begin{align}\label{eq:sylvesters-long}
D_{\phi} ( D_{\phi} + \lambda I)^{-1} V_{\phi}^{\top} U V_{\psi}  =  V_{\phi}^{\top} U V_{\psi} D_{\psi} (D_{\psi} + \lambda I)^{-1}.
\end{align}
Define the diagonal matrices $\Lambda_{\phi} = D_{\phi} ( D_{\phi} + \lambda I)^{-1}$ and $D_{\psi} (D_{\psi} + \lambda I)^{-1}$, and define the orthogonal matrix $M = V_{\phi}^{\top} U V_{\psi}$. Equation \eqref{eq:sylvesters-long} is a homogeneous Sylvester equation:
\begin{align*}
\Lambda_{\phi} M = M \Lambda_{\psi}.
\end{align*}
Therefore $(\Lambda_{\psi})_{ii} = (\Lambda_{\phi})_{jj}$ if $M_{ij} \neq 0$. Since $f : \R_+ \to [0, 1]$ defined by $f(x) = \frac{x}{x + \lambda}$ is invertible, this implies that $(D_{\phi})_{ii} = (D_{\psi})_{jj}$ if $M_{ij} \neq 0$. From this it follows that
\begin{align*}
( D_{\phi} + \lambda I)^{-1/2} M (D_{\psi} + \lambda I)^{1/2} = M.
\end{align*}
Plugging in $M$ and rearranging, we obtain
\begin{align*}
U^{\top} V_{\phi}^{\top} ( D_{\phi} + \lambda I)^{-1/2} V_{\phi} U = V_{\psi} (D_{\psi} + \lambda I)^{-1/2} V_{\psi}^{\top},
\end{align*}
which simplifies to
\begin{align*}
U^{\top} (\Sigma_{\phi} + \lambda I)^{-1/2} U = (\Sigma_{\psi} + \lambda I)^{-1/2}.
\end{align*}
By combining this with the guarantee from Claim~\ref{claim:whitened-orthogonal} that $\phi(X) = (\Sigma_{\phi} + \lambda I)^{1/2} U (\Sigma_{\psi} + \lambda I)^{-1/2} \psi(X)$ almost surely, we obtain
\begin{align*}
\phi(X) = U \psi(X),
\end{align*}
almost surely. This shows the converse direction of the theorem.
\end{proof}

We conclude with a proof of the claim.
\begin{proof}[Proof of Claim~\ref{claim:whitened-orthogonal}]
Let $(X_1,\ldots,X_{n},\ldots)$ be an infinite sequence of i.i.d copies of $X$. For each $n$, let $$A_n = [\tilde{\phi}(X_1),\ldots,\tilde{\phi}(X_n)] \in \R^{k \times n}, \quad B_n = [\tilde{\psi}(X_1),\ldots,\tilde{\psi}(X_n)] \in \R^{l \times n}.$$ Since $d_{\lambda}(\phi,\psi) = 0$, by the characterization of $\gulp$ in Lemma~\ref{lem:gulp3} we have $\tilde{\phi}(X)^{\top} \tilde{\phi}(X') = \tilde{\psi}^{\top}(X) \tilde{\psi}(X')$ almost surely, so $A_n^{\top} A_n = B_n^{\top} B_n$ almost surely. Suppose without loss of generality that $l \leq k$. Then by Theorem~7.3.11 of \cite{horn2012matrix}, we can construct a semi-orthogonal $U_n \in \R^{l \times k}$ such that $A_n = U_n B_n$ almost surely. Define the event $$E_1 = \{A_n = U_n B_n \mbox{ for all } n \geq 1\}.$$ Taking a union bound over countably many $n$, we see that $E_1$ holds almost surely.

Define $W = \mathrm{span}\{\tilde{\psi}(X_i)\}_{i=1}^{\infty}$. We claim that there is a deterministic vector space $V \subseteq \R^l$ such that $W = V$ almost surely. Let $W'$ be an independent copy of $W$. Then $W \stackrel{d}{=} W + W'$. For any $i \in \{0,\ldots,k\}$, $$\PP[\mathrm{rank}(W) \leq i] = \PP[\mathrm{rank}(W + W') \leq i] \leq \PP[\mathrm{rank}(W) \leq i] - \PP[\mathrm{rank}(W) \leq i, \mbox{ and } W' \not\subseteq W].$$ We conclude that $\PP[\mathrm{rank}(W) \leq i, \mbox{ and } W' \not\subseteq W] = 0$ for all $i$, so $\PP[W' \not\subseteq W] = 0$ for the two independent copies. Therefore $W$ is deterministic, and equals $V$ almost surely.

Let $N = \sup \{n+1 : \mathrm{span}\{\tilde{\psi}(X_1),\ldots,\tilde{\psi}(X_n)\} = \R^l\} \cup \{1\}$. Define the event that $N$ is finite, $$E_2 = \{N < \infty\}.$$ Since we have shown that $\mathrm{span}\{\tilde{\psi}(X_i)\}_{i=1}^{\infty} = V$ almost surely, it follows that $E_2$ holds almost surely.

We now prove that the semi-orthogonal random matrix $U_N \in \R^{k \times l}$ satisfies our conditions.
Under the almost-sure events $E_1$ and $E_2$, we can write $\tilde{\psi}(X_{N+1}) = \sum_{i=1}^N \lambda_i \tilde{\psi}(X_i)$, and it holds that
\begin{align*}
\tilde{\phi}(X_{N+1}) &= U_{N+1} \tilde{\psi}(X_{N+1}) = \sum_{i=1}^N \lambda_i U_{N+1} \tilde{\psi}(X_i) = \sum_{i=1}^N \lambda_i \tilde{\phi}(X_i) = \sum_{i=1}^N \lambda_i U_N \tilde{\psi}(X_i) = U_N \tilde{\psi}(X_{N+1}).
\end{align*}
Since events $E_1$ and $E_2$ hold almost surely, and $X_{N+1}$ is independent of $N$ and $X_1,\ldots,X_N$, 
\begin{align*}\PP[\tilde{\phi}(X) = U_N \tilde{\psi}(X)] &= \PP[\tilde{\phi}(X_{N+1}) = U_N \tilde{\psi}(X_{N+1})] = 1.
\end{align*}
So we conclude that there is a deterministic semi-orthogonal matrix $U \in \R^{k \times l}$ such that $\tilde{\phi}(X) = U \tilde{\psi}(X)$ almost surely. Finally, recall that we have assumed that $\Sigma_{\phi}$ and $\Sigma_{\psi}$ are invertible. Therefore $k = \rank(\Sigma_{\phi}) = \rank(\Sigma_{\tilde{\phi}}) \leq \min(\rank(U), \rank(\Sigma_{\tilde{\psi}})) = \min(\rank(U), \rank(\Sigma_{\psi})) = \min(\rank(U), l)$. We conclude that $k = l$, and $U \in \R^{k \times k}$ is an orthogonal transformation.
\end{proof}

For $\lambda = 0$, we also characterize when the $\gulp$ distance is zero. Since $\gulp$ corresponds to the $\cca$ distance, with slightly different normalization, this is also a characterization of when the $\cca$ distance is zero.
\begin{lem}[Characterization for when $\gulp{}$ is zero, for $\lambda = 0$]\label{lem:when-cca-is-zero}
If $\lambda = 0$, the two representation maps $\phi : \R^d \to \R^k$ and $\psi : \R^d \to \R^l$ have zero $\gulp$ distance, $d_0(\phi,\psi) = 0$, if and only if $k = l$ and there exists an invertible linear transformation $M \in \mathbb{R}^{k \times k}$ such that $\phi(X) = M \psi(X)$ a.s.
\end{lem}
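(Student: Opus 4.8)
The plan is to follow the same template as the proof of Lemma~\ref{lem:when-gulp-is-zero}, but to exploit the fact that the $\lambda=0$ case is genuinely easier: there is no regularization to undo, so the homogeneous Sylvester equation step disappears and the whitened orthogonal map already hands us the desired invertible $M$. Throughout we use that for $\lambda=0$ the regularized inverse $\Sigma_\phi^{-\lambda}$ is literally $\Sigma_\phi^{-1}$, which exists by the standing assumption that the covariance matrices are invertible.

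\textbf{Easy direction.} Suppose $k=\ell$ and $\phi(X)=M\psi(X)$ a.s.\ for some invertible $M\in\R^{k\times k}$. Then $\Sigma_\phi=M\Sigma_\psi M^\top$ is invertible with $\Sigma_\phi^{-1}=M^{-\top}\Sigma_\psi^{-1}M^{-1}$, and substituting into the $\lambda=0$ characterization of Lemma~\ref{lem:gulp3} gives
\[
\phi(X)^\top\Sigma_\phi^{-1}\phi(X') = \psi(X)^\top M^\top M^{-\top}\Sigma_\psi^{-1}M^{-1}M\psi(X') = \psi(X)^\top\Sigma_\psi^{-1}\psi(X')
\]
almost surely, so $d_0^2(\phi,\psi)=\E\big(\phi(X)^\top\Sigma_\phi^{-1}\phi(X')-\psi(X)^\top\Sigma_\psi^{-1}\psi(X')\big)^2=0$. (Alternatively this is immediate from the orthogonal-invariance style argument applied to $d_0$, but the direct computation is cleanest.)

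\textbf{Hard direction and main obstacle.} Suppose $d_0(\phi,\psi)=0$. Set $\tilde\phi=\Sigma_\phi^{-1/2}\phi$ and $\tilde\psi=\Sigma_\psi^{-1/2}\psi$, which are well defined since the covariances are invertible and satisfy $\Sigma_{\tilde\phi}=I_k$, $\Sigma_{\tilde\psi}=I_\ell$. By Lemma~\ref{lem:gulp3}, the hypothesis $d_0(\phi,\psi)=0$ is equivalent to $\tilde\phi(X)^\top\tilde\phi(X')=\tilde\psi(X)^\top\tilde\psi(X')$ a.s.\ over independent $X,X'$. This is exactly the hypothesis feeding Claim~\ref{claim:whitened-orthogonal}, and the proof of that claim goes through verbatim with $(\Sigma_\phi+\lambda I)^{-1/2}$ replaced by $\Sigma_\phi^{-1/2}$ (and likewise for $\psi$): the only ingredients it uses are that the empirical Gram matrices $A_n^\top A_n=B_n^\top B_n$ agree almost surely and that the whitened covariance matrices are full rank, both of which hold here. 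So the real work is just to check that the measure-theoretic scaffolding of that claim (the countable union bound over i.i.d.\ copies, the almost-sure determinism of $\mathrm{span}\{\tilde\psi(X_i)\}$, and the finiteness of the stopping index $N$) is insensitive to which positive-definite whitening we use — this is the step I expect to spell out most carefully, though it is routine. We then conclude $k=\ell$ and obtain an orthogonal $U\in\R^{k\times k}$ with $\tilde\phi(X)=U\tilde\psi(X)$ a.s.; unwhitening yields $\phi(X)=\Sigma_\phi^{1/2}U\Sigma_\psi^{-1/2}\psi(X)$ a.s., and $M:=\Sigma_\phi^{1/2}U\Sigma_\psi^{-1/2}$ is invertible as a product of invertible matrices, completing the proof.
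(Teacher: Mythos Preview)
Your proposal is correct and follows essentially the same route as the paper: for the hard direction both whiten to $\tilde\phi=\Sigma_\phi^{-1/2}\phi$, $\tilde\psi=\Sigma_\psi^{-1/2}\psi$, invoke the reasoning of Claim~\ref{claim:whitened-orthogonal} to get an orthogonal $U$ with $\tilde\phi(X)=U\tilde\psi(X)$ a.s., and then set $M=\Sigma_\phi^{1/2}U\Sigma_\psi^{-1/2}$, while the easy direction is a direct verification (the paper uses the trace formula of Proposition~\ref{prop:closed-form}, you use the kernel characterization of Lemma~\ref{lem:gulp3}, which is an equivalent and equally clean check). Your observation that the Sylvester step is unnecessary here is exactly the point.
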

\begin{proof}
For the ``easy'' direction, suppose that $k = l$ and $\phi = M \psi$ for an invertible $M \in \R^{k \times k}$. Then $\Sigma_{\phi} = M \Sigma_{\psi} M^{\top}$ and $\Sigma_{\phi \psi} = M \Sigma_{\psi}$. Using the characterization of $\gulp$ from Proposition~\ref{prop:closed-form}, we obtain
\begin{align*}
d_0^2(\phi,\psi) &= \tr(\Sigma_{\phi}^{-1} \Sigma_{\phi} \Sigma_{\phi}^{-1} \Sigma_{\phi}) + \tr(\Sigma_{\psi}^{-1} \Sigma_{\psi} \Sigma_{\psi}^{-1} \Sigma_{\psi}) - 2 \tr(\Sigma_{\phi}^{-1} \Sigma_{\phi\psi} \Sigma_{\psi}^{-1} \Sigma_{\phi \psi}^{\top}) \\
&= \tr(I_k) + \tr(I_k) - 2 \tr((M^{-1})^{\top} \Sigma_{\psi}^{-1} M^{-1} M \Sigma_{\psi} \Sigma_{\psi}^{-1} \Sigma_{\psi} (M^{-1})^{\top}) \\
&= k + k - 2 \tr(I_k) \\
&= 0.
\end{align*}
For the converse direction, we construct the representations $\tilde{\phi} = \Sigma^{-1/2}_{\phi} \phi$ and $\tilde{\psi} = \Sigma^{-1/2}_{\psi} \psi$. By the characterization of $\gulp$ in Lemma~\ref{lem:gulp3}, the condition $d_0(\phi,\psi) = 0$ implies that $\tilde{\phi}(X)^{\top}\tilde{\phi}(X') = \tilde{\psi}(X)^{\top}\tilde{\psi}(X')$, almost surely over independent $X,X' \sim P_X$. Therefore, analogous reasoning to Claim~\ref{claim:whitened-orthogonal} applies, and implies that $k = l$ and that there is an orthogonal transformation $U$ such that $\tilde{\phi}(X) = U \tilde{\psi}$ almost surely. So $\phi(X) = \Sigma_{\phi}^{1/2} U \Sigma_{\psi}^{-1/2} \psi(X)$, almost surely.
\end{proof}

\subsection{Convergence of plug-in estimator, proof of Theorem~\ref{thm:plug-in-concentration}}\label{app:plug-in-proof}

In order to prove Theorem~\ref{thm:plug-in-concentration}, we first show the following lemma.

\begin{lem}\label{lem:plug-in-concentration}
There is a universal constant $C > 0$, such that for any $B$ such that $\|\phi(X)\|^2, \|\psi(X)\|^2 \leq B$ almost surely, and for any $\lambda > 0$, the plug-in estimator $\hat{d}^2_{\lambda,n}$ converges to the population distance $d_{\lambda}^2$, with the following guarantee for any $t > 0$ and any number of samples $n > 0$,
\begin{align*}
\PP[|\hat{d}^2_{\lambda,n}(\phi,\psi) - d_{\lambda}^2(\phi,\psi)| \geq t + 4B^2/(n \lambda^2)] \leq \exp(-C n t^2 \lambda^4 /B^4) + (k + l)\exp(-C n t^2 \lambda^6 / B^6).
\end{align*}
\end{lem}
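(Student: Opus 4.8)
The plan is to prove the bound by interpolating between $\hat{d}^2_{\lambda,n}$ and $d^2_\lambda$ through the auxiliary random quantity
\[
\tilde d^2 := \tr(\Sigma_\phi^{-\lambda}\hat\Sigma_\phi\Sigma_\phi^{-\lambda}\hat\Sigma_\phi) + \tr(\Sigma_\psi^{-\lambda}\hat\Sigma_\psi\Sigma_\psi^{-\lambda}\hat\Sigma_\psi) - 2\tr(\Sigma_\phi^{-\lambda}\hat\Sigma_{\phi\psi}\Sigma_\psi^{-\lambda}\hat\Sigma_{\phi\psi}^\top),
\]
obtained from $\hat d^2_{\lambda,n}$ by replacing the empirical regularized inverses $\hat\Sigma_\phi^{-\lambda},\hat\Sigma_\psi^{-\lambda}$ by the population ones $\Sigma_\phi^{-\lambda},\Sigma_\psi^{-\lambda}$, while keeping all (cross-)covariance matrices empirical. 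I would then bound the three pieces $|\hat d^2_{\lambda,n}-\tilde d^2|$, $|\tilde d^2-\E\tilde d^2|$ and $|\E\tilde d^2 - d^2_\lambda|$ separately: the first produces the dimension-dependent $(k+l)\exp(-Cnt^2\lambda^6/B^6)$ tail, the second the dimension-free $\exp(-Cnt^2\lambda^4/B^4)$ tail, and the third the deterministic $4B^2/(n\lambda^2)$ offset. The guiding bookkeeping principle used everywhere is to measure perturbation factors in nuclear norm $\|\cdot\|_*$ and the remaining factors in operator norm $\|\cdot\|_{\mathrm{op}}$, together with the elementary bounds $\|\Sigma_\phi^{-\lambda}\|_{\mathrm{op}},\|\hat\Sigma_\phi^{-\lambda}\|_{\mathrm{op}}\le 1/\lambda$, $\|\hat\Sigma_\phi\|_* = \frac1n\sum_i\|\phi(X_i)\|^2\le B$, and $\|\hat\Sigma_{\phi\psi}\|_*\le\frac1n\sum_i\|\phi(X_i)\|\,\|\psi(X_i)\|\le B$ (and likewise with $\psi$).

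For the offset I would expand each trace in $\tilde d^2$ as a double sum over the $n$ samples. Taking expectations, the off-diagonal ($i\ne j$) terms reproduce exactly the population formula of Proposition~\ref{prop:closed-form} (using $\E[\phi(X)\phi(X)^\top]=\Sigma_\phi$ and $\E[\phi(X)\psi(X)^\top]=\Sigma_{\phi\psi}$), while the $n$ diagonal terms contribute a remainder which, after a short computation using $\|\Sigma_\phi^{-\lambda}\|_{\mathrm{op}}\le1/\lambda$ and $\|\phi(X)\|^2,\|\psi(X)\|^2\le B$, is at most $4B^2/(n\lambda^2)$ in absolute value; hence $|\E\tilde d^2 - d^2_\lambda|\le 4B^2/(n\lambda^2)$. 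For the fluctuation, replacing a single sample $X_i$ by an independent copy changes each of $\hat\Sigma_\phi,\hat\Sigma_\psi,\hat\Sigma_{\phi\psi}$ by a matrix of rank at most $2$ and nuclear norm at most $2B/n$; since the regularized inverses are deterministic in $\tilde d^2$, such a change perturbs the $\phi$-trace by at most $\|\Sigma_\phi^{-\lambda}\|_{\mathrm{op}}^2\,\|\hat\Sigma_\phi\|_{\mathrm{op}}\cdot(2B/n)$ up to an $O(B^2/(n^2\lambda^2))$ term, i.e.\ by $O(B^2/(n\lambda^2))$, using $\|\hat\Sigma_\phi\|_{\mathrm{op}}\le\|\hat\Sigma_\phi\|_*\le B$, and similarly for the other two traces. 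Thus $\tilde d^2$ has bounded differences with constants $c_i\lesssim B^2/(n\lambda^2)$, so $\sum_i c_i^2\lesssim B^4/(n\lambda^4)$ and McDiarmid's inequality gives $\PP[|\tilde d^2-\E\tilde d^2|\ge t/2]\le 2\exp(-Cnt^2\lambda^4/B^4)$.

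To close the gap $|\hat d^2_{\lambda,n}-\tilde d^2|$, I would work on the event $E=\{\|\hat\Sigma_\phi-\Sigma_\phi\|_{\mathrm{op}}\le\epsilon,\ \|\hat\Sigma_\psi-\Sigma_\psi\|_{\mathrm{op}}\le\epsilon\}$. The resolvent identity gives $\|\hat\Sigma_\phi^{-\lambda}-\Sigma_\phi^{-\lambda}\|_{\mathrm{op}}\le\epsilon/\lambda^2$ (and likewise for $\psi$), so telescoping the two inverse replacements inside each trace and using the nuclear-norm bounds on $\hat\Sigma_\phi,\hat\Sigma_\psi,\hat\Sigma_{\phi\psi}$ together with $\|\hat\Sigma_\phi^{-\lambda}\|_{\mathrm{op}}\le 1/\lambda$ shows $|\hat d^2_{\lambda,n}-\tilde d^2|\lesssim \epsilon B^2/\lambda^3$ on $E$. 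Choosing $\epsilon\asymp\lambda^3 t/B^2$ makes this at most $t/2$, while matrix Hoeffding's inequality (the summands $\phi(X_i)\phi(X_i)^\top-\Sigma_\phi$ have operator norm at most $2B$) bounds $\PP[E^c]\le (k+l)\exp(-Cnt^2\lambda^6/B^6)$. Combining the three pieces via $|\hat d^2_{\lambda,n}-d^2_\lambda|\le|\hat d^2_{\lambda,n}-\tilde d^2|+|\tilde d^2-\E\tilde d^2|+|\E\tilde d^2 - d^2_\lambda|$ and a union bound over $E^c$ and the McDiarmid event yields the stated inequality.

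I expect the main obstacle to be exactly the norm bookkeeping in the offset and fluctuation steps: a naive use of $|\tr(A\Delta)|\le\|A\|_{\mathrm{op}}\|\Delta\|_*\le (k\vee l)\,\|A\|_{\mathrm{op}}\|\Delta\|_{\mathrm{op}}$ would inject a spurious dimension factor into the McDiarmid constants, and hence into the final bound, so it is essential that the single-sample updates $\Delta$ are genuinely low-rank (so $\|\Delta\|_*=O(B/n)$, not $O((k\vee l)B/n)$) and that the empirical covariances, being positive semidefinite with trace $O(B)$, have nuclear norm $O(B)$, not $O((k\vee l)B)$. A secondary subtlety is that each trace in $\hat d^2_{\lambda,n}$ is quadratic in the empirical covariances ($\hat\Sigma_\phi$ and $\hat\Sigma_{\phi\psi}$ each appear twice), so the offset computation genuinely requires the diagonal/off-diagonal split rather than a naive substitution of $\E\hat\Sigma_\phi=\Sigma_\phi$.
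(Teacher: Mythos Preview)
Your proposal is correct and follows essentially the same approach as the paper. Your intermediate quantity $\tilde d^2$ coincides with the paper's $\tilde d^2_{\lambda,n}$ (the paper writes it as a double sum $\frac{1}{n^2}\sum_{i,j}(\phi(X_i)^\top\Sigma_\phi^{-\lambda}\phi(X_j)-\psi(X_i)^\top\Sigma_\psi^{-\lambda}\psi(X_j))^2$, which expands to exactly your trace expression), and the three-way split, the matrix Hoeffding bound on $\|\hat\Sigma_\phi^{-\lambda}-\Sigma_\phi^{-\lambda}\|_{\mathrm{op}}$, the McDiarmid argument for $\tilde d^2$, and the diagonal/off-diagonal bias computation all match the paper's Claims~\ref{claim:term-1} and~\ref{claim:term-2}. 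The only cosmetic difference is that the paper extracts the McDiarmid bounded-difference constant directly from the termwise bound $|s_{ij}|\le 4B^2/(n^2\lambda^2)$ in the double-sum form, whereas you obtain the same $O(B^2/(n\lambda^2))$ constant via the rank-$2$/nuclear-norm perturbation argument; your phrasing makes the dimension-independence slightly more explicit, but the content is identical.
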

\begin{proof}
By the expanding the square and using cyclicity and linearity of the trace, similarly to the proof of Proposition~\ref{prop:closed-form}, the plug-in estimator can alternatively be written as:
\begin{align}\label{eq:d-hat}
\hat{d}^2_{\lambda,n}(\phi,\psi) = \frac{1}{n^2} \sum_{i,j=1}^n ( \phi(X_i)^{\top} (\hat{\Sigma}_{\phi} + \lambda I)^{-1} \phi(X_j) - \psi(X_i)^{\top} (\hat{\Sigma}_{\psi} + \lambda I)^{-1}) \psi(X_j))^2.
\end{align}
For the analysis, also define the plug-in estimator, but with the true covariance matrices, 
\begin{align}\label{eq:d-lambda-tilde}\tilde{d}^2_{\lambda,n}(\phi,\psi) = \frac{1}{n^2} \sum_{i,j=1}^n ( \phi(X_i)^{\top} (\Sigma_{\phi} + \lambda I)^{-1} \phi(X_j) - \psi(X_i)^{\top} (\Sigma_{\psi} + \lambda I)^{-1}) \psi(X_j))^2.\end{align}
We bound the error between the plug-in estimator and the true distance by the triangle inequality:
\begin{align}\label{ineq:triangle-plug-in}
|\hat{d}^2_{\lambda,n}(\phi,\psi) - d_{\lambda}^2(\phi,\psi)| &\leq \underbrace{|\hat{d}^2_{\lambda,n}(\phi,\psi) - \tilde{d}^2_{\lambda,n}(\phi,\psi)|}_{\text{Term 1}} + \underbrace{|\tilde{d}^2_{\lambda,n}(\phi,\psi) - d_{\lambda}^2(\phi,\psi)|}_{\text{Term 2}}.
\end{align}

We bound Term 1 and Term 2 separately, stating our bounds in the following claims.
\begin{claim}[Bound on Term 1]\label{claim:term-1}
Under the conditions of Lemma~\ref{lem:plug-in-concentration}, for any $t > 0$,
\begin{align*}
\PP[|\hat{d}^2_{\lambda,n}(\phi,\psi) - \tilde{d}^2_{\lambda,n}(\phi,\psi)| \geq  t] \leq (k + l)e^{-n t^2 \lambda^6 / (2048 B^6)}
\end{align*}
\end{claim}
\begin{proof}For any $i,j \in [n]$, define $\hat{T}_{ij,\phi} = \phi(X_i)^{\top} (\hat{\Sigma}_{\phi} + \lambda I)^{-1} \phi(X_j)$ and $T_{ij,\phi} = \phi(X_i)^{\top} \Sigma_{\phi} + \lambda I)^{-1} \phi(X_j)$. We have
\begin{align*}
|\hat{T}_{ij,\phi} - T_{ij,\phi}|
&\leq B \|(\hat{\Sigma}_{\phi} + \lambda I)^{-1} - (\Sigma_{\phi} + \lambda I)^{-1}\|,
\end{align*}
and
\begin{align*}
|\hat{T}_{ij,\phi}|, |T_{ij,\phi}| \leq B \|(\hat{\Sigma}_{\phi} + \lambda I)^{-1}\| \leq B / \lambda.
\end{align*}
Analogous definitions and inequalities hold if we replace $\phi$ by $\psi$.
Therefore,
\begin{align*}
|\hat{d}^2_{\lambda,n}(\phi,\psi) - &\tilde{d}^2_{\lambda,n}(\phi,\psi)| \\
&= |\frac{1}{n^2} \sum_{i,j=1}^n (\hat{T}_{ij,\phi} - \hat{T}_{ij,\psi})^2 - (T_{ij,\phi} - T_{ij,\psi})^2| \\
&= |\frac{1}{n^2} \sum_{i,j=1}^n (\hat{T}_{ij,\phi} - \hat{T}_{ij,\psi} - T_{ij,\phi} + T_{ij,\psi})(\hat{T}_{ij,\phi} - \hat{T}_{ij,\psi} + T_{ij,\phi} - T_{ij,\psi})| \\
&\leq 4B^2(\|(\hat{\Sigma}_{\phi} + \lambda I)^{-1} - (\Sigma_{\phi} + \lambda I)^{-1}\| + \|(\hat{\Sigma}_{\psi} + \lambda I)^{-1} - (\Sigma_{\psi} + \lambda I)^{-1}\|) / \lambda.
\end{align*}
So the bound on Term 1 follows from combining with the following technical claim:
\begin{claim}\label{claim:plug-in-technical-matrix-hoeffding}
For any $t > 0$,
\begin{align}\label{eq:sigma-phi-lambda-I-inv-operator}
\PP[\|(\hat{\Sigma}_{\phi} + \lambda I)^{-1} - (\Sigma_{\phi} + \lambda I)^{-1}\| \geq t] \leq k e^{-n t^2 \lambda^4 / (32B^2)}.
\end{align}
\begin{align}\label{eq:sigma-psi-lambda-I-inv-operator}
\PP[\|(\hat{\Sigma}_{\psi} + \lambda I)^{-1} - (\Sigma_{\psi} + \lambda I)^{-1}\| \geq t] \leq l e^{-n t^2 \lambda^4 / (32B^2)}.
\end{align}
\end{claim}
\end{proof}

\begin{proof}[Proof of Claim~\ref{claim:plug-in-technical-matrix-hoeffding}]
We prove the claim for $\phi$, since the reasoning for $\psi$ is analogous. First, let us prove that $\hat{\Sigma}_{\phi}$ concentrates around $\Sigma_{\phi}$ in operator norm. For each $i \in [n]$, let $Z_i = \frac{1}{n}\left(\phi(X_i) \phi(X_i)^{\top} - \Sigma_{\phi}\right)$, which is self-adjoint, satisfies $\E[Z_i] = 0$ and has operator norm bounded by $\|Z_i^2\| \leq \frac{1}{n^2}\left(2\|\phi(X_i) \phi(X_i)^{\top}\|^2 + 2\|\Sigma_{\phi}\|^2\right) \leq 4B^2/n^2$ almost surely. So applying the matrix Hoeffding inequality (Theorem~1.3 of \cite{tropp2012user}) to $\hat{\Sigma}_{\phi} = \sum_{i=1}^n Z_i$, we have, for any $t > 0$,
\begin{align*}
\PP[\|\hat{\Sigma}_{\phi}- \Sigma_{\phi}\| \geq t] \leq k e^{-t^2 n / (32 B^2)}.
\end{align*}
Now let us show that $(\hat{\Sigma}_{\phi} + \lambda I)^{-1}$ concentrates to $(\Sigma_{\phi} + \lambda I)^{-1}$ in operator norm. Since $0 \lesssim \hat{\Sigma}_{\phi}, \Sigma_{\phi}$, for any $v \in \R^{k}$, we have
\begin{align*}
\|((\hat{\Sigma}_{\phi} + \lambda I)^{-1} - (\Sigma_{\phi} + \lambda I)^{-1})v\| &\leq \frac{1}{\lambda} \|(I - (\hat{\Sigma}_{\phi} + \lambda I)(\Sigma_{\phi} + \lambda I)^{-1})v\| \\
&= \frac{1}{\lambda} \|((\hat{\Sigma}_{\phi} - \Sigma_{\phi})(\Sigma_{\phi} + \lambda I)^{-1})v\| \\
&\leq \frac{1}{\lambda^2} \|\hat{\Sigma}_{\phi} - \Sigma_{\phi}\|\|v\|.
\end{align*}
\end{proof}

We now bound the second term in \eqref{ineq:triangle-plug-in}.

\begin{claim}[Bound on Term 2]\label{claim:term-2}
Under the conditions of Lemma~\ref{lem:plug-in-concentration}, for any $t > 0$,
\begin{align*}
\PP[|\tilde{d}^2_{\lambda,n}(\phi,\psi) - d_{\lambda}^2(\phi,\psi)| \geq 4B^2 / (n \lambda^2) + t] \leq \exp(-t^2 \lambda^4 n / ( 8 B^4)).
\end{align*}
\end{claim}
\begin{proof}
Write $\tilde{d}^2_{\lambda,n}(\phi,\psi) = \sum_{i,j=1}^n s_{ij}$, where $$s_{ij} = \frac{1}{n^2} ( \phi(X_i)^{\top} (\Sigma_{\phi} + \lambda I)^{-1} \phi(X_j) - \psi(X_i)^{\top} (\Sigma_{\psi} + \lambda I)^{-1}) \psi(X_j))^2$$ is the $i,j$ term in the sum. Since $\|(\hat{\Sigma}_{\phi} + \lambda I)^{-1}\|, \|(\hat{\Sigma}_{\psi} + \lambda I)^{-1}\| \leq 1/\lambda$, and $\|\phi(X_i)\|^2, \|\psi(X_i)\|^2 \leq B$, we have almost surely $$|s_{ij}| \leq \frac{4B^2}{n^2 \lambda^2}.$$ Furthermore, term $s_{ij}$ only depends on $X_i$ and $X_j$. Therefore, by McDiarmid's inequality,
\begin{align}\label{eq:plug-in-helper-1}
\PP[|\tilde{d}^2_{\lambda,n}(\phi,\psi) - \E[\tilde{d}^2_{\lambda,n}(\phi,\psi)]| \geq t] \leq \exp(-t^2 \lambda^4 n / ( 8 B^4)),
\end{align}
where we have used that $|\sum_{j=1}^n s_{ij}| \leq 4B^2 / (n \lambda^2)$ for each $i$.
Finally, we bound the difference between $\tilde{d}^2_{\lambda,n}$ and $d_{\lambda}^2$ in expectation over the samples. Notice that if $i \neq j$ we have $\E[s_{ij}] = d_{\lambda}^2(\phi,\psi) / n^2$. So the only terms that can add bias are the diagonal terms $s_{ii}$, so
\begin{align}\label{eq:plug-in-helper-2}
|d_{\lambda}^2(\phi,\psi) - \E[\tilde{d}^2_{\lambda,n}(\phi,\psi)]| \leq \sum_{i=1}^n |s_{ii}| \leq 4B^2 / (n \lambda^2)
\end{align}
Combining \eqref{eq:plug-in-helper-1} and \eqref{eq:plug-in-helper-2} proves the claim.
\end{proof}

Combining Claims~\ref{claim:term-1} and \ref{claim:term-2} with the triangle inequality \eqref{ineq:triangle-plug-in} proves Lemma~\ref{lem:plug-in-concentration}.

\end{proof}

Theorem~\ref{thm:plug-in-concentration} is now a simple consequence of Lemma~\ref{lem:plug-in-concentration}.

\begin{proof}[Proof of Theorem~\ref{thm:plug-in-concentration}]
Under the conditions of Theorem~\ref{thm:plug-in-concentration}, we have $\|\phi(X)\|^2, \|\psi(X)\|^2 \leq 1$ almost surely and $\lambda \in (0,1)$. For any $t > 0$, Lemma~\ref{lem:plug-in-concentration} implies 
\begin{align*}
\PP[|\hat{d}^2_{\lambda,n}(\phi,\psi) - d_{\lambda}^2(\phi,\psi)| \geq t + 4/(n \lambda^2)] \leq \exp(-C n t^2 \lambda^4) + (k + l)\exp(-C n t^2 \lambda^6).
\end{align*}
Let $0 < \delta \leq 1$ and let $t = \frac{2}{C\lambda^3} \sqrt{\frac{\log((k+l)/\delta)}{n}}$. Then 
$$\PP[|\hat{d}_{\lambda,n}^2(\phi,\psi) - d_{\lambda}^2(\phi,\psi)| \geq t + 4/(n\lambda^2)] < \delta / 2 + \delta / 2 = \delta\,.$$
Finally, since $\lambda \in (0,1)$ we have $$\frac{1}{\lambda^3} \sqrt{\frac{\log((k+l)/\delta)}{n}} \gtrsim t + 4 / (n\lambda^2),$$ which proves the theorem.
\end{proof}

\subsection{Transfer learning distance under kernel ridge regression} \label{app:kernel_ridge_reg}
Consider comparing the predictors output by kernel ridge regression with some kernel $K(x,y)=\langle\tau(x),\tau(y) \rangle$, applied to different representations. This corresponds to the case $\mathcal{F}=\{f_\beta(\cdot): \: f_{\beta}(x) = \beta^{\top} \tau(x)\}$ and $r(f_{\beta}) = ||\beta||_2^2 $. Although $\tau$ may be high or even infinite dimensional, we now show that computing $\gulp{}$ under this $\mathcal{F}$ requires only access to $K(\cdot, \cdot)$, and not $\tau$ directly. 

This is equivalent to defining new representations $\phi' = \tau \circ \phi$ and $\psi' = \tau \circ \phi$, and computing $d_\phi(\phi', \psi')$. However, $\tau$ may be high or even infinite-dimensional; traditionally in kernel ridge regression, one only wishes to compute $K(\cdot, \cdot)$ but never $\tau$ explicitly. Here, we show that  $d_\lambda(\phi, \psi)$ is computable in terms of only inner products $\langle \phi(x), \phi(y) \rangle$ and $\langle \psi(x), \psi(y) \rangle$, or put differently, that $d_\lambda(\phi, \psi)$ can be written in terms of only the kernel functions associated with $\phi$ and $\psi$. By applying this result to $\phi'$ and $\psi'$, this implies we only need to access $\langle \phi'(x), \phi'(y) \rangle = \langle \tau(\phi(x)), \tau(\phi(y)) \rangle = K(\phi(x), \phi(y))$.

Recall that $d_\lambda(\phi, \psi)^2 = \tr((\Sigma_{\phi} + \lambda I)^{-1} \Sigma_{\phi} (\Sigma_{\phi} + \lambda I)^{-1} \Sigma_{\phi})  + \tr((\Sigma_{\psi} + \lambda I)^{-1} \Sigma_{\psi} (\Sigma_{\psi} + \lambda I)^{-1} \Sigma_{\psi})  - 2\tr((\Sigma_{\phi} + \lambda I)^{-1} \Sigma_{\phi \psi} (\Sigma_{\psi} + \lambda I)^{-1} \Sigma_{\phi \psi}^\top)$.
We prove the result for the finite sample case discussed in \ref{sec:plug-in}, where we approximate $\Sigma_\phi = VV^{\top}$, $\Sigma_\psi = WW^{\top}$. Here, $V$ consists of all the samples $\phi(x)$, with number of columns equal to the number of samples. 
By the kernel trick, $(\Sigma_\phi + \lambda I)^{-1}\Sigma_\phi = (VV^{\top}+\lambda I)^{-1}VV^{\top} = V(V^{\top}V+\lambda I)^{-1}V^{\top}$. Thus:
\begin{align*}
    \tr((\Sigma_{\phi} + \lambda I)^{-1} \Sigma_{\phi} (\Sigma_{\phi} + \lambda I)^{-1} \Sigma_{\phi}) &= \tr(V(V^{\top}V+\lambda I)^{-1}V^{\top}V(V^{\top}V+\lambda I)^{-1}V^{\top}) \\
    &= \tr((V^{\top}V+\lambda I)^{-1}V^{\top}V(V^{\top}V+\lambda I)^{-1}V^{\top}V)
\end{align*}
This term is expressible in terms of only $(V^{\top}V)_{ij}$, which only depends on $\langle \phi(x_i),\phi(x_j)\rangle$ for samples $x_i$ and $x_j$. Similar reasoning holds for the term $\tr((\Sigma_{\psi} + \lambda I)^{-1} \Sigma_{\psi} (\Sigma_{\psi} + \lambda I)^{-1} \Sigma_{\psi})$. Finally, consider the cross-term:
\begin{align*}
    \tr((\Sigma_{\phi} + \lambda I)^{-1} \Sigma_{\phi \psi} (\Sigma_{\psi} + \lambda I)^{-1} \Sigma_{\phi \psi}^\top) &= \tr((VV^{\top}+\lambda I)^{-1}VW^{\top}(WW^{\top}+\lambda I)^{-1}WV^{\top}) \\
    &= \tr(V(V^{\top}V+\lambda I)^{-1}W^{\top} W(W^{\top}W+\lambda I)^{-1}) \\
    &= \tr((V^{\top}V+\lambda I)^{-1}V^{\top}V(W^{\top}W+\lambda I)^{-1}W^{\top}W)
\end{align*}
Again, this term is expressible only in terms of $V^{\top}V$ and $W^{\top}W$.

\section{Supplementary experiments}

\subsection{Experimental Setup}\label{app:experimental_setup}
Here we briefly describe all of the network architectures used in this paper as well as the procedure for training them. All experiments were run on Nvidia Volta V100 GPUs. 

\paragraph{Networks on MNIST} For the MNIST handwritten digit database~\cite{deng2012mnist}, we initialize 400 fully-connected networks with ReLU activations. Each networks accepts a flattened $28 \times 28$ image (784 grayscale pixels) as input and outputs at its last layer a vector of 10 probabilities for a given digit 1-10. The number of hidden layers in the networks range from 1 to 10 and the widths of all hidden layers are constant and range from 100 to 1000 in multiples of 100. Each model architecture with a fixed width and depth is randomly initialized 4 separate times with uniform Kaiming initialization~\cite{he2015delving} and zero bias. Every network is trained for 50 epochs and a batch size of 100 on all 60,000 images of the MNIST train set using the Adam optimizer~\cite{kingma2014adam} with a learning rate of $10^{-4}$.

\paragraph{Networks on ImageNet} For the ImageNet Object Localization Challenge~\cite{krizhevsky2012imagenet}, we use 37 state-of-the-art models downloaded both in untrained and pretrained form from the PyTorch database of models\footnote{\href{https://pytorch.org/vision/stable/models.html\#classification}{https://pytorch.org/vision/stable/models.html\#classification}}. All models can be separated into the following classes
\begin{itemize}
    \item \underline{ResNets}: regnet\_x\_16gf, regnet\_x\_1\_6gf, regnet\_x\_32gf, regnetx\_3\_2\_gf, regnet\_x\_400mf, regnet\_x\_800mf, regnet\_x\_8gf, regnet\_y\_16gf, regnet\_y\_1\_6gf, regnet\_y\_32gf, regnet\_y\_3\_2gf, regnet\_y\_400mf, regnet\_y\_800mf, regnet\_y\_8gf, resnet18, resnext50\_32x4d, wide\_resnet50\_2
    
    \item \underline{EfficientNets}: efficientnet\_b0, efficientnet\_b1, efficientnet\_b2, efficientnet\_b3, efficientnet\_b4, efficientnet\_b5, efficientnet\_b6, efficientnet\_b7
    
    \item \underline{MobileNets}: mobilenet\_v2, mobilenet\_v3\_small, mobilenet\_v3\_large
    
    \item \underline{ConvNeXts}: convnext\_base, convnext\_tiny, convnext\_small, convnext\_large
    
    \item \underline{Miscellaneous}: alexnet, googlenet, inception, mnasnet, vgg16
\end{itemize}
All models accept 3-channel RGB images of size $224 \times 224$ (i.e. total dimension $3 \times 224 \times 224$). We normalize the 1,281,119 images in the train set of ImageNet to have mean $(0.485, 0.456, 0.406)$ and standard deviation $(0.229, 0.224, 0.225)$ in each RGB channel. Every models embeds the images into a latent space with dimension ranging from 400 to 4096 depending on the architecture.

\paragraph{Networks on CIFAR}
For CIFAR \cite{krizhevsky2009learning}, we train 16 ResNet18 architectures from independent, random initializations for 50 epochs each using the FFCV library \cite{leclerc2022ffcv}. They were trained with batch size 512, learning rate $0.5$ on a cyclic schedule, momentum parameter $0.9$, and with weight decay parameter $5e-4$. 

\subsection{Relationship of $\gulp{}$ to other distances}\label{app:relationships}

\paragraph{Embeddings of ImageNet} Figure~\ref{fig:imagenet-relationships} of the main text compares the $\cka$, $\cca$, and $\gulp$ distances between pairs of representations of 37 ImageNet representations, estimated from 10,000 samples. In Figure~\ref{fig:imagenet-relationships-extended}, we extend the comparison to $\pwcca$ and $\procrustes$. We note that at certain $\lambda$, our distance has near-linear relationships with $\procrustes$ and $\cka$.

\begin{figure}[h!]
    \centering
    \includegraphics[scale=0.28]{images/cca_vs_gulp_extended_imagenet.pdf}
    \includegraphics[scale=0.28]{images/cka_vs_gulp_extended_imagenet.pdf}
    \includegraphics[scale=0.28]{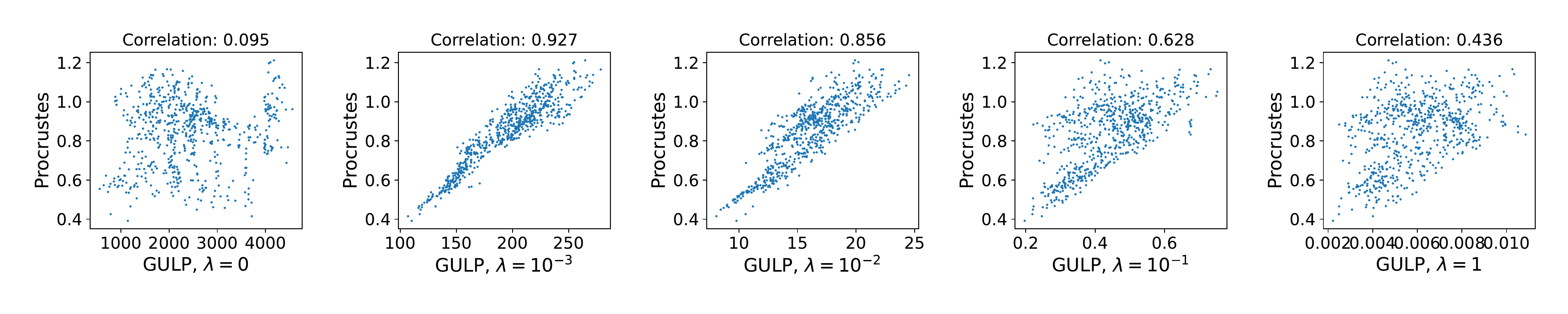}
    \includegraphics[scale=0.28]{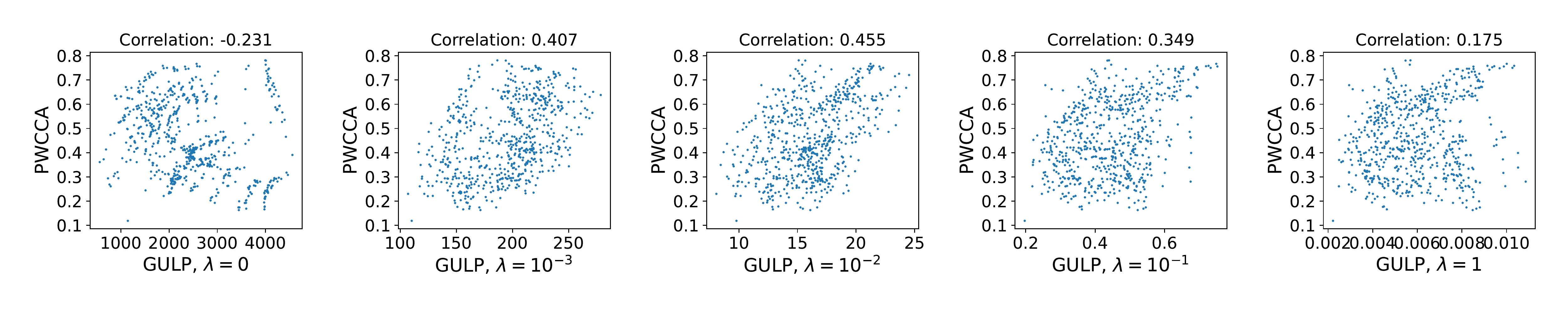}
    \caption{Scatter plots showing relationships between network distances on ImageNet. Each point is a pair of ImageNet representations, and the $x$ and $y$ coordinates correspond to two distances that are being compared. There is a surprising near-linear relationship between $\procrustes$ and $\gulp$ for intermediate $\lambda$. The title of each plot shows the Pearson correlation coefficient.}
    \label{fig:imagenet-relationships-extended}
\end{figure}

\paragraph{Embeddings of MNIST} In Figure~\ref{fig:mnist-relationships-extended}, we repeat the same experiment for MNIST embeddings with trained fully-connected networks of depths in the range from 1 to 10, and widths in $\{200, 400, 600, 800, 1000\}$.

\begin{figure}
    \centering
    \includegraphics[scale=0.28]{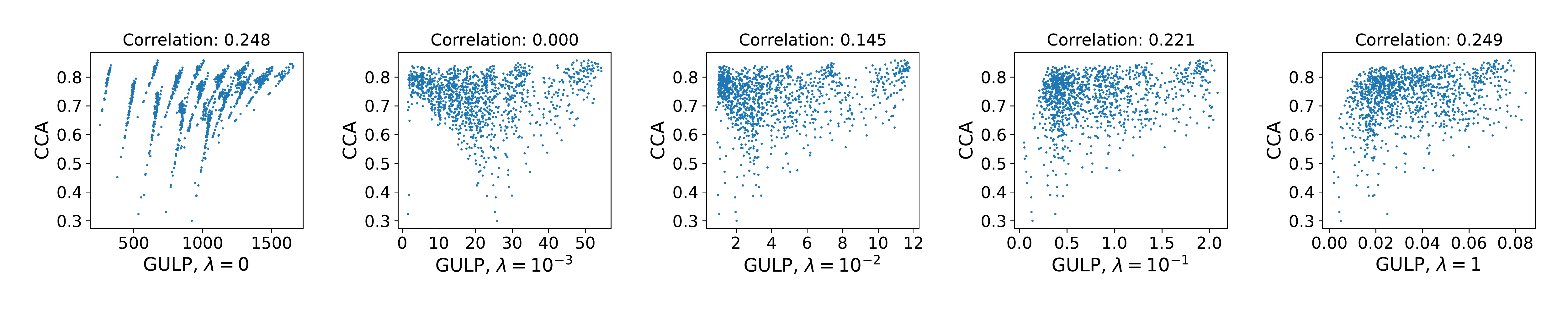}
    \includegraphics[scale=0.28]{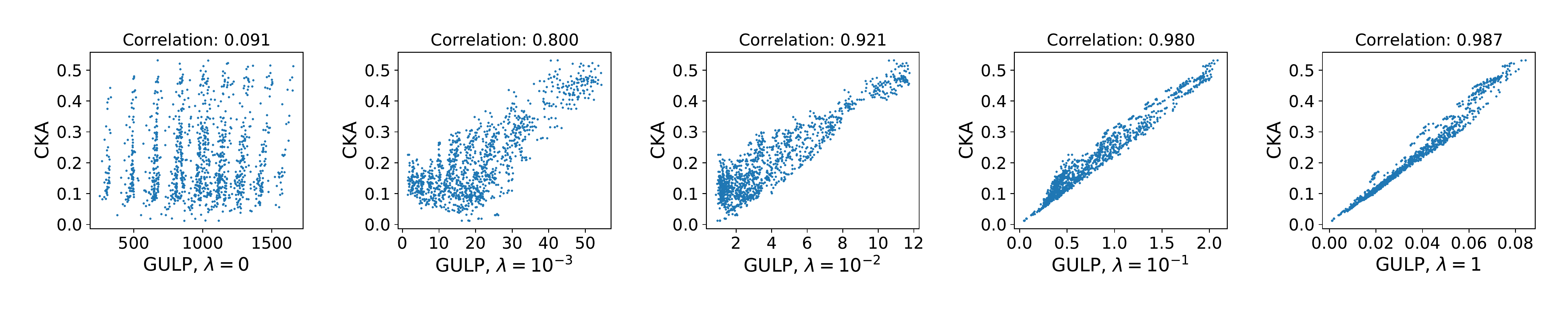}
    \includegraphics[scale=0.28]{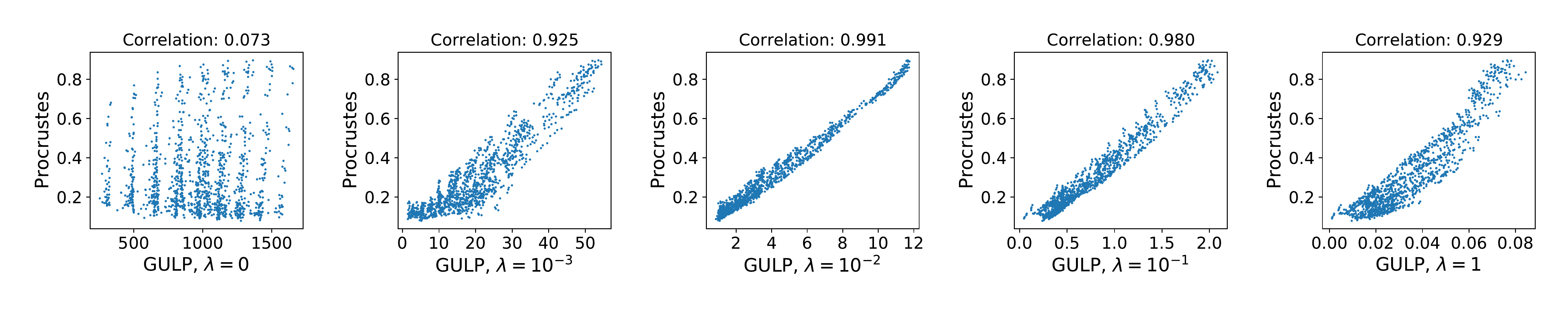}
    \includegraphics[scale=0.28]{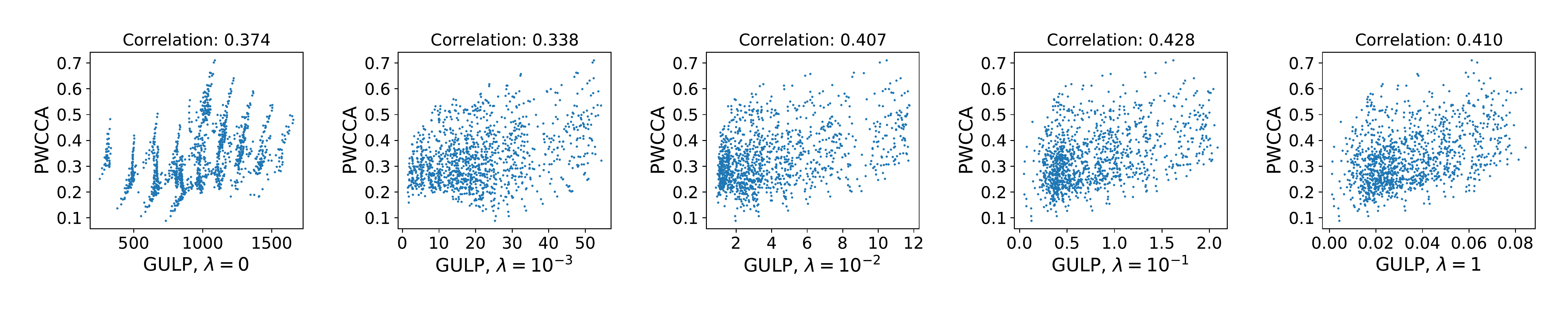}
    \caption{
    Scatter plots showing relationships between network distances of fully-connected network representations on MNIST. For $\lambda = 0$, there is no straight-line relationship with CCA, since the dimensions of the representations differ, and the normalization of CCA is different from that of $\gulp$ because it depends the representation dimension. Each point is a pair of MNIST representations, and the $x$ and $y$ coordinates correspond to two distances that are being compared. The near-linear relationship between CKA and GULP is quite evident for large $\lambda$, as it turns out that all of the kernels are closer to having the same normalization than in the case of the ImageNet dataset. Furthermore, there is a surprising near-linear relationship between CKA and $\procrustes$ for intermediate $\lambda$. The title of each plot shows the Pearson correlation coefficient.}
    \label{fig:mnist-relationships-extended}
\end{figure}

\subsection{Convergence of the plug-in estimator}\label{app:convergence}

In Figure~\ref{fig:convergence}, we estimated the distances between $\binom{15}{2} = 105$ pairs of ImageNet networks with the plug-in estimator as we increased the number of samples $n$. We plotted the average relative error to the 10000-sample estimate. We supplement this result with Figure~\ref{fig:convergence-rel-error-indep}, which shows that for $n \geq 2000$, two independent estimates of $\gulp$ have average relative error smaller than 2\%. Therefore, if there is error in the plug-in estimator it is mainly due to bias, apart from roughly 2\% relative error. Since the convergence in \ref{fig:convergence} indicates that the plug-in estimator is unbiased, this reinforces our claim that the plug-in estimator concentrates quickly around the true distance.

\begin{figure}
    \centering
    \includegraphics[scale=0.5]{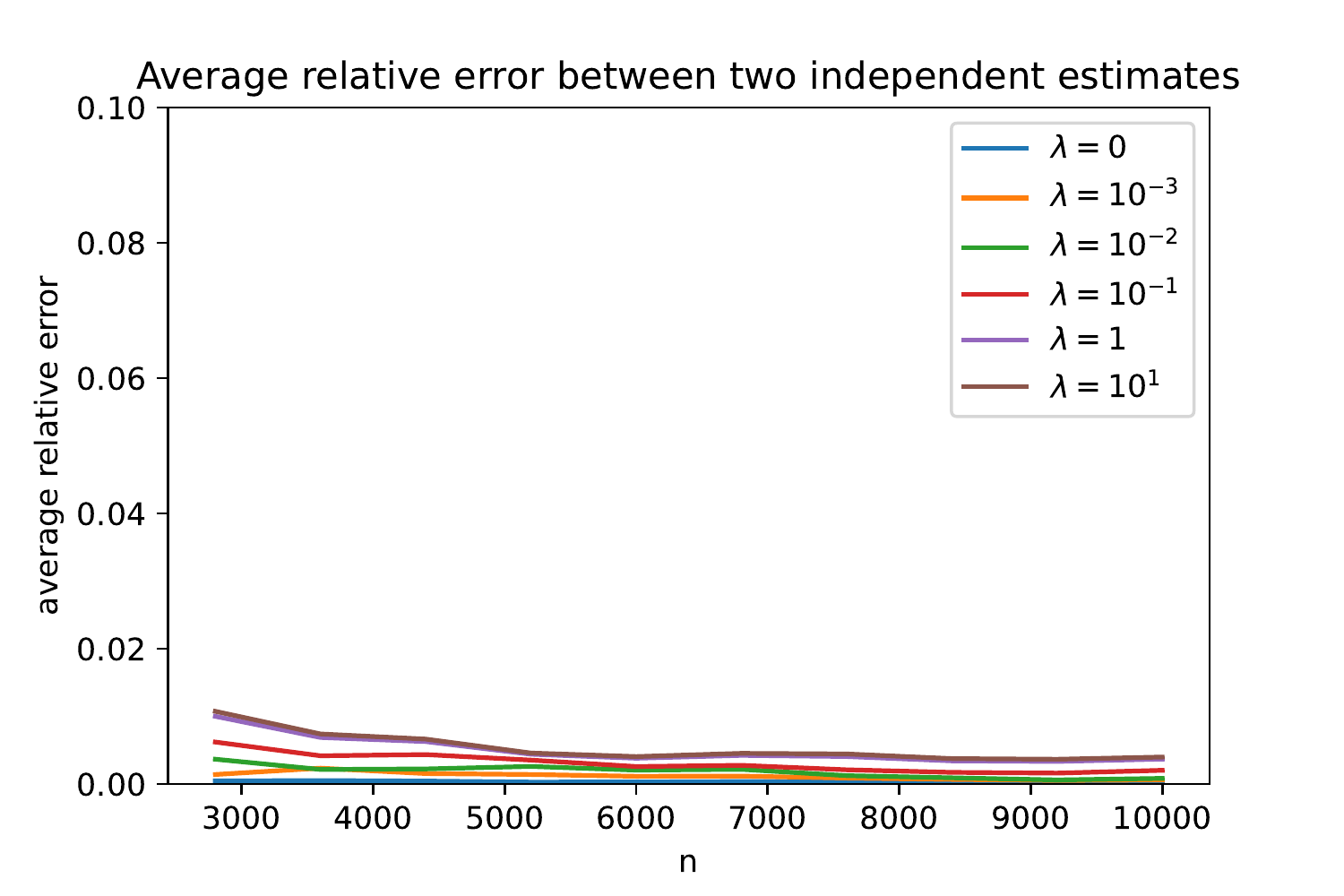}
    \caption{Relative error $|\hat{d}_{\lambda,n}^{(1)} - \hat{d}_{\lambda,n}^{(2)}| / (\hat{d}_{\lambda,n}^{(1)} + \hat{d}_{\lambda,n}^{(2)})$ between plug-in estimator on two trials $\hat{d}^{(1)}_{\lambda,n}$ and $\hat{d}^{(2)}_{\lambda,n}$ with independent samples. We have averaged across the 66 pairs of ImageNet networks. For $\lambda = 0$, due to numerical precision issues we do not plot the relative error in the estimate for $n \leq 2000$.}
    \label{fig:convergence-rel-error-indep}
\end{figure}

\paragraph{Runtime} The 12 ImageNet networks for these plots were alexnet\_pretrained\_rep,  convnext\_small\_pretrained\_rep, efficientnet\_b0\_pretrained\_rep, efficientnet\_b3\_pretrained\_rep,
efficientnet\_b6\_pretrained\_rep, inception\_pretrained\_rep,
mobilenet\_v3\_large\_pretrained\_rep,
regnet\_x\_1\_6gf\_pretrained\_rep,
regnet\_x\_400mf\_pretrained\_rep, regnet\_y\_16gf\_pretrained\_rep,
regnet\_y\_3\_2gf\_pretrained\_rep, regnet\_y\_8gf\_pretrained\_rep, subsampled from the 37 models at our disposal so as to reduce the computational burden. Generating these plots took 11 minutes with an Nvidia Volta V100 GPU. The computational cost is due to the fact that distances are computed for a range of increasing number of samples $n$, on 66 pairs of networks and two independent trials.

\subsection{$\gulp{}$ captures generalization performance by linear predictors}

Here we supplement the experiments of Section \ref{sec:predict}, which show how the $\gulp$ distance captures generalization performance by linear predictors. We provide an experiment on the UTKFace dataset \cite{zhifei2017cvpr} using the age of a face as the regression label, instead of using a random label. We consider the representation maps $\phi_1,\ldots,\phi_m$ given by $m = 37$ pretrained Imagenet image classification architectures, applied to the UTKFace dataset $P_X$. For each pair of representations, we compute the $\cka$, $\cca$, $\pwcca$, and $\gulp$ distances with the plug-in estimator on 10,000 images. We then draw $n = 5000$ data points $(X_i,Y_i) \sim P_X$, where $X_i$ is the face image and $Y_i$ is the corresponding age. The remaining experiment details are the same as in Section~\ref{sec:predict}. For each representation $i \in [m]$ we fit a $\lambda$-regularized least-squares linear regression to the training data $\{(X_k,Y_k)\}_{k \in [n]}$, yielding a coefficient vector $\beta_{\lambda,i}$. Finally, for each $1 \leq i \leq j \leq m$, we compute the distance $\tau_{ij}$ between predictions as an empirical average over 3000 samples in a testset. In Figure~\ref{fig:generalization_utk_face}, we plot the Spearman $\rho$ correlations between the prediction distances $\tau_{ij}$ and the different distances between representations (similarly to Figure~\ref{fig:generalization}). We run one trial, since the labels are no longer random. The $\gulp{}$ distance again performs favorably compared to other methods. For linear regression with $\lambda = 1$ and $\lambda = 10^{-6}$, the $\gulp{}$ distance with $\lambda = 1$ and $\lambda = 10^{-6}$, respectively vastly outperform previously-proposed distances in terms of predicting generalization. For linear regression with $\lambda = 10^{-4}$ and $\lambda = 10^{-2}$, $\gulp{}$ with $\lambda = 10^{-2}$ predicts the generalization performance on par with the $\cka$ and $\procrustes$ distances. Notice that unlike the experiment with random labels, the best $\lambda$ for $\gulp$ does not exactly match the $\lambda$ used in the linear regression task, but instead is close to it.

\begin{figure}
\centering
    \centering
    \includegraphics[width=.48\textwidth]{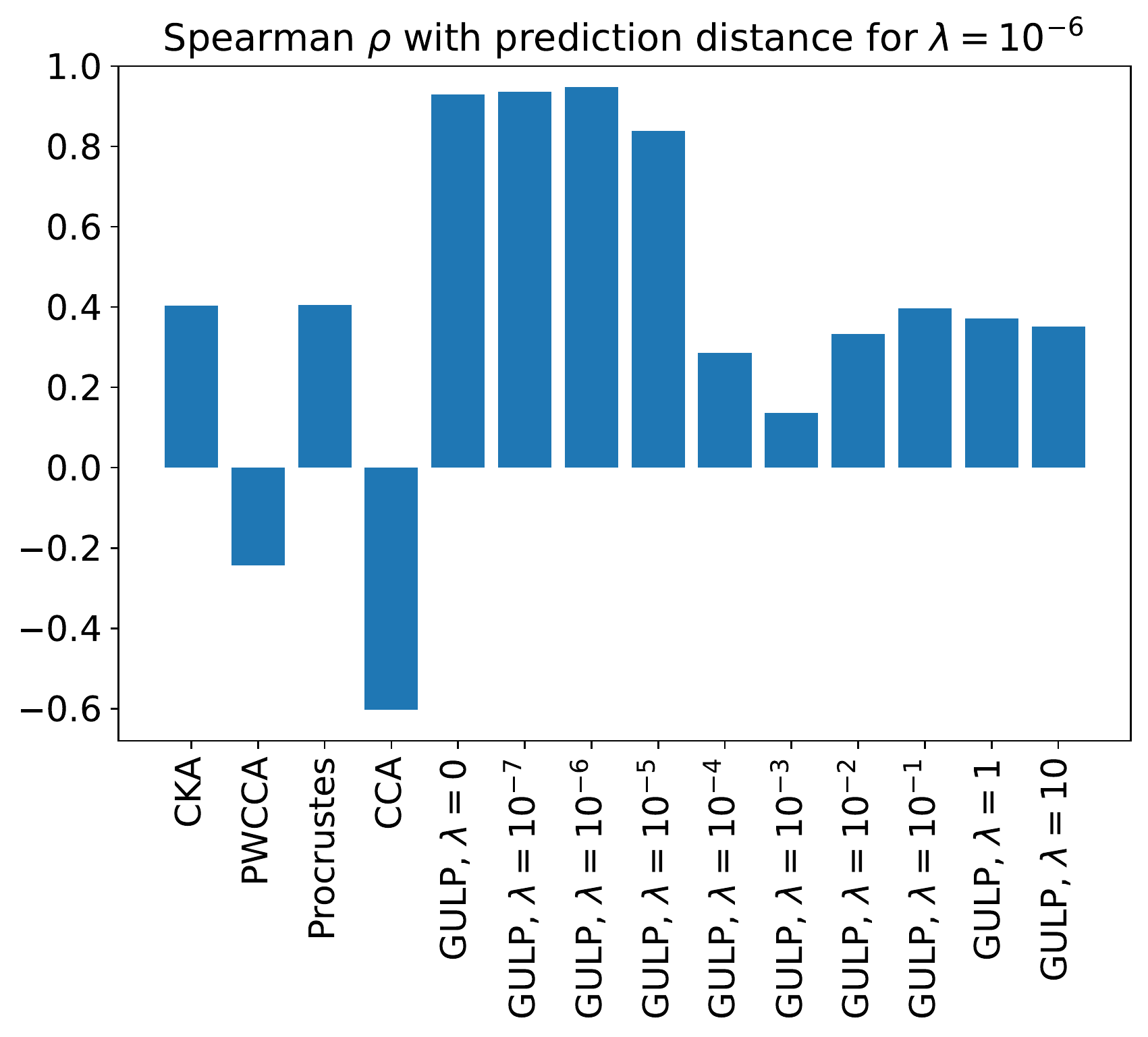}
    \includegraphics[width=.48\textwidth]{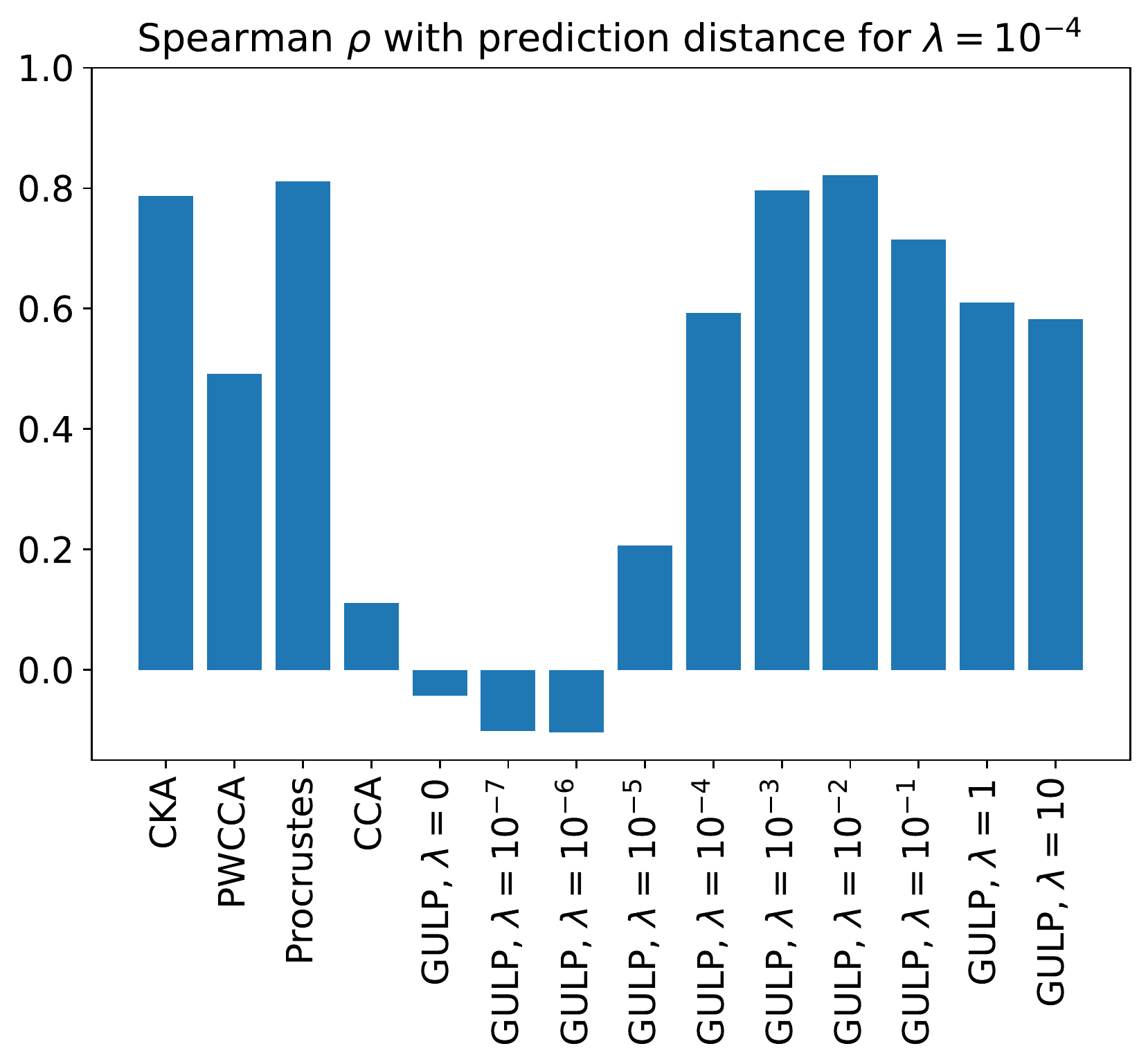}
    \includegraphics[width=.48\textwidth]{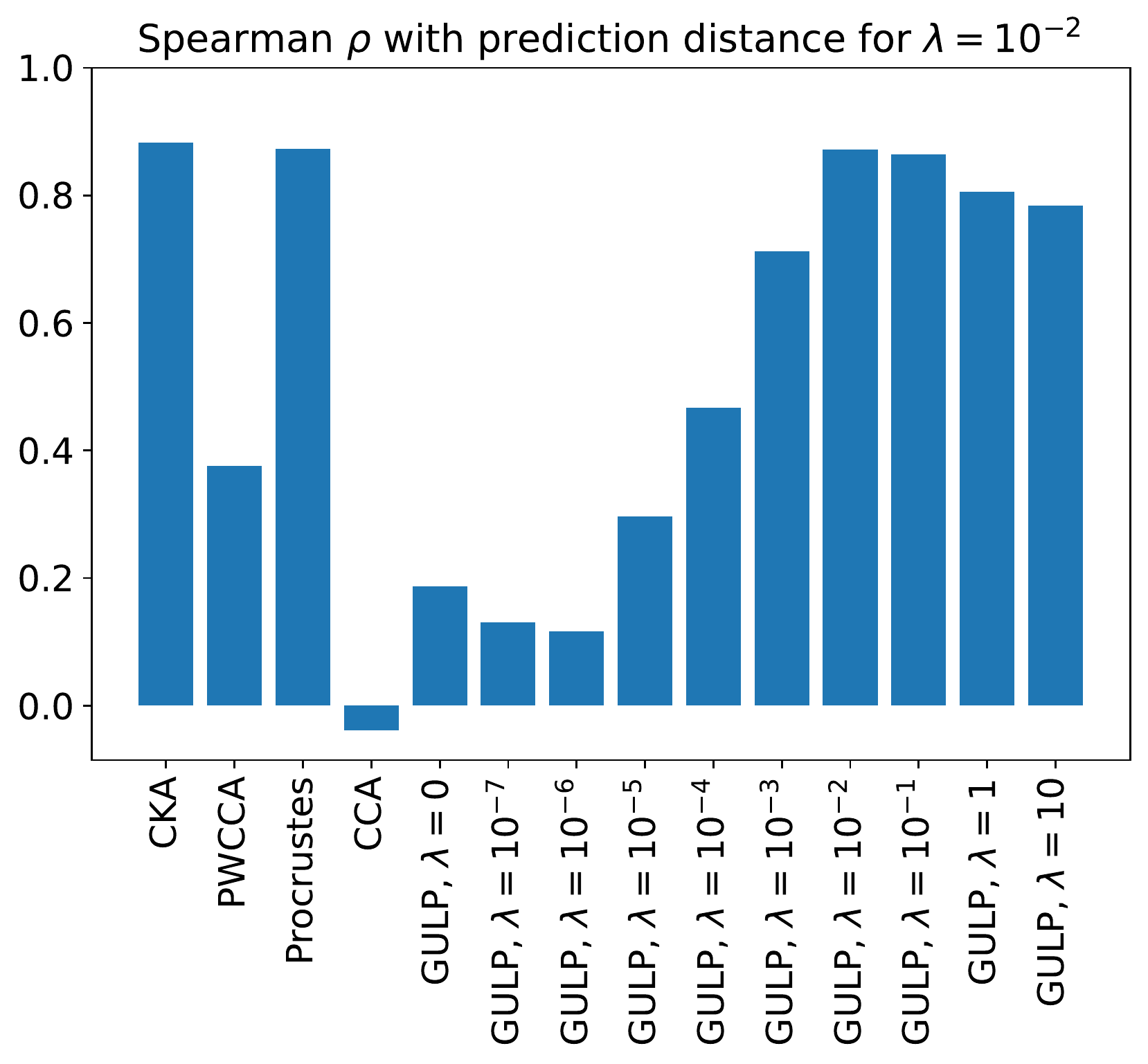}
    \includegraphics[width=.48\textwidth]{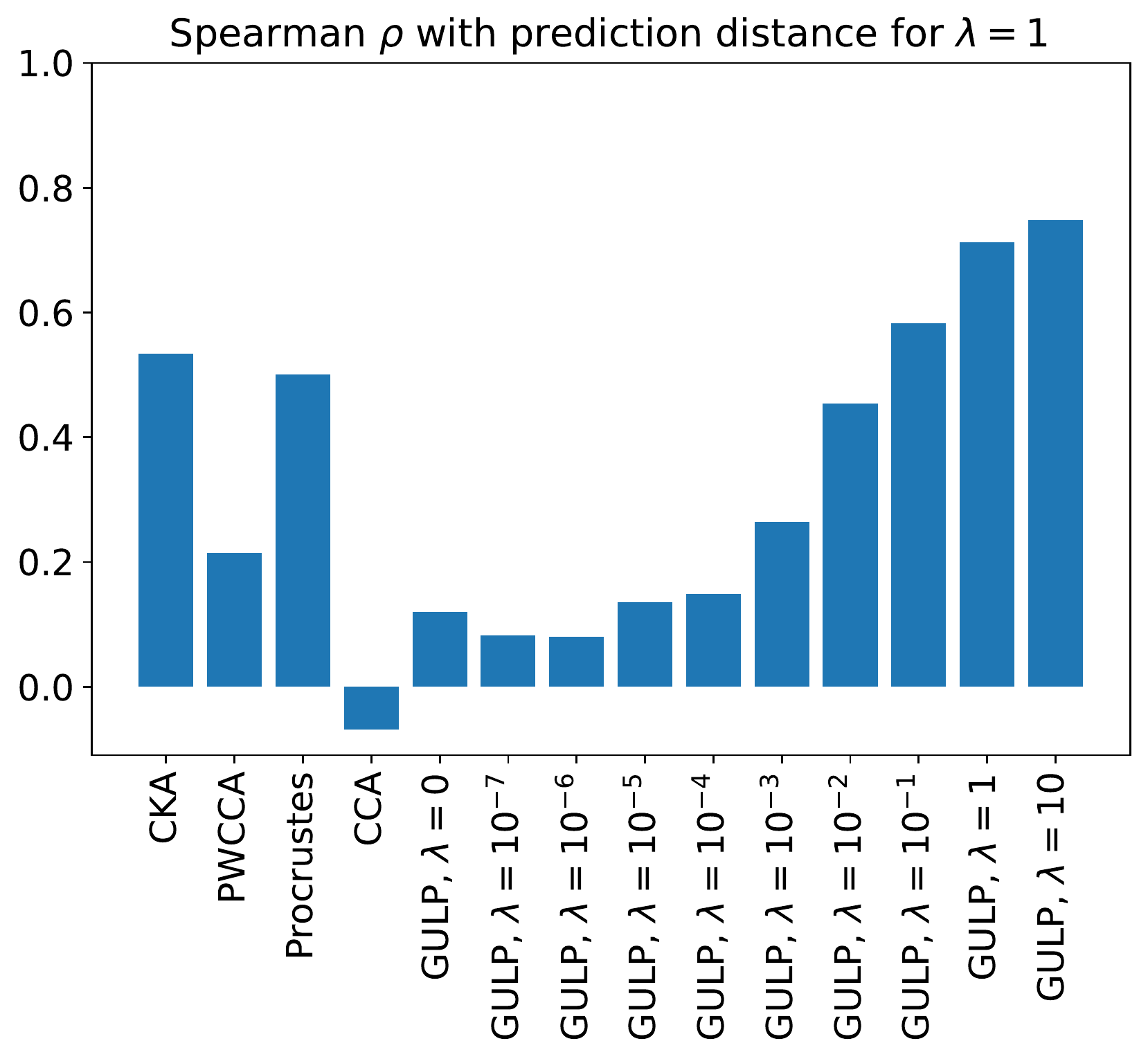}
    \caption{$\gulp$ captures generalization of linear predictors. We plot Spearman's $\rho$ between the differences in predictions by $\lambda$-regularized linear regression, and the different distances.}
    \label{fig:generalization_utk_face}
\end{figure}

\subsection{$\gulp{}$ distances cluster together networks with similar architectures}\label{app:network_experiments}

Here we elaborate further on the experiments described in Section~\ref{subsec:network_experiments} on embeddings of MNIST networks. As described previously, we generate four independent copies of fully-connected ReLU networks with depths ranging from 1-10 and widths ranging from 100-1000. Network depth refers to the number of hidden layers in a model and network width refers to the width of each hidden layer. All networks are fully-trained on MNIST, and their last hidden layer representations are computed on 60,000 input images from the train set. For every pair of widths and depths $(w_1, d_1)$ and $(w_2, d_2)$, there are four trained networks with dimensions $(w_1, d_1)$ and four trained networks with dimensions $(w_2, d_2)$. For a given metric, we compute $4 \cdot (3-1) = 12$ distances between the penultimate layer representations of these networks and average them. This gives us the average distance between the penultimate layer representations of a network with dimensions $(w_1, d_1)$ and a network with dimensions $(w_2, d_2)$. In Figure~\ref{fig:mnist_width_depth} (left) we show the average \pwcca{}, \cka{}, \procrustes{}, and \gulp{} distances between each pair of width-depth architectures for varying $\lambda$. We also display the MDS embeddings of all $4 \times 10 \times 10$ networks colored by width and depth (center and right).

In Figure~\ref{fig:cifar_width_depth} we perform a very similar experiment to the one above with networks trained on CIFAR10 instead of MNIST. We generate five independent copies of fully-connected ReLU networks with depths ranging from 1-5 and widths ranging from 200-1,000. All networks are fully-trained on 60,000 images of the CIFAR10 train set by SGD in the maximal-update parametrization \cite{yang2020feature}, where for a width-one network our hyperparameters would be learning rate $\eta = 0.1$ and we would initialize weights and biases as Gaussian with standard deviation 1. The distances between their penultimate layer representations are computed using 10,000 randomly selected CIFAR10 images. Figure~\ref{fig:cifar_width_depth} shows the average \pwcca{}, \cka{}, \procrustes{}, and \gulp{} distances between each pair of width-depth architectures and show the MDS embeddings of all $5 \times 5 \times 5$ networks colored by width and depth (center and right).

Now we describe in more detail how various distance metrics cluster state-of-the-art network architectures on the ImageNet Object Localization Challenge. In Figure~\ref{fig:imagenet_pretrained_variance} (left) we compute the \cca{}, \pwcca{}, \cka{}, \procrustes{}, and \gulp{} distances for five groups of networks: 17 ResNets, 8 EfficientNets, 4 ConvNeXts, and 3 MobileNets. These 32 networks are fully-trained on ImageNet and are given the same 10,000 input training images to form their last hidden layer representations. As discussed in Section~\ref{subsec:network_experiments}, all distance metrics separate ResNet architectures (blue) from the EfficientNet and ConvNeXt convolutional networks (orange and red) with \gulp{} at $\lambda=1$ achieving the best separation between these two clusters. %
To further quantify the compactness of the clusterings given by these distance metrics, we compute a standard deviation ratio for each of the five network classes. Given a distance metric, this ratio is computed as the sum of squared distances between all 36 networks divided by the sum of squared distances between networks in each class:
\begin{equation}\label{eq:std_ratio}
    \text{standard deviation ratio for class } k = \Big(\frac{1}{n(n-1)}\sum_{1 \leq i \neq j \leq n}d_{ij}^2 \Big/ \frac{1}{|\cC_k|(|\cC_k|-1)}\sum_{i \neq j \in \cC_k}d_{ij}^2\Big)^\frac{1}{2}
\end{equation}
where $n = 36$ and $C_k \subset \{1, \dots, n\}$ is the subset of networks in class $k = 1, \dots, 5$. Note that a ratio of 1 implies that the size of the cluster is equal to the average distance between any two ImageNet networks. In Figure~\ref{fig:imagenet_pretrained_variance} (right) we plot the standard deviation ratio for each of the five network classes. As expected, the ratios under the \gulp{} distance increase for large $\lambda$ and the residual and convolutional network architectures become well separated at $\lambda = 1$. The \cca{}, \pwcca{}, \cka{}, and \procrustes{} distances do not achieve the same level of separation between different network architectures but are similar to the \gulp{} distance at $\lambda = 10^{-2}$.

Now we study distances between the same ImageNet models when they are untrained and are at random initialization. Again there are 32 untrained networks consisting of 17 ResNets, 8 EfficientNets, 4 ConvNeXts, and 3 MobileNets. Each of the untrained networks is randomly initialized ten separate times and is given the same 10,000 input training images from ImageNet. We compute the \cka{}, \procrustes{}, and \gulp{} distances between their penultimate layer representations which are displayed in Figure~\ref{fig:imagenet_untrained_variance} (left). The distances between these networks are visualized using a two-dimensional t-SNE embedding and the standard deviation ratio~\eqref{eq:std_ratio} of each of the four groups is calculated [Figure~\ref{fig:imagenet_untrained_variance} (center and right)]. Under all distance metrics we see that the ResNets (blue), EfficientNets (orange), and ConvNeXts (red) all form their own clusters. As evidenced by the standard deviation ratios, the ConvNeXt networks under the $\gulp{}$ distance form a tighter cluster as $\lambda$ increases. Both \cka{} and \gulp{} with $\lambda = 1$ achieve the most compact clusterings of ResNets, EfficientNets, and ConvNeXts.

In Figure~\ref{fig:imagenet_variance_networks} for several distance metrics we display the standard deviation ratios for the five network groups before and after training. On untrained and pretrained networks, \cka{} and \procrustes{} are competitive with \gulp{} at clustering ResNet, EfficientNet, and ConvNeXt architectures. However on ConvNeXt models, for untrained networks \gulp{} achieves the highest standard deviation ratio with large $\lambda$ and for pretrained networks it achieves the highest standard deviation ratio at intermediate values of $\lambda$.

\begin{figure}
\centering
\includegraphics[width=\linewidth]{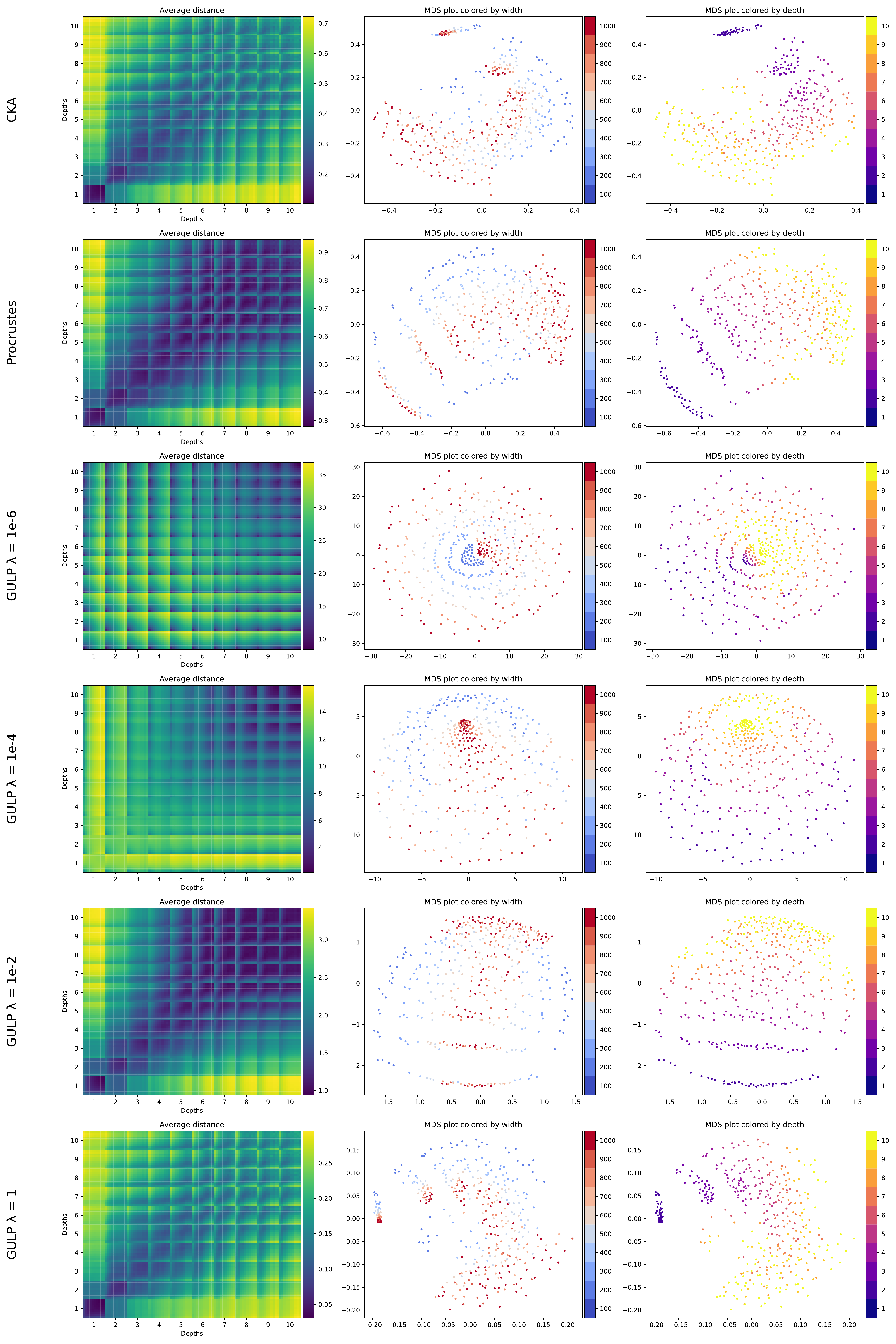}
\caption{Average \cka{}, \procrustes{}, and \gulp{} distance between last hidden layer representations of two fully-connected ReLU networks with a given width and depth (left). Networks are fully-trained on MNIST and penultimate layer representations are constructed from 60,000 input train images. Ordering of networks along rows and columns of distance matrices has outer indices as network depths 1-10 and inner indices as network widths 100-1000. Two dimensional MDS embedding plots (center and right) of all networks colored by architecture width and depth.}
\label{fig:mnist_width_depth}
\end{figure}

\begin{figure}
\centering
\includegraphics[width=\linewidth]{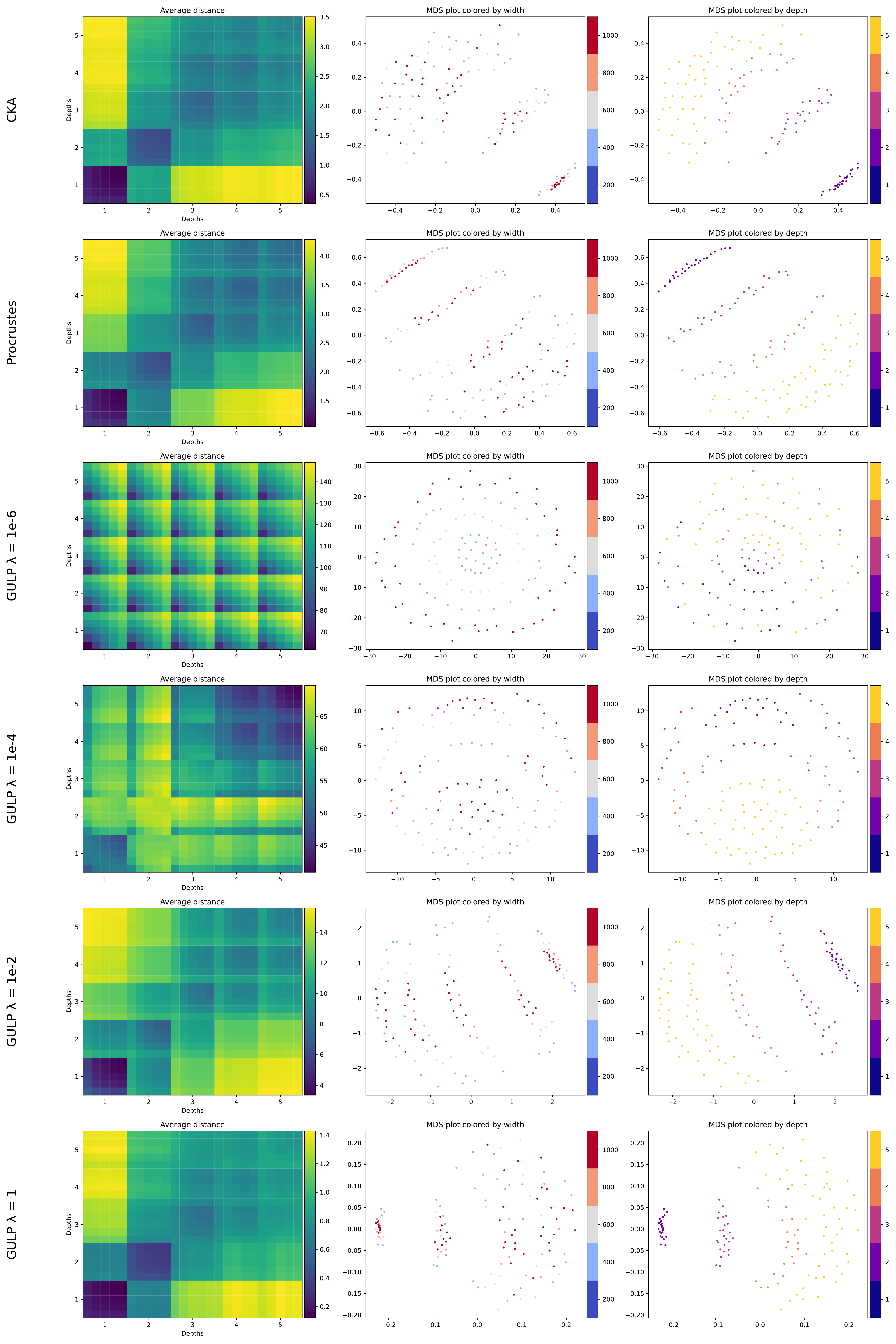}
\caption{Average \cka{}, \procrustes{}, and \gulp{} distance between last hidden layer representations of two fully-connected ReLU networks with a given width and depth (left). Networks are fully-trained on CIFAR and penultimate layer representations are constructed from 10,000 input train images. Ordering of networks along rows and columns of distance matrices has outer indices as network depths 1-5 and inner indices as network widths 200-1000. Two dimensional MDS embedding plots (center and right) of all networks colored by architecture width and depth.}
\label{fig:cifar_width_depth}
\end{figure}

\begin{figure}
\centering
\includegraphics[width=0.7\linewidth]{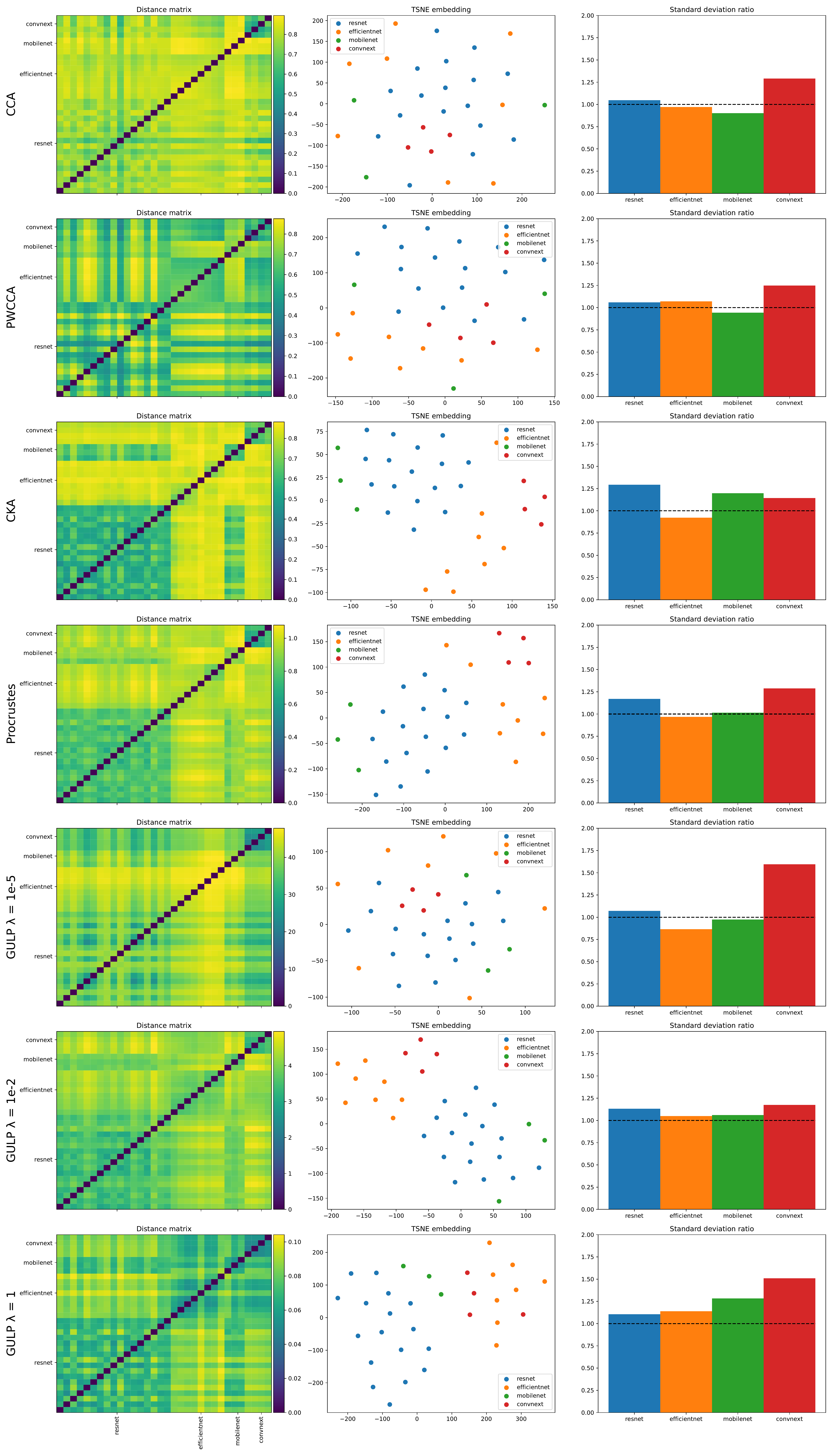}
\caption{\cca{}, \cka{}, \procrustes{}, and \gulp{} distances between last hidden layer representations of 36 pretrained ImageNet networks. Representations are formed by passing 10,000 train images from ImageNet into each network. For five groups of pretrained networks (ResNet, EfficientNet, MobileNet and ConvNeXt), we compute their distance matrices (left) and two-dimensional t-SNE embeddings (center). Separation of the five network groups is quantified by their standard deviation ratios which measure the the standard deviation of the distance across all networks divided by the standard deviation of the distance in a given group. \gulp{}, \cka{}, and \procrustes{} successfully separate all four network types from each other.}
\label{fig:imagenet_pretrained_variance}
\end{figure}

\begin{figure}
\centering
\includegraphics[width=0.7\linewidth]{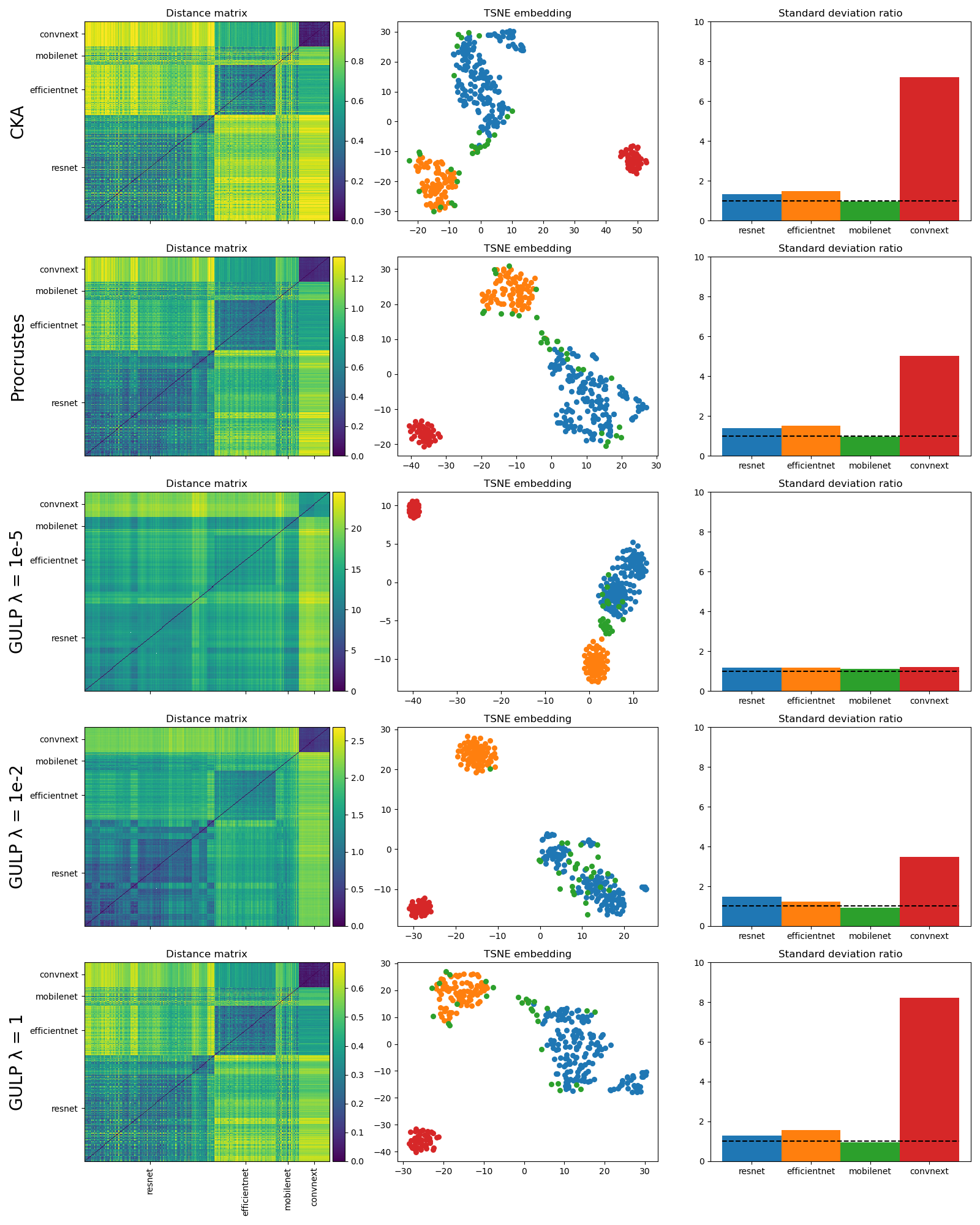}
\caption{\cka{}, \procrustes{}, and \gulp{} distances between penultimate layer representations of 32 untrained ImageNet networks where each network model is randomly intialized 10 times. Representations are formed by passing 10,000 train images from ImageNet into each network. For four groups of pretrained networks (ResNet, EfficientNet, MobileNet, ConvNeXt), we compute their distance matrices (left) and two-dimensional t-SNE embeddings (center). Separation of the four network groups is similarly quantified by their standard deviation ratios which measure the the standard deviation of the distance across all networks divided by the standard deviation of the distance in a given group. Under all distance metrics ResNets, EfficientNets, and ConvNeXts are clustered separately with \cka{} and \gulp{} at $\lambda = 1$ forming the most compact clusters.}
\label{fig:imagenet_untrained_variance}
\end{figure}

\begin{figure}
\centering
\includegraphics[width=\linewidth]{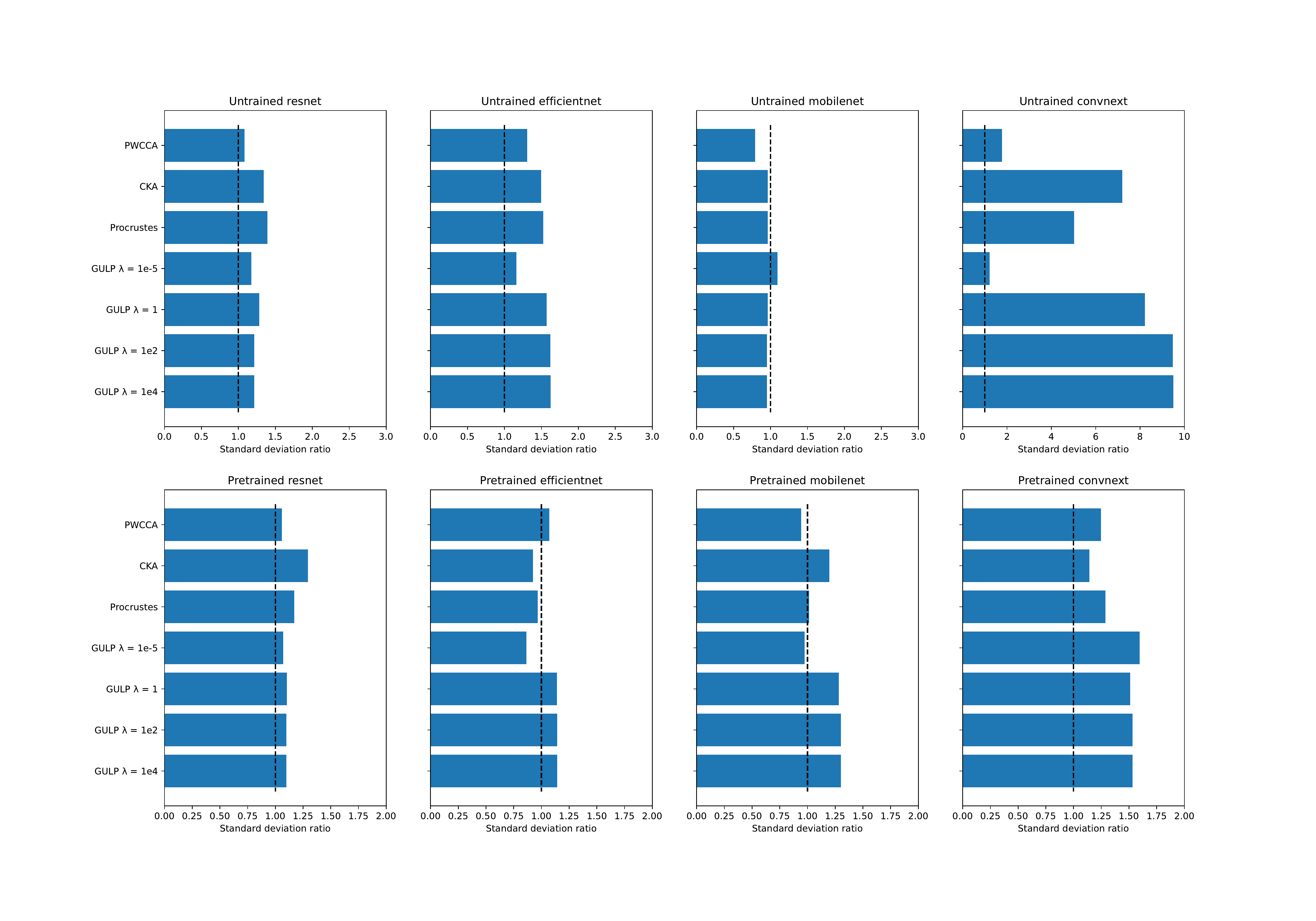}
\caption{Standard deviation ratio of distances for five groups of architectures (ResNet, EfficientNet, MobileNet, and ConvNeXt) both for untrained and pretrained networks.}
\label{fig:imagenet_variance_networks}
\end{figure}

\subsection{\gulp{} does not strongly depend on input data distribution}
Here we test how the \gulp{} distance between network architectures depends on the distribution of the input data $X$ from which the last hidden layer representations are computed. In Figure~\ref{fig:teaser} we showed a t-SNE embedding of the \gulp{} distance ($\lambda=10^{-2}$) between the last hidden layer representations of 37 networks pretrained on ImageNet. These penultimate layer representations were computed by passing 10,000 images from the ImageNet train set into each network. In Figure~\ref{fig:imagenet_mnist} we repeat this experiment and generate a t-SNE embedding of the \gulp{} distance ($\lambda=10^{-2}$) between ImageNet networks where each network is passed in 10,000 images from the MNIST train set. In order to input MNIST grayscale images into these networks, we convert them to RGB images where each channel has a copy of the same image and is centered and normalized as described in Section~\ref{app:experimental_setup}. Even though all 37 networks were trained on the ImageNet train set, \gulp{} is able to separately cluster EfficientNet, ResNet, and ConvNeXt architectures from their last hidden layer representations of MNIST images. In Figure~\ref{fig:imagenet_utkface} we show yet another example of this phenomenon, where \gulp{} properly clusters ImageNet architectures when their last hidden layer representations are constructed from 10,000 face input images taken from the UTKFace train dataset \cite{zhifei2017cvpr}. This shows that in practice the \gulp{} distance consistently captures the same relationships between network architectures and does not strongly depend on the input data distribution used to build the network representations.

\begin{figure}
  \centering
    \includegraphics[width=\textwidth]{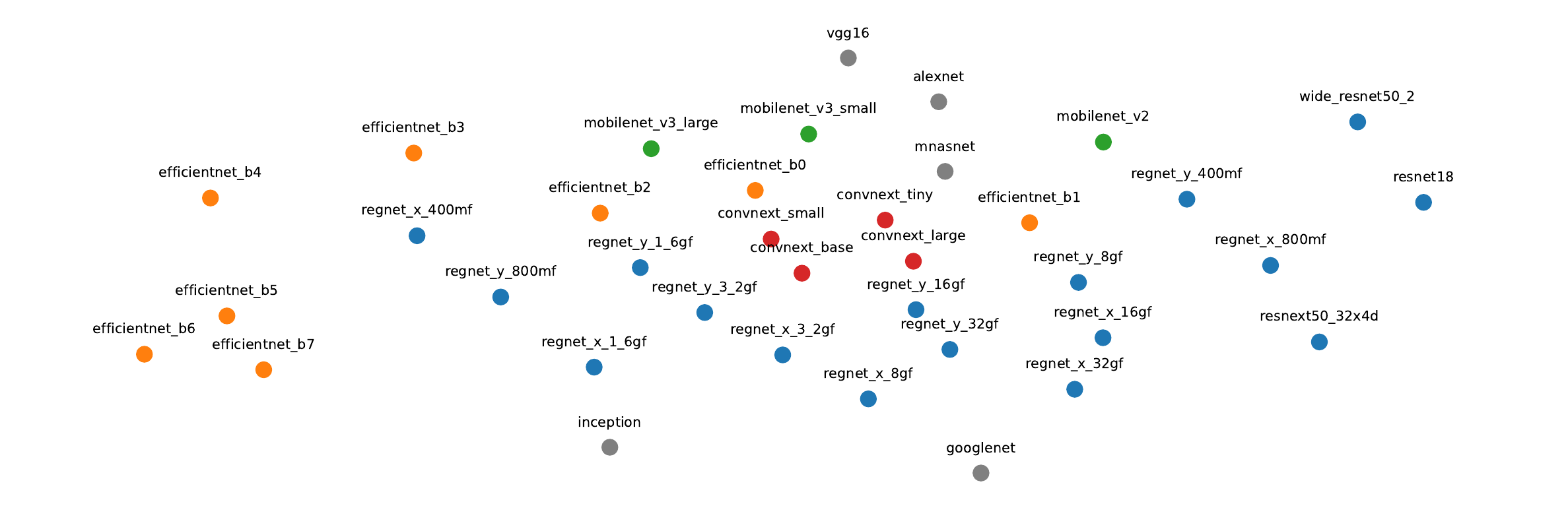}
        \caption{t-SNE embedding of penultimate layer representations of pretrained ImageNet networks with \gulp{} distance ($\lambda=10^{-2}$), colored by architecture type (gray denotes architectures that do not belong to a family). For each network pretrained on ImageNet we input MNIST images and compute their last hidden layer representations. Even though these ImageNet networks were not trained on MNIST data, the \gulp{} distance is able to cluster their penultimate layer representations and consistently forms groups of MobileNet, EfficientNet, ResNet, and ConvNeXt architectures. This indicates that the \gulp{} metric does not depend strongly on the data distribution which networks are trained on.}
    \label{fig:imagenet_mnist}
\end{figure}

\begin{figure}
  \centering
    \includegraphics[width=\textwidth]{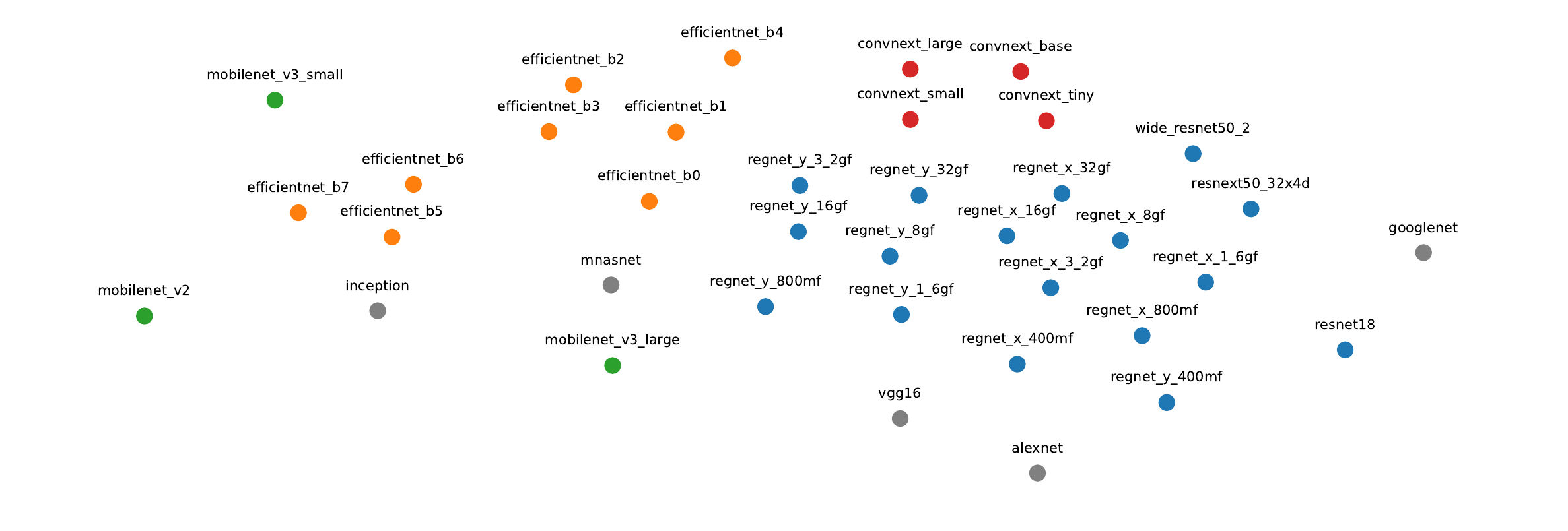}
        \caption{t-SNE embedding of penultimate layer representations of pretrained ImageNet networks with \gulp{} distance ($\lambda=10^{-2}$), colored by architecture type (gray denotes architectures that do not belong to a family). Contrary to Figure~\ref{fig:teaser}, here for each network pretrained on ImageNet we input 10,000 face images from the UTKFace train dataset and compute their last hidden layer representations. Even though these ImageNet networks were not trained on UTKFace data, the \gulp{} distance is able to cluster their last hidden layer representations and consistently forms groups of MobileNet, EfficientNet, ResNet, and ConvNeXt architectures. This in conjunction with Figure~\ref{fig:imagenet_mnist} shows that the \gulp{} metric is not overly sensitive to the input data distribution from which network representations are constructed.}
    \label{fig:imagenet_utkface}
\end{figure}

\subsection{Network representations converge in $\gulp{}$ distance during training}
Here, we repeat Figure \ref{fig:dists_during_training_one_plot}, but plot each distance separately and with a greater variety of regularization values $\lambda$ (see Figure \ref{fig:dists_during_training}).

\begin{figure}
    \centering
    \includegraphics[trim=0 600 0 0, clip, width=0.9\linewidth]{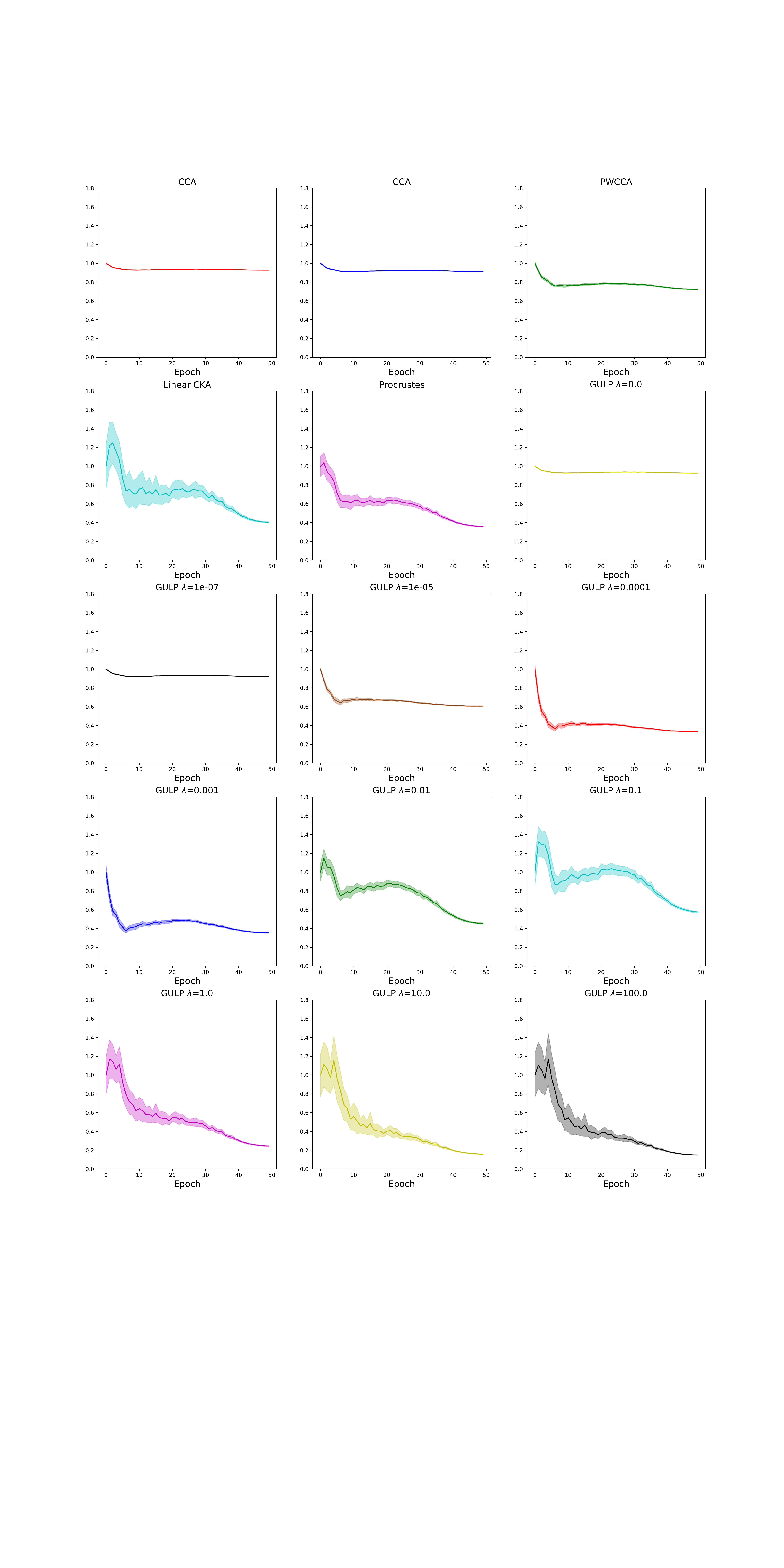}
    \caption{The empirical distances between penultimate layer representations of 16 independently trained ResNet18 architectures during training, computed using $3,000$ samples and averaged over all pairs. Distances are scaled by their average value at iteration $0$ for the sake of comparison between metrics.}
    \label{fig:dists_during_training}
\end{figure}

\subsection{$\gulp{}$ distance at intermediate network layers}
Throughout this paper, we have primarily used \gulp{} to compare neural networks using their last hidden layer representations. Here we study how the \gulp{} distance compares intermediate hidden layers of neural networks. Namely, we take 10 NLP BERT base models from Zhong et al.~\cite{zhong2021larger} which are pretrained with different random initializations on sentences from the Multigenre Natural Language Inference (MNLI) dataset~\cite{williams2017broad}. Each model has 12 hidden layers and we save the representations at every hidden layer on 3,857 MNLI input train samples. In Figure~\ref{fig:bert_layer_embedding} we plot the distance matrices for \gulp{} at varying values of $\lambda$ between every pair of hidden layers across 10 BERT networks. We also plot the tSNE, MDS, and UMAP embeddings with each colored line representing one of the 10 BERT models. In each embedding plot, earlier layers are drawn as points with a dark hue while layers closer to the end of the network are represented by points with a faded color. As expected, for each of the BERT model the \gulp{} distances arrange their hidden layers linearly in order from their input layer to their output layer. When $\lambda$ is small, the earlier layers of all 10 networks are grouped together while the later layers have large \gulp{} distances between all 10 models. As $\lambda$ increases, the later layers of all 10 models also become grouped together and \gulp{} arranges all BERT models linearly in the order of their hidden layers. Therefore, tuning the $\lambda$ parameter in \gulp{} allows us to make distinctions between earlier and later layers of a network architecture.

\begin{figure}
    \centering
    \includegraphics[width=0.9\linewidth]{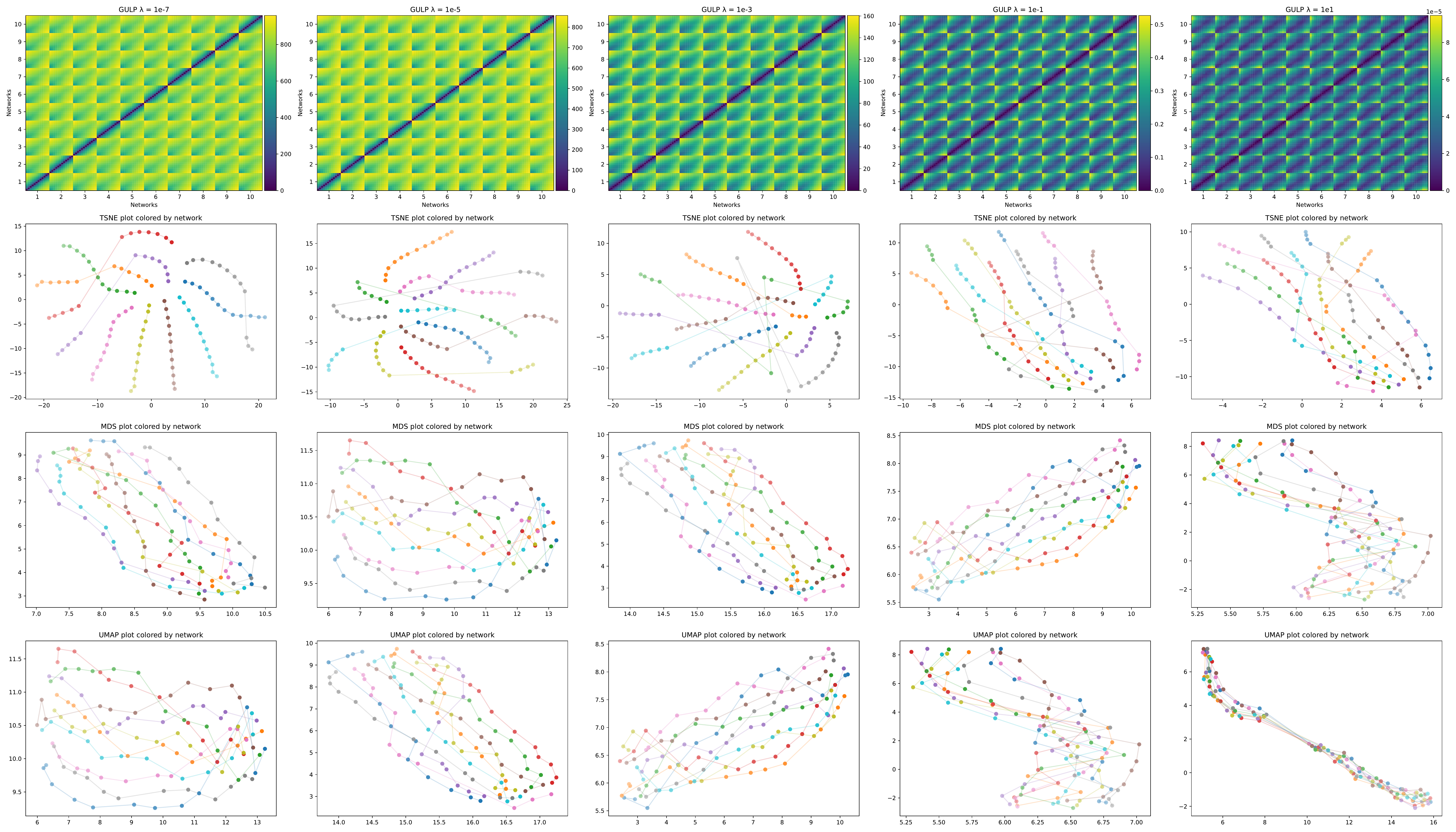}
    \caption{Top row shows \gulp{} distance matrices between 12 hidden layers of 10 fully-trained NLP BERT base models with different random initializations. Representations at every hidden layer are constructed from 3,857 MNLI input train samples which are then used to compute the \gulp{} distance between every pair of layers across the 10 models. Distance matrices are embedded using tSNE, MDS, and UMAP where each colored line represents one of the 10 BERT models. Earlier layers are drawn as dark saturated points while layers close to the output of the network are drawn as faded points. For each of the 10 BERT networks, \gulp{} finds a one-dimensional embedding of its layers which respects their ordering. Across all BERT models, \gulp{} with small $\lambda$ groups together the earlier layers of the 10 network architectures but assigns large distances between the later layers. This is particularly emphasized in the top left tSNE embedding. As $\lambda$ increases, the later layers of all 10 models also become grouped together until all BERT networks are linearly aligned in the order of their hidden layers.}
    \label{fig:bert_layer_embedding}
\end{figure}

\subsection{Specificity versus sensitivity of $\gulp{}$}\label{app:ding-experiments}
Here we run three benchmark experiments of~\cite{ding2021grounding} to compare the sensitivity and specificity of our \gulp{} distance to \cca{}, \pwcca{}, \cka{}, and \procrustes{}.

In the first experiment, we take 10 BERT base models from Zhong et al.~\cite{zhong2021larger} which are pretrained with different random initializations on sentences from the Multigenre Natural Language Inference (MNLI) dataset~\cite{williams2017broad}. All BERT base models have 12 hidden layers of transformer blocks with dimension 768~\cite{devlin2018bert}. For each of the 10 networks, at each of the 12 layers we save the representations on 3,857 MNLI input train samples. We compute the probing accuracies of all 120 representations on the Question-answering Natural Language Inference dataset (QNLI)~\cite{wang2018glue} and the Stanford Sentiment Tree Bank Task (SST-2)~\cite{socher2013recursive}. For a given dataset (QNLI and SST-2), we find the representation $X^* \in \mathbb{R}^{768 \times 3857}$ which has the best probing accuracy and we compare the accuracies of all 120 representations to it. For every representation $X \in \mathbb{R}^{768 \times 3857}$, the difference in probing accuracy from the best representation $X^*$ is correlated with the distance between between the two representions $d(X, X^*)$ under a given distance metric (\cca{}, \cka{}, \procrustes{}, etc.). In Figure~\ref{fig:ding_layer_exp} we display Spearman's $\rho$ and Kendall's $\tau$ rank correlations of the \cca{}, \pwcca{}, \procrustes{}, \cka{}, and \gulp{} distances against the probing accuracy differences between two representations. On the QNLI dataset we see in Figure~\ref{fig:ding_layer_exp} (left) that \gulp{} with large $\lambda$ outperforms all other metrics including \cka{} and achieves the largest rank correlations with statistically significant $p$-values that are below 0.05. Similar results are obtained on the SST-2 dataset as seen in Figure~\ref{fig:ding_layer_exp} (right). This shows that the \gulp{} distance with large $\lambda$ has better specificity (is less sensitive) to random initializations of a network as this has less of an effect on its correlation with probing accuracy compared to the other metrics.

In the second experiment, we study 50 BERT base models from McCoy et al.~\cite{mccoy2019berts} which are trained on MNLI and finetuned for classification with different finetuning seeds at initialization. Similar to the experiment above, we compute 600 representations of the 50 BERT models at each of the 12 layers using 3,857 MNLI input train samples. We are interested in studying how distances between these representations correlate with their out-of-distribution (OOD) performance on a different task. Namely, as our measure of OOD performance we compute each representation's accuracy on the “Lexical Heuristic (Non-entailment)'' subset of the HANS dataset~\cite{mccoy2019right}. As before, we choose the best representation $X^*$ with the lowest OOD accuracy. Then for every representation $X$ the difference in OOD accuracy from the best representation $X^*$ is correlated with the distance between between the two representions $d(X, X^*)$ under a given distance metric. Spearman's $\rho$ and Kendall's $\tau$ rank correlations of the \cca{}, \pwcca{}, \procrustes{}, \cka{}, and \gulp{} distances are shown in Figure~\ref{fig:ding_feather}. Note that \cca{}, \pwcca{}, \procrustes{}, and \gulp{} with small $\lambda$ have the largest correlation with OOD accuracy. Since the BERT model representations were constructed on in-distribution MNLI data, this implies that these distance metrics can detect differences between OOD accuracy of different models without access to OOD data.

Lastly, for the third experiment we study 100 BERT medium models taken from Zhong et al.~\cite{zhong2021larger} which are fully-trained on the MNLI dataset with 10 pretraining seeds and further finetuned on MNLI with 10 different finetuning seeds by Ding et al.~\cite{ding2021grounding}. Each BERT medium model has 8 hidden layers of width 512~\cite{devlin2018bert}. We study the OOD accuracy of these models on the antonymy stress test and the numerical stress test defined in Naik et al.~\cite{naik2018stress}. As with the previous experiments, we compute 800 representations of the 100 BERT models at each of the 8 layers using 3,857 MNLI input train samples. For every representation $X$ the difference in OOD accuracy from the best representation $X^*$ is correlated with the distance between between the two representions $d(X, X^*)$ under a given distance metric. Spearman's $\rho$ and Kendall's $\tau$ rank correlations of the \cca{}, \pwcca{}, \procrustes{}, \cka{}, and \gulp{} distances are shown in Figure~\ref{fig:ding_pretrain_finetune}. As shown in the original experiments by Ding et al.~\cite{ding2021grounding}, none of the distance metrics show a large rank correlation with the OOD accuracy for either of the stress tests and the associated $p$-values are not significant at the 0.05 level except for \gulp{} with $\lambda > 10^{-2}$.

In summary, these benchmark experiments show that the \gulp{} distance exhibits specificity (is not sensitive) to random initializations of a network as shown in Figure~\ref{fig:ding_layer_exp} and this become particularly apparent at large $\lambda$. Additionally, it is sensitive to the out-of-distribution accuracy of a model as supported by Figure~\ref{fig:ding_feather} where it improves upon the performance of \cca{}, \pwcca{}, and \procrustes{}.

\begin{figure}
\centering
\begin{subfigure}[b]{.45\linewidth}
\includegraphics[width=\linewidth]{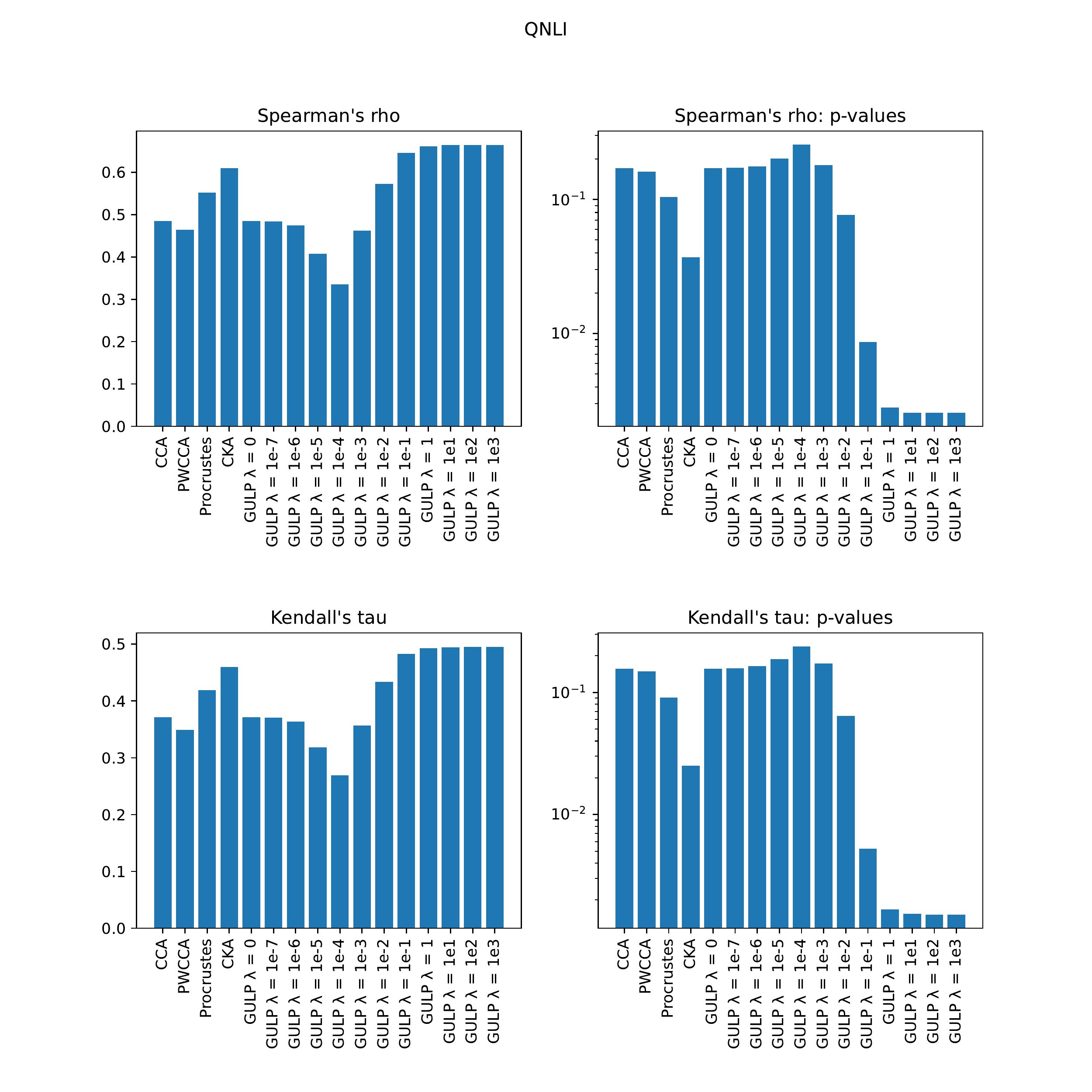}
\end{subfigure}
\begin{subfigure}[b]{.45\linewidth}
\includegraphics[width=\linewidth]{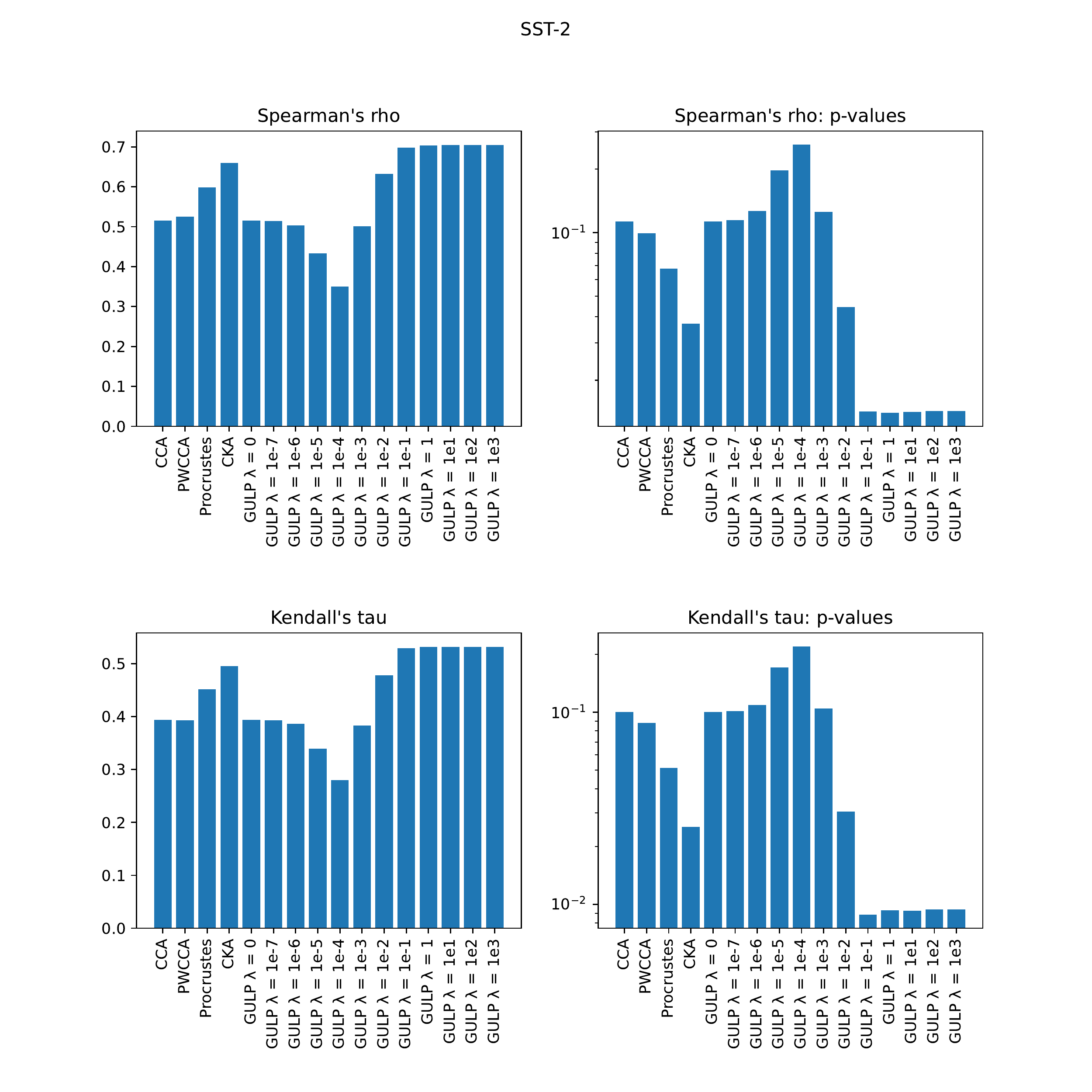}
\end{subfigure}
\caption{Spearman's $\rho$ and Kendall's $\tau$ rank correlations and associated $p$-values for difference of probing accuracy between two representations vs. distance between two representations. Representations are constructed from 12 layers of 10 BERT base models using 3,857 MNLI input train samples. Rank correlations are computed with probing accuracy on the QNLI and SST-2 datasets (left and right).}
\label{fig:ding_layer_exp}
\end{figure}

\begin{figure}
\centering
\includegraphics[width=\linewidth]{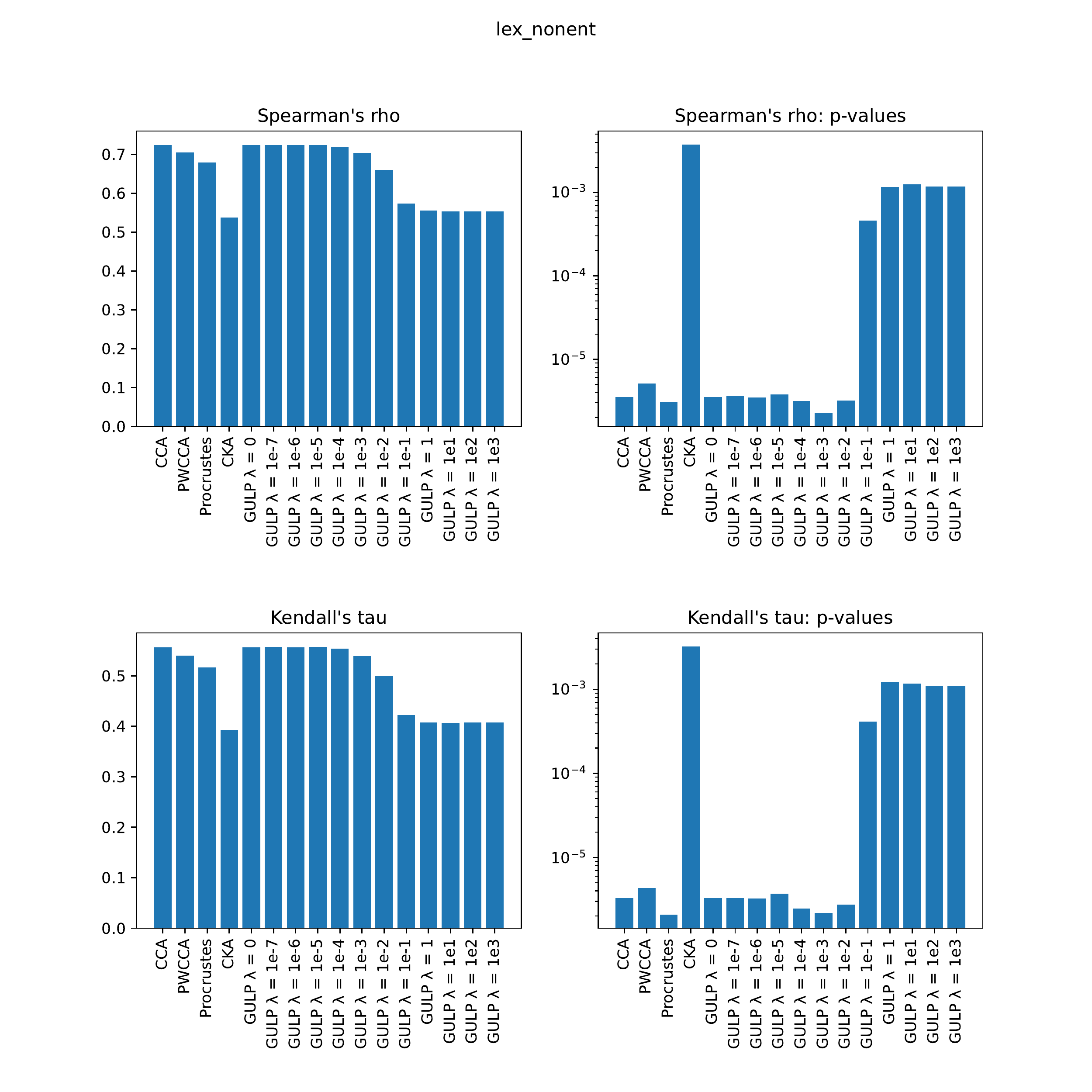}
\caption{Spearman's $\rho$ and Kendall's $\tau$ rank correlations and associated $p$-values for difference of OOD accuracy between two representations vs. distance between two representations. Representations are constructed from 12 layers of 50 BERT base models using 3,857 MNLI input train samples. The BERT base models are finetuned for classification and the OOD accuracy is computed on the “Lexical Heuristic (Non-entailment)'' subset of the HANS dataset.}
\label{fig:ding_feather}
\end{figure}

\begin{figure}
\centering
\begin{subfigure}[b]{.45\linewidth}
\includegraphics[width=\linewidth]{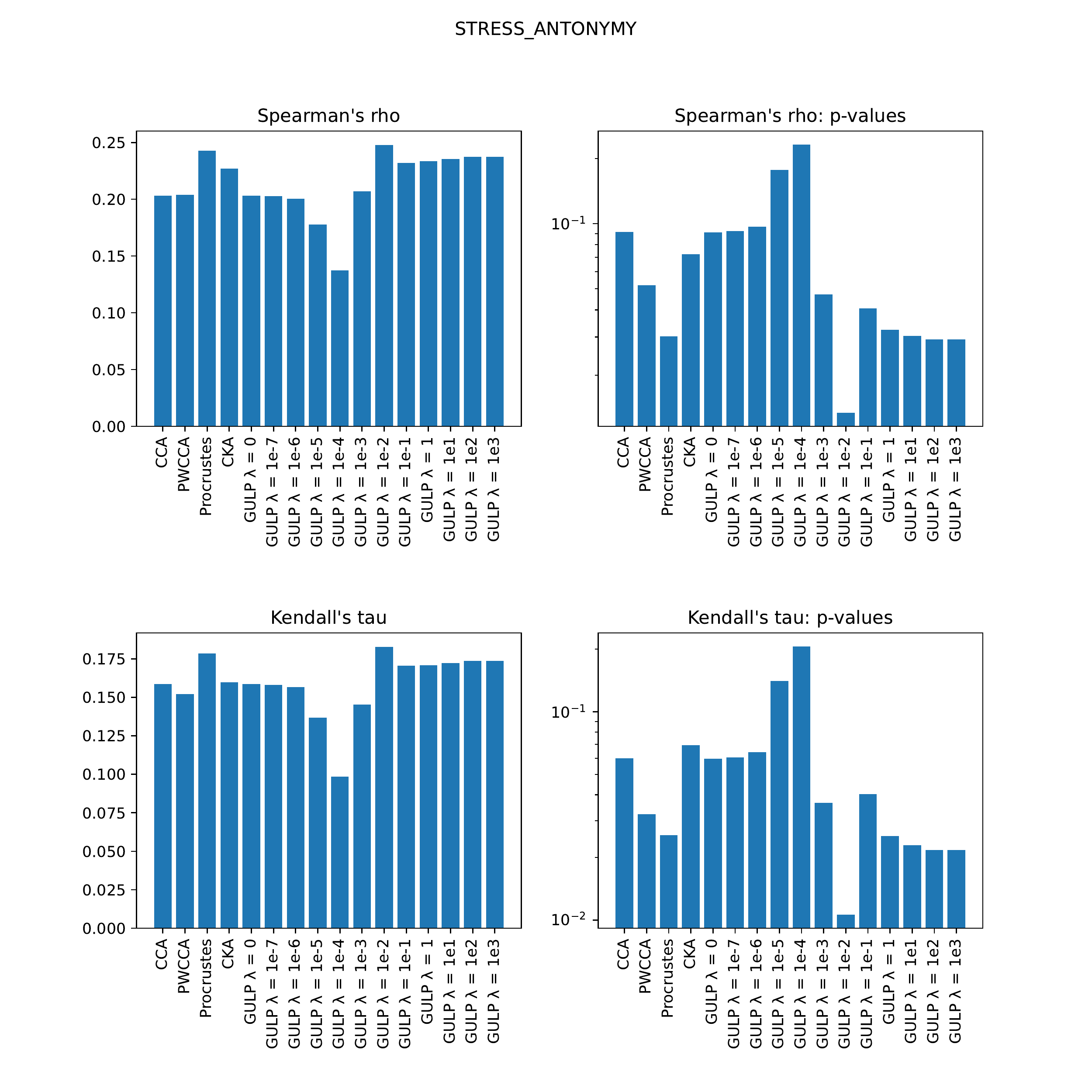}
\end{subfigure}
\begin{subfigure}[b]{.45\linewidth}
\includegraphics[width=\linewidth]{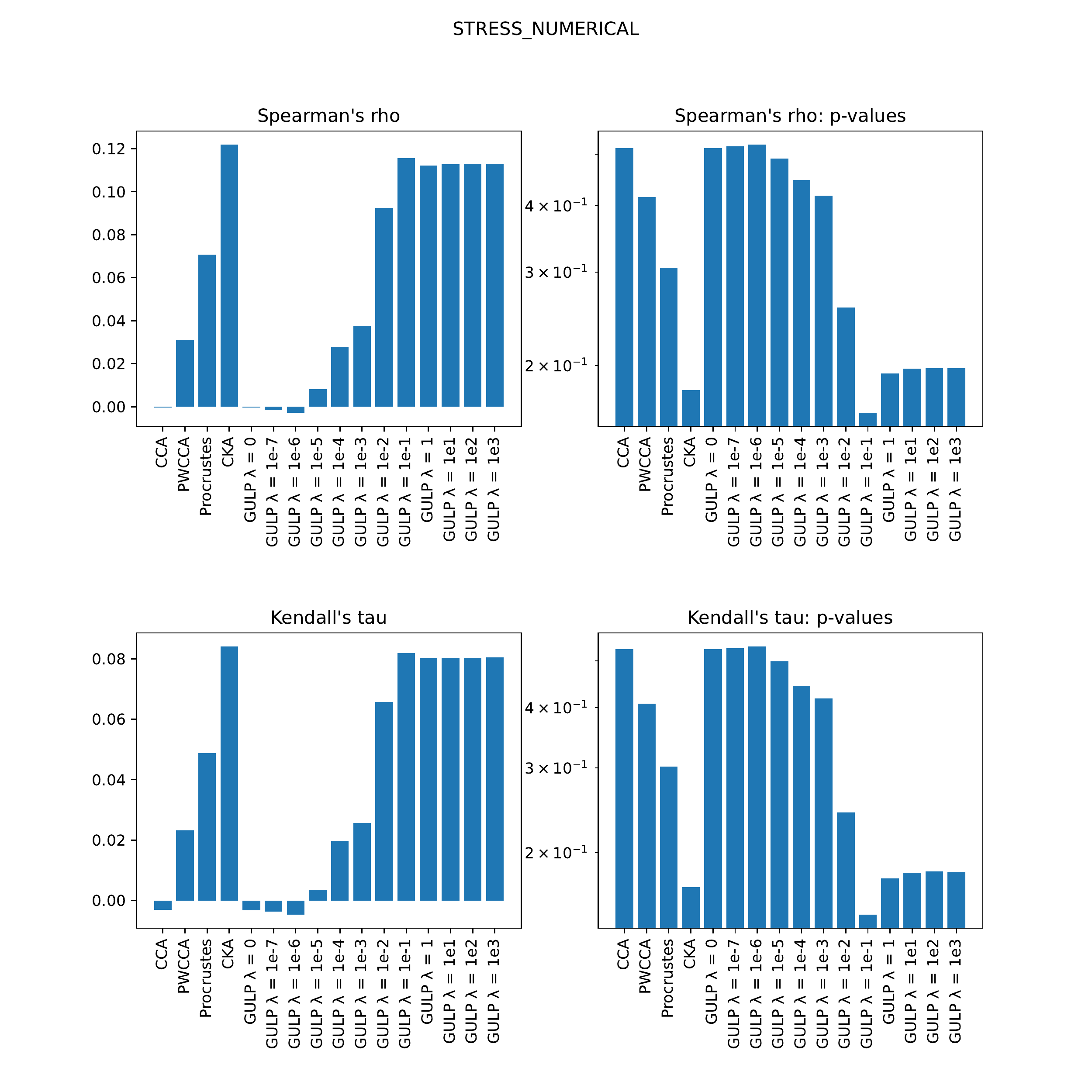}
\end{subfigure}
\caption{Spearman's $\rho$ and Kendall's $\tau$ rank correlations and associated $p$-values for difference of OOD accuracy between two representations vs. distance between two representations. Representations are constructed from 8 layers of 100 BERT medium models using 3,857 MNLI input train samples. The BERT base models are trained from a combination of 10 pretraining and 10 finetuning seeds and the OOD accuracy of each model is measured on the antonymy stress and the numerical stress tests.}
\label{fig:ding_pretrain_finetune}
\end{figure}

\subsection{\gulp{} distances do \emph{not} especially capture generalization on logistic regression}\label{app:logistic}
In this section, we provide Figure~\ref{fig:generalization_log_reg}, which replicates the experiment of Figure \ref{fig:generalization}, but where the downstream transfer learning task is binary logistic regression instead of ridge regression. We assign labels of $0$ and $1$ with equal probability, and compute the resultant test prediction accuracy averaged over $3000$ samples. We find (perhaps unsurprisingly) that \gulp{}, as defined for ridge regression, does not capture downstream generalization better than baselines on logistic regression tasks. This motivates the extension of \gulp{} to logistic regression in future work.

\begin{figure}
\centering
    \includegraphics[width=.5\textwidth]{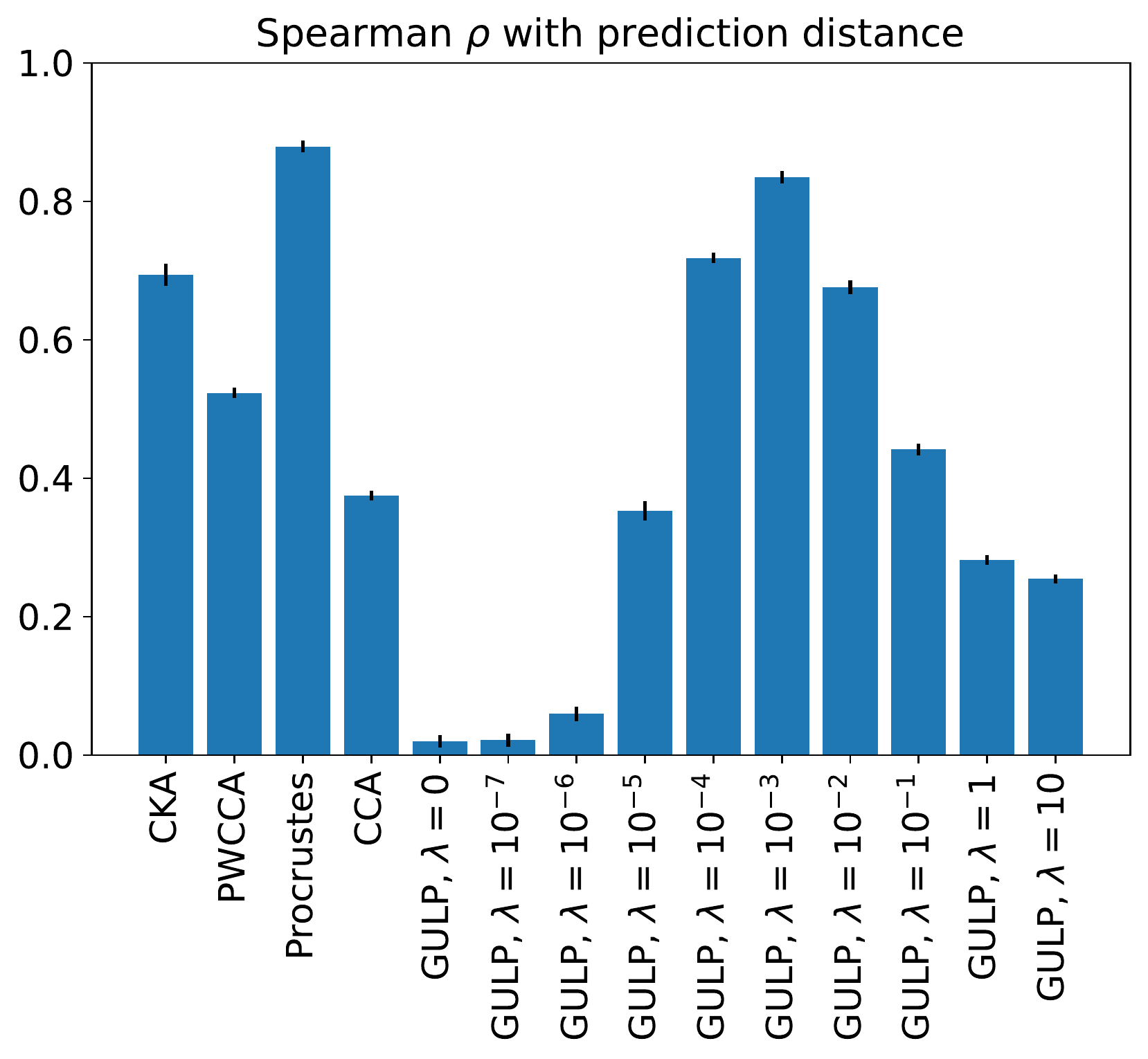}
    \caption{$\gulp$ does not capture generalization of the predictors output by logistic regression. We plot Spearman's $\rho$ between the differences in predictions by $\lambda$-regularized linear regression, and the different distances. Results are averaged over 10 trials.}
    \label{fig:generalization_log_reg}
\end{figure}

\end{document}